\documentclass{article}
\usepackage[preprint]{neurips_2026}

\usepackage[utf8]{inputenc}
\usepackage[T1]{fontenc}
\usepackage{amsmath,amssymb,amsfonts}
\usepackage{amsthm}
\usepackage{booktabs}
\usepackage{graphicx}
\usepackage{hyperref}
\usepackage[ruled,vlined]{algorithm2e}
\usepackage{multirow}
\usepackage{xcolor}
\usepackage{subcaption}
\makeatletter
\providecommand\FloatBarrier{%
  \par
  \begingroup
    \let\@elt\relax
    \xdef\@freelist{\@freelist\@midlist}\global\let\@midlist\@empty
    \@floatplacement\@dblfloatplacement
    \@makefcolumn\@deferlist
    \@whilesw\if@fcolmade\fi{\@outputpage \@makefcolumn\@deferlist}%
  \endgroup}
\makeatother
\newtheorem{proposition}{Proposition}
\newtheorem{corollary}[proposition]{Corollary}

\title{
When Graph Language Models Go Beyond Memorization
}

\author{%
  Masatsugu Yamada \\
  National Institute of Informatics \\
  Tokyo, Japan \\
  \texttt{masatsugu-yamada@nii.ac.jp} \\
  \And
  Mahito Sugiyama \\
  National Institute of Informatics, Tokyo, Japan \\
  SOKENDAI, Kanagawa, Japan \\
  \texttt{mahito@nii.ac.jp} \\
}

\date{}

\begin{document}
\maketitle

\begin{abstract}
It remains unclear whether graph language models learn structural regularities or merely memorize training graphs; this cannot be resolved by current aggregate
fidelity metrics alone.
We develop a calibrated diagnostic protocol that combines frequent subgraph mining,
a graph-level bootstrap baseline, and three-level frequency stratification
to disentangle memorization from structural alignment.
Using this framework, we show that graph language models can acquire structural regularities beyond memorization at scale, primarily in the high-frequency regime. This is supported by the following empirical evidence:
On five TU benchmarks, LLaMA-style graph language models reach high subgraph-rank
correlation, yet their alignment is matched or exceeded by the memorization bootstrap in most cases.
At small scale, under our bootstrap diagnostic, fidelity is largely indistinguishable
from verbatim recall.
In contrast, at large scale with 3.75M graphs, verbatim memorization drops
sharply while rank correlation remains near ceiling.
Crucially, in a separate fixed-subsample analysis,
frequent subgraph mining restricted to the novel-only subset
closely tracks the
corresponding all-generation Spearman correlation,
providing evidence that the alignment is not driven solely by verbatim recall.
%
Across all scales,
high-frequency patterns are well reproduced, while rare patterns remain poorly covered, and this
deficit narrows only marginally as capacity increases.
We observe the same scale-dependent crossover under two distinct graph serializations
(canonical DFS code and action sequences), providing evidence of robustness in our analysis.
\end{abstract}

\section{Introduction}
\label{sec:intro}

Graph generation is a fundamental problem in machine learning with
applications across drug discovery, material design, and social network
analysis~\citep{you2018graphrnn,li2018dgmg,simonovsky2018graphvae,vignac2023digress,jo2022gdss}.
A growing line of work serializes graphs into token sequences and trains
\emph{graph language models}, LLM-style autoregressive
decoders, to generate them, reporting strong fidelity on standard
benchmarks~\citep{goyal2020graphgen, bagal2022molgpt, chen2025flatten}. 
Whether these models fundamentally learn structural
regularities of graphs, or merely \emph{memorize} training data and
emit graphs from a near-empirical distribution, remains an open
question that current aggregate fidelity metrics
(degree/clustering/orbit MMD, GIN-based MMD/FID) cannot settle~\citep{you2018graphrnn, thompson2022evaluation}: as we formalize in Corollary~\ref{cor:non-identifiability}, agreement on any single
bounded subgraph statistic is consistent with both regimes.

In this paper, we address this issue with a \emph{calibrated diagnostic protocol} that focuses on \emph{subgraph} statistics in graph databases. This protocol
combines frequent subgraph mining (gSpan)~\cite{yan2002gspan}, a graph-level bootstrap
baseline that resamples directly from the training corpus, and 
three levels of subgraph frequency stratification, Head, Torso, and Tail, of the support distribution.
The bootstrap supplies the missing reference: a model is taken to go
beyond memorization only when its support statistics depart from the
distribution induced by resampling training graphs, with frequency
stratification then exposing where the gap concentrates.

We apply this protocol to investigate a core hypothesis: 
can graph language models act as \emph{implicit neural graph miners}? 
While classical algorithms such as gSpan explicitly search combinatorial spaces to extract frequent subgraphs, autoregressive modeling of graph serializations may allow LLMs to internalize these 
same structural regularities implicitly without ever performing 
explicit pattern enumeration. Whether such implicit mining is possible, and, if so, to what extent, or instead confounded by verbatim recall of training graphs, 
is the central question that our calibrated protocol is designed to answer. 

The key idea is to disentangle \emph{implicit structural learning} from \emph{memorization-driven alignment} along two axes simultaneously: a \emph{memorization} axis (verbatim recall vs.\ novel generation) and an \emph{alignment} axis (distributional fidelity to training subgraph statistics). 
The bootstrap baseline serves as the critical reference that makes this disentanglement 
operational. Furthermore, crossing these axes with pattern frequency 
stratification allows us to ask \emph{where} across the support 
distribution this implicit mining succeeds or fails---a question 
that aggregate metrics, by construction, cannot pose.

Our contributions are summarized as follows:
\begin{itemize}
  \item \textbf{Calibrated diagnostic protocol.}
    We introduce an evaluation protocol combining frequent subgraph mining with a memorization bootstrap. This protocol overcomes the limitations of aggregate fidelity metrics by disentangling true structural learning from training data memorization across different subgraph frequencies (\S\ref{sec:setup-theory}, \S\ref{sec:method}).

  \item \textbf{Scale-dependent structural learning.}
    We reveal that graph language models undergo a phase transition depending on scale. While models on small datasets operate in a \emph{memorization-dominated} regime, models trained at scale enter a \emph{decoupled} regime, successfully preserving subgraph statistics even for novel, unmemorized graphs (\S\ref{sec:results}).

  \item \textbf{Persistent rare-pattern deficit.}
    Robust across different graph serializations and model capacities, our stratified diagnostic exposes a fundamental limitation: while models capture frequent substructures well, they consistently fail to learn the distribution of rare (Tail) patterns. This highlights a critical gap in current autoregressive graph generation (\S\ref{sec:results}).
\end{itemize}
\section{Related Work}
\label{sec:related}

\paragraph{Graph generation.}
Autoregressive generators produces graphs sequentially:
GraphRNN~\citep{you2018graphrnn} produces adjacency rows via an RNN,
GraphGen~\citep{goyal2020graphgen} operates on minimum DFS codes,
DGMG~\citep{li2018dgmg} uses a node/edge decision process, and recent
transformer-based serialized generators such as
AutoGraph~\citep{chen2025flatten} decode graphs from a flattened token
stream. Diffusion (DiGress~\citep{vignac2023digress},
GDSS~\citep{jo2022gdss}, EDP-GNN~\citep{niu2020edpgnn}), VAE
(GraphVAE~\citep{simonovsky2018graphvae}), and molecular generators
(GCPN~\citep{you2018gcpn}, GraphAF~\citep{shi2020graphaf}) provide non-autoregressive alternatives; 
we use DiGress as our same-corpus diffusion comparator because it is a recent labeled-graph diffusion model with public training code that scales to both TU and PCQM4Mv2 under our pipeline.
GraphGPT~\citep{zhao2024graphgpt} is a self-supervised graph foundation model for representation learning rather than generation. Our work is closest to GraphGen in using DFS codes, but we employ Transformer LLMs and focus on evaluation methodology rather than architecture.

\paragraph{Graph language models.}
We use \emph{graph language models} for LLM-style autoregressive
decoders trained on serialized graphs---the family our diagnostic
targets. Existing instances include
GraphGen~\citep{goyal2020graphgen} (RNN over DFS codes),
MolGPT~\citep{bagal2022molgpt} (SMILES-based molecular generation),
AutoGraph~\citep{chen2025flatten} (Transformer over flattened token
streams), and the GraphGPT line~\citep{zhao2024graphgpt} (graph
foundation models for representation, not generation). Adjacent work
targets reasoning over text-encoded
graphs~\citep{fatemi2024talklike,sanford2024transformer} or molecular
property metrics (validity / novelty / drug-likeness). None of these
lines probes whether the generated set reproduces local
\emph{subgraph} statistics of the training distribution at the pattern
level, nor calibrates such alignment against a memorization
reference. We characterize graph language models along these axes.

\paragraph{Memorization-aware evaluation and subgraph mining.}
Memorization has been extensively analyzed in language
models~\citep{carlini2021extracting,carlini2023quantifying} and image
diffusion~\citep{somepalli2023diffusion}: larger models memorize
more~\citep{carlini2023quantifying}, repeats concentrate
memorization~\citep{carlini2023quantifying}, and training data can be
extracted~\citep{carlini2021extracting}. These lines, however,
contribute \emph{detectors} of verbatim training examples; none
provides a distributional reference against which aggregate fidelity
metrics can be calibrated. The graph-generation literature is even
sparser in this regard, with prior memorization analysis limited to
whole-graph novelty rates~\citep{you2018graphrnn}. We close this gap
on the substructure side, re-purposing gSpan~\citep{yan2002gspan},
which enumerates all frequent connected subgraph patterns via a
canonical DFS code form with a minimum-support threshold, as the
evaluation backbone. We provide the first
label-aware subgraph-statistic evaluation calibrated against a
graph-level memorization bootstrap.

\section{A Distributional View of Sequence-Based Graph Generation}
\label{sec:setup-theory}

We frame sequence-based graph generation through the pushforward of a
graph distribution along a serialization map. This view yields two
elementary propositions that show why high distributional alignment in
the sequence space, and even in subgraph statistics derived from graphs, 
can be confounded with memorization, motivating the calibrated protocol of
\S\ref{sec:method}.

\noindent\textbf{Notation.}
Let $\mathcal{G}$ be a space of (vertex- and edge-labeled) graphs,
$\mathcal{X}$ a space of token sequences, and
$\bar{\mathcal{G}}=\mathcal{G}\cup\{\bot\}$ the graph space augmented
with a sink state for malformed parses. A serialization map
$\phi: \mathcal{G} \to \mathcal{X}$ (canonical DFS code, DGMG action
sequence, etc.) and a measurable recovery map
$\psi: \mathcal{X} \to \bar{\mathcal{G}}$ satisfy $\psi(\phi(G)) = G$.
Let $P_G$ be the true graph distribution. We denote by $P_X = \phi_\# P_G$ 
its pushforward measure along $\phi$; 
intuitively, $P_X$ is the distribution 
of the serialized sequences $\phi(G)$ when $G \sim P_G$ 
(formally, $P_X(A) = P_G(\phi^{-1}(A))$ for any measurable set $A \subseteq \mathcal{X}$). 
Similarly, a generative model defines a sequence distribution $Q_X$, 
which induces a graph distribution $Q_G = \psi_\# Q_X$ on $\bar{\mathcal{G}}$ 
via the recovery map. 
We extend $P_G$ to $\bar{\mathcal{G}}$ by assigning zero mass to $\bot$ 
when comparing distributions across the two spaces.

\begin{proposition}[Recovery is non-expansive in total variation (TV) distance]
\label{prop:tv}
For any sequence distributions $Q_X, P_X$ on $\mathcal{X}$, the following inequality for the total variation distance $d_{\mathrm{TV}}$ holds:
\[
  d_{\mathrm{TV}}(\psi_\# Q_X, \psi_\# P_X)
  \;\le\;
  d_{\mathrm{TV}}(Q_X, P_X).
\]
\end{proposition}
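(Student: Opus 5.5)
The argument rests on the sup-over-events form of total variation, $d_{\mathrm{TV}}(\mu,\nu)=\sup_{B}\lvert \mu(B)-\nu(B)\rvert$, where the supremum runs over measurable sets. This is a data-processing inequality for TV under a deterministic measurable map. Fix a measurable set $B\subseteq\bar{\mathcal{G}}$. By the definition of the pushforward,
\[
  \psi_\# Q_X(B)-\psi_\# P_X(B) \;=\; Q_X\bigl(\psi^{-1}(B)\bigr)-P_X\bigl(\psi^{-1}(B)\bigr).
\]
Because $\psi$ is measurable, $A:=\psi^{-1}(B)$ is a measurable subset of $\mathcal{X}$. The absolute value of the right-hand side is therefore at most $\sup_{A}\lvert Q_X(A)-P_X(A)\rvert=d_{\mathrm{TV}}(Q_X,P_X)$. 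Taking the supremum over $B$ on the left yields the claim.

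The steps are as follows. First, I state the definition of $d_{\mathrm{TV}}$ being used and note that it applies on both $\mathcal{X}$ and the augmented space $\bar{\mathcal{G}}$, with $\{\bot\}$ a measurable singleton. Second, I apply the pushforward identity. Third, I observe that the preimages $\{\psi^{-1}(B)\}$ form a sub-collection of the measurable sets of $\mathcal{X}$, so a supremum over the smaller family cannot exceed the supremum over all measurable sets. I will also remark that if TV is instead defined as $\tfrac12\sum_x\lvert Q_X(x)-P_X(x)\rvert$ on the countable sequence space, the same bound follows from the triangle inequality: the mass of each graph $g$ is a sum over the fiber $\psi^{-1}(g)$, and $\lvert\sum(q-p)\rvert\le\sum\lvert q-p\rvert$.

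There is no real obstacle; the only point needing care is bookkeeping. The sink $\bot$ must be treated as an ordinary point of $\bar{\mathcal{G}}$, so the inequality holds on $\bar{\mathcal{G}}$ with no special treatment of malformed parses. I will also note that $\psi_\# P_X=P_G$ (extended by zero at $\bot$), since $\psi\circ\phi=\mathrm{id}$. This gives the corollary $d_{\mathrm{TV}}(Q_G,P_G)\le d_{\mathrm{TV}}(Q_X,P_X)$, which is the form used later.
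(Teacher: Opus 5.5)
Your proof is correct and follows essentially the same route as the paper's: the sup-over-events form of total variation, the pushforward identity $\psi_\# Q_X(B)=Q_X(\psi^{-1}(B))$, and the fact that preimages under the measurable map $\psi$ form a sub-family of the measurable sets of $\mathcal{X}$. Your two additional remarks are both valid but not needed for the statement itself: the fiber-sum version for countable $\mathcal{X}$, and the identity $\psi_\# P_X=P_G$ obtained from $\psi\circ\phi=\mathrm{id}$.
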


\begin{proposition}[Bounded graph statistics inherit closeness]
\label{prop:stat}
For any bounded function $f: \bar{\mathcal{G}} \to \mathbb{R}$,
\[
  \big|\mathbb{E}_{Q_G} f - \mathbb{E}_{P_G} f\big|
  \;\le\;
  2\|f\|_\infty\, d_{\mathrm{TV}}(Q_G, P_G)
  \;\le\;
  2\|f\|_\infty\, d_{\mathrm{TV}}(Q_X, P_X).
\]
\end{proposition}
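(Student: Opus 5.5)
The plan is to prove the two inequalities in Proposition~\ref{prop:stat} separately. The first is the standard variational characterization of total variation. The second will follow from Proposition~\ref{prop:tv} once we check that $P_G$, extended to $\bar{\mathcal{G}}$ with zero mass on $\bot$, coincides with $\psi_\# P_X$.

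For the first inequality, we use $d_{\mathrm{TV}}(\mu,\nu)=\sup_{A}|\mu(A)-\nu(A)|$ for probability measures $\mu,\nu$ on $\bar{\mathcal{G}}$. Let $\lambda=\mu+\nu$ and write $p=d\mu/d\lambda$, $q=d\nu/d\lambda$, which exist by Radon--Nikodym. Then $d_{\mathrm{TV}}(\mu,\nu)=\tfrac12\int|p-q|\,d\lambda$; the supremum is attained on $A=\{p>q\}$, and $\int(p-q)\,d\lambda=0$. For bounded measurable $f$ this gives
\[
\Big|\int f\,d\mu-\int f\,d\nu\Big| = \Big|\int f(p-q)\,d\lambda\Big| \le \|f\|_\infty \int |p-q|\,d\lambda = 2\|f\|_\infty\, d_{\mathrm{TV}}(\mu,\nu).
\]
Taking $\mu=Q_G$ and $\nu=P_G$ gives the first inequality. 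We need $f$ to be measurable so that the expectations are defined; this is implicit in the statement. The bound $\|f\|_\infty$ could be sharpened to half the oscillation of $f$ by centering, but this is not needed.

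For the second inequality, we identify $P_G=\psi_\# P_X$ on $\bar{\mathcal{G}}$. By definition $\psi_\# P_X = \psi_\#\phi_\# P_G = (\psi\circ\phi)_\# P_G$. Since $\psi\circ\phi=\mathrm{id}_{\mathcal{G}}$, this is $P_G$ viewed as a measure on $\bar{\mathcal{G}}$, and it puts no mass on $\bot$ because $\psi\circ\phi$ never outputs $\bot$. This matches the stated extension convention.

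This identification requires $\phi$ to be measurable so that the pushforward makes sense. It also requires $\mathcal{G}\subseteq\bar{\mathcal{G}}$ to be measurable with the inherited $\sigma$-algebra. I would state both as standing assumptions; for countable discrete graph spaces they are trivial. With $Q_G=\psi_\# Q_X$ by definition, Proposition~\ref{prop:tv} gives $d_{\mathrm{TV}}(Q_G,P_G)=d_{\mathrm{TV}}(\psi_\# Q_X,\psi_\# P_X)\le d_{\mathrm{TV}}(Q_X,P_X)$. Multiplying by $2\|f\|_\infty$ completes the chain.

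The main obstacle is not analytic but bookkeeping: ensuring $P_G$ genuinely equals $\psi_\# P_X$ as measures on $\bar{\mathcal{G}}$, including the treatment of $\bot$ and the measurability needed for pushforwards.

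If one wants Proposition~\ref{prop:tv} self-contained as well, it follows directly from the supremum definition of total variation. For any measurable $B\subseteq\bar{\mathcal{G}}$, $|\psi_\# Q_X(B)-\psi_\# P_X(B)| = |Q_X(\psi^{-1}B)-P_X(\psi^{-1}B)| \le d_{\mathrm{TV}}(Q_X,P_X)$, because $\psi^{-1}B$ is measurable. Taking the supremum over $B$ gives the claim.
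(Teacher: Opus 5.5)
Your proof is correct and takes essentially the paper's route. The density bound $\bigl|\int f(p-q)\,d\lambda\bigr| \le \|f\|_\infty \int |p-q|\,d\lambda$ is the paper's estimate through the total variation measure $|Q_G-P_G|(\bar{\mathcal{G}})$, the second inequality comes from Proposition~\ref{prop:tv}, and your explicit check that $\psi_\# P_X = (\psi\circ\phi)_\# P_G = P_G$ (with zero mass on $\bot$) fills in a step the paper leaves implicit when it just says ``combining with Proposition~\ref{prop:tv}.''
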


Choosing $f$ as the indicator (or bounded support) of a fixed subgraph
$H$, extended by $f(\bot)=0$, shows that closeness in the sequence
distribution implies closeness in subgraph statistics. Both
propositions follow from the data-processing inequality and the dual
characterization of total variation; short proofs are in
Appendix~\ref{app:proofs}.

\begin{corollary}[Non-identifiability of bounded subgraph statistics]
\label{cor:non-identifiability}
Let $\hat P_G^{\mathrm{boot}}$ denote the empirical (graph-level
bootstrap) distribution on the training corpus
$\{G_1, \dots, G_n\} \subseteq \mathcal{G}$. For any bounded function
$f:\bar{\mathcal{G}} \to \mathbb{R}$ and any $\varepsilon > 0$, there
exist sequence-level model distributions $Q_X^{(\mathrm{mem})}$ and
$Q_X^{(\mathrm{gen})}$ such that
$\bigl|\mathbb{E}_{Q_G^{(\mathrm{mem})}} f - \mathbb{E}_{Q_G^{(\mathrm{gen})}} f\bigr| < \varepsilon$,
where $Q_G^{(\mathrm{mem})}$ concentrates on $\{G_1, \dots, G_n\}$
(verbatim recall) and $Q_G^{(\mathrm{gen})}$ places positive mass
outside the training corpus.
\end{corollary}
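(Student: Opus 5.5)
The plan is to construct the two model distributions explicitly and invoke Proposition~\ref{prop:stat}. Set $Q_X^{(\mathrm{mem})} = \phi_\# \hat P_G^{\mathrm{boot}}$, the pushforward of the empirical training distribution along the serialization map. Since $\psi\circ\phi = \mathrm{id}_{\mathcal{G}}$, the recovered distribution is $Q_G^{(\mathrm{mem})} = \psi_\#\phi_\#\hat P_G^{\mathrm{boot}} = \hat P_G^{\mathrm{boot}}$, which is supported on $\{G_1,\dots,G_n\}$. This is exactly verbatim recall.

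For the generalizing model, fix any graph $G^\ast \in \mathcal{G}\setminus\{G_1,\dots,G_n\}$. We may assume $\mathcal{G}$ is not exhausted by the finite training corpus; this is implicit in the statement, since otherwise no distribution could put mass outside the corpus. Define the mixture
\[
  Q_X^{(\mathrm{gen})} = (1-\delta)\,\phi_\#\hat P_G^{\mathrm{boot}} + \delta\,\delta_{\phi(G^\ast)},
\]
with $\delta\in(0,1]$ to be chosen. Pushforward commutes with mixtures, so $Q_G^{(\mathrm{gen})} = (1-\delta)\hat P_G^{\mathrm{boot}} + \delta\,\delta_{G^\ast}$, which places mass $\delta>0$ on the novel graph $G^\ast$.

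Next I bound the gap. The two sequence distributions differ only in the $\delta$-mixture component, so $d_{\mathrm{TV}}(Q_X^{(\mathrm{mem})},Q_X^{(\mathrm{gen})}) \le \delta$. Proposition~\ref{prop:stat} is stated with $P_G$ as the second argument, but its proof uses only the dual characterization of TV and data processing. It therefore applies verbatim to any pair of sequence distributions and their recoveries, and gives
\[
  \bigl|\mathbb{E}_{Q_G^{(\mathrm{mem})}} f - \mathbb{E}_{Q_G^{(\mathrm{gen})}} f\bigr| \le 2\|f\|_\infty\,\delta.
\]
Alternatively, one can compute directly that the gap equals $\delta\,\bigl|f(G^\ast)-\mathbb{E}_{\hat P_G^{\mathrm{boot}}} f\bigr| \le 2\|f\|_\infty\delta$. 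If $\|f\|_\infty=0$ the claim is trivial. Otherwise, choosing $\delta < \varepsilon/(2\|f\|_\infty)$, with $\delta\le 1$, yields the strict inequality $<\varepsilon$.

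The main obstacle is not analytic but a matter of care with the hypotheses. First, the existence of a novel graph $G^\ast$ must be justified or assumed. Second, Proposition~\ref{prop:stat} must be applied in its general two-distribution form rather than against $P_G$; the direct computation of the gap sidesteps this if needed. I would also remark that the construction is independent of $f$ only through the choice of $\delta$. This underlines the corollary's point: no single bounded statistic separates the two regimes, which motivates calibrating against the bootstrap.
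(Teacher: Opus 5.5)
Your proof is correct. It shares the paper's choice $Q_X^{(\mathrm{mem})}=\phi_\#\hat P_G^{\mathrm{boot}}$ and its reliance on Proposition~\ref{prop:stat}, but it anchors the generalizing model at a different reference.

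The paper takes any $Q_X^{(\mathrm{gen})}$ with $d_{\mathrm{TV}}(Q_X^{(\mathrm{gen})},P_X)<\varepsilon/(2\|f\|_\infty)$, i.e.\ a model close to the \emph{population} pushforward $P_X=\phi_\#P_G$. Read literally, Proposition~\ref{prop:stat} then controls $|\mathbb{E}_{Q_G^{(\mathrm{gen})}}f-\mathbb{E}_{P_G}f|$. To reach the stated bound against $Q_G^{(\mathrm{mem})}=\hat P_G^{\mathrm{boot}}$, one additionally needs $\mathbb{E}_{\hat P_G^{\mathrm{boot}}}f\approx\mathbb{E}_{P_G}f$. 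That is a sampling statement, not controlled by $\varepsilon$ for a fixed corpus. There is also a smaller mismatch: ``positive mass outside $\{\phi(G_i)\}$'' on the sequence side does not by itself ensure mass on novel graphs after decoding, since off-corpus strings may decode to training graphs or to $\bot$.

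You instead perturb the memorizer itself. Your $\delta$-mixture is within TV distance $\delta$ of $Q_X^{(\mathrm{mem})}$. Proposition~\ref{prop:stat} is then applied with $\hat P_G^{\mathrm{boot}}$ in the role of $P_G$, which is legitimate because $\psi_\#\phi_\#\hat P_G^{\mathrm{boot}}=\hat P_G^{\mathrm{boot}}$. The identity $\psi\circ\phi=\mathrm{id}$ guarantees that the extra mass decodes to the novel graph $G^\ast$. Your direct computation of the gap as $\delta\,|f(G^\ast)-\mathbb{E}_{\hat P_G^{\mathrm{boot}}}f|$ makes the argument fully elementary.

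What each route buys:
\begin{itemize}
\item Your route is a self-contained, deterministic proof of exactly the stated claim, with an explicit witness, for every finite corpus and every bounded $f$. This includes $\|f\|_\infty=0$, where the paper's threshold is undefined.
\item The paper's route aims at the conceptually stronger message that a model close to the \emph{true} distribution is indistinguishable on $f$ from a pure memorizer. That version holds only up to the extra error $|\mathbb{E}_{\hat P_G^{\mathrm{boot}}}f-\mathbb{E}_{P_G}f|$, which is small only with high probability as $n$ grows.
\item Your witness is a generalizer only in the weakest sense: the memorizer plus a sliver of novel mass. That suffices for non-identifiability, but it illustrates the point less vividly.
\end{itemize}
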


Corollary~\ref{cor:non-identifiability} is the operational reading of
Propositions~\ref{prop:tv} and~\ref{prop:stat}: agreement on any single
bounded subgraph statistic, including support-based ranking and
divergence summaries, cannot separate verbatim recall from
distribution-level reproduction without a memorization reference. A
generator should be considered to go \emph{beyond} memorization only
when its statistic lies outside the bootstrap reference distribution
sampled from $\hat P_G^{\mathrm{boot}}$. This pins down the central
methodological point of the paper: \emph{aggregate distributional
metrics must be calibrated against memorization to be interpretable},
and motivates the two empirical axes, memorization rate and
distributional alignment, that together place every generator on the
\emph{memorization--alignment spectrum} we adopt throughout the paper.

\noindent\textbf{Bootstrap baseline as graph-level resampling.}
Given $n$ training graphs $G_1, \dots, G_n$, the graph-level bootstrap
estimator is
$\hat P_G^{\mathrm{boot}}(G) = \frac{1}{n}\sum_{i=1}^{n} \mathbf{1}[G = G_i]$,
the empirical distribution. It places mass only on training graphs:
$\hat P_G^{\mathrm{boot}}(G) = 0$ for any
$G \notin \{G_1, \dots, G_n\}$. By contrast, an autoregressive model
factorizes $Q_G(G) = Q_X(\phi(G)) = \prod_{t} Q(x_t \mid x_{<t})$,
which can place positive mass on unseen graphs. The gap between
\emph{memorization-dominated} regimes ($Q_G \approx
\hat P_G^{\mathrm{boot}}$) and \emph{distributional-generalization}
regimes ($Q_G$ deviates from $\hat P_G^{\mathrm{boot}}$ in a
structurally meaningful direction) is what our calibrated protocol
aims to expose.

\section{Calibrated Evaluation Framework}
\label{sec:method}

We operationalize Corollary~\ref{cor:non-identifiability}
as a concrete evaluation framework in this section. We first specify the graph
serializations and training setup (\S\ref{sec:method-serial}), then
define the subgraph statistics and whole-graph exact-match rate that
serve as the basic observables of our diagnostic
(\S\ref{sec:method-gspan}). On top of these quantities, we introduce two primary diagnostics: a graph-level bootstrap calibration that  provides an empirical memorization reference (\S\ref{sec:method-bootstrap}), 
and a frequency-stratified analysis that identifies where alignment breaks down across the support distribution (\S\ref{sec:method-strata}). 
Together with the sequence-space hit/miss analysis reported in Appendix~\ref{app:nll},
these diagnostics place each model on the memorization--alignment
plane introduced in \S\ref{sec:setup-theory}, rather than collapsing
its behavior to a single aggregate fidelity score.

\subsection{Graph Serialization and Training}
\label{sec:method-serial}

We consider two well-established methods for serialization $\phi : \mathcal{G} \to \mathcal{X}$.
\textbf{Canonical DFS code}, developed to efficiently enumerate subgraphs in gSpan~\citep{yan2002gspan}, runs a depth-first traversal of
$G = (V, E, \ell_V, \ell_E)$ to emit edge tuples
$t = \langle i, j, \ell_i, \ell_j, \ell_{ij}\rangle$ ($i < j$
forward, $i > j$ backward), totally orders tuple sequences
lexicographically, and selects the minimum---the canonical DFS code
$C_{\min}(G)$, unique up to graph isomorphism---which we
serialize tuple-by-tuple with one token per tuple
(rightmost-path construction in Algorithm~\ref{alg:canonical-dfs},
Appendix~\ref{app:algorithms}). \textbf{DGMG action
sequences}~\citep{li2018dgmg} encode $G$ as a sequential decision process
(\texttt{ADD\_NODE}, \texttt{ADD\_EDGE}, \texttt{DEST}, \texttt{END\_EDGE},
\texttt{END\_NODE}); we canonicalize by adding nodes in sorted-label
order so isomorphic graphs share the same action string.
We train decoder-only LLaMA causal language models~\citep{touvron2023llama} 
from scratch on the serialized sequences with the standard language 
modeling objective, minimizing the per-token negative log-likelihood (NLL).
TU-benchmark training uses a fixed 80M-token budget; the PCQM4Mv2 schedule is in
\S\ref{sec:setup}.

\subsection{Subgraph Statistics via Frequent Pattern Mining}
\label{sec:method-gspan}

Let $\mathbf{D}_{\mathrm{tr}}$ and $\mathbf{D}_{\mathrm{gen}}$ denote
the training and generated graph databases (finite samples from $P_G$
and $Q_G$, respectively, in the notation of
\S\ref{sec:setup-theory}). We write $g \sqsubseteq G$ when $g$ is a
(vertex- and edge-labeled) subgraph of $G$, and define the
\emph{support} of a pattern $g$ in a database $\mathbf{D}$ as
\[
  \mathrm{supp}_{\mathbf{D}}(g) = \frac{1}{|\mathbf{D}|}
    \sum_{G \in \mathbf{D}} \mathbf{1}\bigl[g \sqsubseteq G\bigr].
\]
gSpan~\citep{yan2002gspan} with minimum-support ratio $\sigma = 0.1$ enumerates all subgraphs whose supports are larger than $\sigma$, yielding the
frequent-pattern sets $\mathcal{P}_{\mathrm{tr}}$ and
$\mathcal{P}_{\mathrm{gen}}$. We use their union
$\mathcal{U} = \mathcal{P}_{\mathrm{tr}} \cup \mathcal{P}_{\mathrm{gen}}$ and intersection
$\mathcal{C} = \mathcal{P}_{\mathrm{tr}} \cap \mathcal{P}_{\mathrm{gen}}$,
and define support and normalized pattern probability for each
$g \in \mathcal{U}$ as
\[
  s_{\mathrm{tr}}(g) = \mathrm{supp}_{\mathbf{D}_{\mathrm{tr}}}(g),
  \quad
  p_{\mathrm{tr}}(g) = \frac{s_{\mathrm{tr}}(g)}
    {\sum_{h \in \mathcal{U}} s_{\mathrm{tr}}(h)},
\]
and analogously $s_{\mathrm{gen}}, p_{\mathrm{gen}}$ on
$\mathbf{D}_{\mathrm{gen}}$.

\noindent\textbf{Distributional metrics.}
We quantify the alignment of $\mathcal{P}_{\mathrm{gen}}$ to $\mathcal{P}_{\mathrm{tr}}$ using four scalar summaries:
\emph{Spearman's $\rho$} is the rank correlation between
$s_{\mathrm{tr}}$ and $s_{\mathrm{gen}}$ restricted to the
intersection $\mathcal{C}$, capturing preservation of the relative
frequency ordering. \emph{Jensen--Shannon divergence} (JSD) is
$\mathrm{JSD}(p_{\mathrm{tr}}, p_{\mathrm{gen}}) =
\mathrm{KL}(p_{\mathrm{tr}} \| m)/2 +
\mathrm{KL}(p_{\mathrm{gen}} \| m)/2$ with mid-point
$m = (p_{\mathrm{tr}} + p_{\mathrm{gen}})/2$. The training-side
\emph{missing mass}
$\mathrm{MM} = \sum_{g \in \mathcal{P}_{\mathrm{tr}}\setminus
  \mathcal{P}_{\mathrm{gen}}} p_{\mathrm{tr}}(g)$
records the probability mass on patterns absent from the generated
set; the generated-side \emph{novel mass}
$\mathrm{NM} = \sum_{g \in \mathcal{P}_{\mathrm{gen}}\setminus
  \mathcal{P}_{\mathrm{tr}}} p_{\mathrm{gen}}(g)$
is its complement on patterns absent from training.

\noindent\textbf{Whole-graph memorization.}
At the whole-graph level we measure verbatim recall through canonical-DFS-code
equality: writing $\mathrm{code}(\mathbf{D}) = \{\mathrm{code}(G) :
G \in \mathbf{D}\}$,
\begin{equation}
\label{eq:em}
  \mathrm{EM} = \frac{|\{G \in \mathbf{D}_{\mathrm{gen}} :
    \mathrm{code}(G) \in \mathrm{code}(\mathbf{D}_{\mathrm{tr}})\}|}
    {|\mathbf{D}_{\mathrm{gen}}|},
  \qquad
  \mathrm{Novelty} = 1 - \mathrm{EM}.
\end{equation}
The exact-match rate $\mathrm{EM}$, reported as ``precision'' in
result tables for backwards compatibility, is the empirical
counterpart of the bootstrap estimator $\hat P_G^{\mathrm{boot}}$ of
\S\ref{sec:setup-theory}: when the model collapses onto memorized
training graphs, $Q_G \approx \hat P_G^{\mathrm{boot}}$ and
$\mathrm{EM} \to 1$.

\noindent\textbf{From Propositions to Protocol.}
We introduce the two diagnostics in the following subsections, which are not independent benchmarks but
operational projections of
Corollary~\ref{cor:non-identifiability}. The graph-level bootstrap
(\S\ref{sec:method-bootstrap}) realizes the memorization reference
$\hat P_G^{\mathrm{boot}}$ empirically, so that comparing a model's
$\rho$ or $\mathrm{JSD}$ against the bootstrap distribution tests the
exact ambiguity the Corollary forbids one to ignore. Frequency
stratification (\S\ref{sec:method-strata}) sharpens
Proposition~\ref{prop:stat} by restricting the test function $f$ to a
support sub-band, exposing where aggregate $\rho$ hides a structural
deficit. As a complementary sequence-space check, per-sequence NLL hit/miss (Appendix~\ref{app:nll}) 
is the sequence-space counterpart of Proposition~\ref{prop:tv}: 
it asks whether the TV gap concentrates on the precise sequences the model later reproduces. 
Reading the three diagnostics jointly returns a verdict on the two-dimensional
memorization--alignment plane rather than another scalar fidelity
number.

\subsection{Bootstrap Calibration}
\label{sec:method-bootstrap}

A high Spearman $\rho$ between training and generated sets may simply
reflect the fact that most generated graphs are drawn from the
training distribution via memorization. To calibrate this, we
construct a bootstrap baseline that samples directly from the
graph-level estimator $\hat P_G^{\mathrm{boot}}$ of
\S\ref{sec:setup-theory} as an empirical memorization reference: it
estimates the distributional alignment expected when generated graphs
are drawn directly from the training corpus.
Concretely, for $R = 10$ repeats we draw
$|\mathbf{D}_{\mathrm{gen}}|$ training graphs with replacement, run
gSpan on each resample, and compute $\rho^{(r)}, \mathrm{JSD}^{(r)},
\mathrm{MM}^{(r)}$ against $\mathcal{P}_{\mathrm{tr}}$ (full pseudo code
in Algorithm~\ref{alg:bootstrap-calibration},
Appendix~\ref{app:algorithms}). We report the model's percentile rank
within the resulting bootstrap distribution and the calibrated
$z$-score
$z = (\rho_{\mathrm{model}} - \mu_{\mathrm{boot}}) / \sigma_{\mathrm{boot}}$.

\subsection{Frequency-Stratified Analysis}
\label{sec:method-strata}

Aggregate metrics over $\mathcal{U}$ obscure how alignment varies
with pattern frequency, which is exactly the regime where
memorization-driven and structurally-learned models diverge most. We
therefore sort the training patterns $\mathcal{P}_{\mathrm{tr}}$ by
support $s_{\mathrm{tr}}$ and partition them into a \emph{Head}
(top $10\%$, high-frequency motifs that dominate the distribution),
a \emph{Torso} (middle $80\%$, moderately frequent patterns), and a
\emph{Tail} (bottom $10\%$, rare patterns near the support
threshold). Within each stratum we recompute Spearman~$\rho$,
$\mathrm{JSD}$, and $\mathrm{MM}$ on the corresponding restriction
of $\mathcal{U}$, so that any frequency-dependent generalization
gap---typically a Head/Torso--Tail split---becomes directly
visible rather than averaged out.

As a complementary per-sequence check, we compute average per-token
NLL under the model for every unique training sequence and compare
\emph{hits} (reproduced in generation) against \emph{misses} (not
reproduced) via the Mann--Whitney $U$ test (full results in
Appendix~\ref{app:nll}).

\section{Experimental Setup}
\label{sec:setup}

We evaluate on five TU benchmarks~\citep{morris2020tudataset} and the
PCQM4Mv2 molecular corpus from OGB-LSC~\citep{hu2021ogblsc}
(per-dataset graph statistics in
Appendix~\ref{app:serialization}, Table~\ref{tab:datasets});
PCQM4Mv2 contributes $3{,}746{,}620$
graphs with a canonical-DFS training split of $3{,}371{,}958$
sequences ($3{,}104{,}677$ unique after deduplication). Models are
LLaMA~\citep{touvron2023llama} \textsc{small} (132M parameters; 12 layers, hidden 768, 12 heads, FFN 3072) trained from scratch with causal LM, TRL SFTTrainer
+ DeepSpeed ZeRO-1~\citep{rajbhandari2020zero}, AdamW~\citep{loshchilov2019decoupled} ($\text{lr}=10^{-5}$), cosine schedule,
bf16, an 80M-token budget, and 20 evenly spaced checkpoints; for each
TU dataset we train one DFS-canonical and one DGMG model. PCQM4Mv2
additionally sweeps \textsc{tiny}/\textsc{medium}/\textsc{large}
variants (10.5M--451M parameters; PCQM4Mv2 scaling results below). For each
model we sample $1024$ sequences under six decoding configurations
(default \texttt{both\_default}: $T\!=\!1.0$, top-$p\!=\!0.95$,
top-$k\!=\!50$; full sweep in Appendix~\ref{app:decode-sweep}),
parse them into graphs (regex for DFS, action parsing for DGMG), and
run the evaluation pipeline of \S\ref{sec:method}: gSpan
($\sigma=0.1$), distributional metrics, bootstrap baselines, subgraph pattern strata,
NLL hit/miss, and Weisfeiler--Lehman (WL) kernel MMD~\citep{shervashidze2011weisfeiler}. 
Serialized-sequence length and vocabulary statistics are in
Appendix~\ref{app:serialization}~(Table~\ref{tab:seq-stats-app}).

\section{Results}
\label{sec:results}

\noindent\textbf{Diagnostic expectations.}
The framework of \S\ref{sec:method} commits to four concrete
expectations; the TU/PCQM regimes below are read against them rather
than reverse-engineered from the tables.

\noindent\textbf{(E1)} Under Corollary~\ref{cor:non-identifiability},
a memorization-dominated regime places $\rho$ inside the graph-level
bootstrap reference, making subgraph-level aggregate fidelity
uninformative (calibrated baseline comparison below).

\noindent\textbf{(E2)} A regime that goes beyond memorization retains
$\rho$ when gSpan is restricted to the novel-only subset
$\mathbf{D}_{\mathrm{gen}}\setminus \mathbf{D}_{\mathrm{tr}}$, so the
subgraph-level alignment is not carried by whole-graph recall alone
(PCQM4Mv2 scaling below).

\noindent\textbf{(E3)} Frequency stratification exposes, rather than
averages over, support deficits hidden by aggregate $\rho$
(frequency-stratified analysis below).

\noindent\textbf{(E4)} If the capacity is structural, the same
crossover is reproduced under a distinct serialization rather than
tied to a specific tokenization (PCQM4Mv2 scaling below;
Appendix~\ref{app:dgmg-strata}).

\noindent\textbf{Overall distribution alignment.}
Table~\ref{tab:main-results} presents the primary generation metrics
under \texttt{both\_default} decoding for all datasets in canonical
DFS code format.

\begin{table}[t]
\centering
\caption{Generation metrics under \texttt{both\_default} decoding
  (DFS canonical, single training seed = 42; ENZYMES values are means
  over $5$ decoding seeds under the no-cap gBolt rerun, see
  Limitations~(g)).
  Whole-graph: Unique,
  Precision (= EM, Eq.~\eqref{eq:em}), Novelty.
  Subgraph-level: $|\mathcal{P}_{\mathrm{tr}}|$ and
  $|\mathcal{P}_{\mathrm{gen}}|$ are gSpan subgraph pattern set sizes,
  $\rho$ Spearman rank correlation, JSD Jensen--Shannon divergence,
  Missing the training-side missing mass.}
\label{tab:main-results}
\small
\setlength{\tabcolsep}{4pt}
\begin{tabular}{lcccrrccc}
\toprule
Dataset & Unique & Precision & Novelty & $|\mathcal{P}_{\mathrm{tr}}|$ & $|\mathcal{P}_{\mathrm{gen}}|$ & $\rho$ & JSD & Missing \\
\midrule
MUTAG    & 0.199 & 0.828 & 0.172 & 634       & 714       & 0.976 & 0.035 & 0.014 \\
PTC\_MR  & 0.285 & 0.983 & 0.017 & 127       & 148       & 0.950 & 0.063 & 0.007 \\
ENZYMES  & 0.418 & 1.000 & 0.000 & 95{,}419  & 101{,}710 & 0.987 & 0.014 & 0.007 \\
PROTEINS & 0.591 & 0.891 & 0.109 & 70{,}257  & 75{,}859  & 0.988 & 0.022 & 0.005 \\
NCI1     & 0.862 & 1.000 & 0.000 & 818       & 884       & 0.990 & 0.010 & 0.006 \\
\bottomrule
\end{tabular}
\end{table}

Across all datasets, Spearman correlations are remarkably high
($\rho = 0.950$--$0.990$), indicating that the relative ordering of
frequent subgraph patterns is well-preserved in the generated sets.
Missing mass is uniformly low ($<$1.5\%), suggesting that most training patterns are reproduced.
However, two observations complicate this optimistic subgraph-level reading.
First, at the \emph{whole-graph} level, verbatim memorization rates are high: 82.8\%--100\% of
generated sequences are exact matches of training graphs 
(precision in Table~\ref{tab:main-results}, $\mathrm{EM}$ of Eq.~\eqref{eq:em}).
NCI1 achieves $\rho = 0.990$ with 100\% precision (zero novelty),
meaning every generated graph was already present in training corpus.
Second, generated-set \emph{whole-graph} uniqueness rates inversely correlate with \emph{whole-graph} memorization:
MUTAG generates only 19.9\% unique sequences (heavy duplication),
while NCI1 achieves 86.2\%, reflecting its larger training set.

Standard label-free MMD metrics give the same optimistic reading
($<\!3\!\times\!10^{-3}$ for every TU benchmark; full breakdown in
Appendix~\ref{app:mmd}, including a cross-model comparison that
shows label-free MMD hiding the labeled-structure failure of
unlabeled baselines that WL exposes), but
Cor~\ref{cor:non-identifiability} shows that such aggregate
agreement is compatible with both memorization and structural
learning, motivating the bootstrap calibration below.
Distributional metrics are also strongly modulated by decoding
configuration---inter-config $\rho$ spread can exceed $0.3$, large
enough to shift the bootstrap-calibrated verdict (full sweep in
Appendices~\ref{app:decode-sweep},~\ref{app:factor-decomp}); the
main text reports \texttt{both\_default}.

\noindent\textbf{Calibrated baseline comparison.}
Table~\ref{tab:calibrated} compares model metrics against bootstrap
baselines, which represent the distributional alignment achievable by
simply resampling from the training set.

\begin{table}[t]
\centering
\caption{Model vs.\ bootstrap baseline (single training seed = 42;
  ENZYMES values are aggregated over $5$ decoding seeds $\times$ $10$
  bootstrap repeats under the no-cap gBolt rerun to match the
  no-cap audit in Limitations~(g)).
  $\Delta\rho$: model Spearman minus bootstrap median. $z$:
  standardized score. Pctile: percentile rank of model within
  bootstrap distribution. Values below the bootstrap median are
  highlighted in \textcolor{red}{red}.}
\label{tab:calibrated}
\footnotesize
\begin{tabular}{lcccccc}
\toprule
Dataset & $\rho$ & $\Delta\rho$ & $z(\rho)$ & Pctile & JSD & Boot JSD \\
\midrule
MUTAG
  & 0.976 & \textcolor{red}{$-$0.014} & \textcolor{red}{$-$2.54}
  & 0\% & 0.035 & 0.021 \\
PTC\_MR
  & 0.950 & \textcolor{red}{$-$0.023} & \textcolor{red}{$-$1.49}
  & 10\% & 0.063 & 0.022 \\
ENZYMES
  & 0.987 & \textcolor{red}{$-$0.005} & \textcolor{red}{$-$2.78}
  & 4\% & 0.014 & 0.010 \\
PROTEINS
  & 0.988 & $+$0.005 & $+$0.62
  & 60\% & 0.022 & 0.016 \\
NCI1
  & 0.990 & $+$0.000 & $+$0.43
  & 50\% & 0.010 & 0.012 \\
\bottomrule
\end{tabular}
\end{table}

\noindent\textbf{Key finding: High $\rho$ is within the bootstrap range.}
For MUTAG, PTC\_MR, and ENZYMES, the model's Spearman $\rho$ falls
\emph{below} the bootstrap median ($\Delta\rho < 0$), placing the
model below the bootstrap distribution's center on these three
datasets (negative $z(\rho)$ throughout, with ENZYMES the most
negative under the tight no-cap bootstrap reference even though its
$|\Delta\rho|$ is smallest).
For PROTEINS and NCI1, the model is at or slightly above the bootstrap
median, but within the bootstrap band shown in Figure~\ref{fig:bootstrap-comparison}.
This result implies that the high distributional alignment observed in
Table~\ref{tab:main-results} is largely a statistical consequence of
memorization: if the model reproduces most training graphs, the subgraph
distribution alignment follows trivially.
\emph{High $\rho$ alone cannot be interpreted as evidence of structural
learning beyond memorization.}

\begin{figure}[t]
  \centering
  \includegraphics[width=\linewidth]{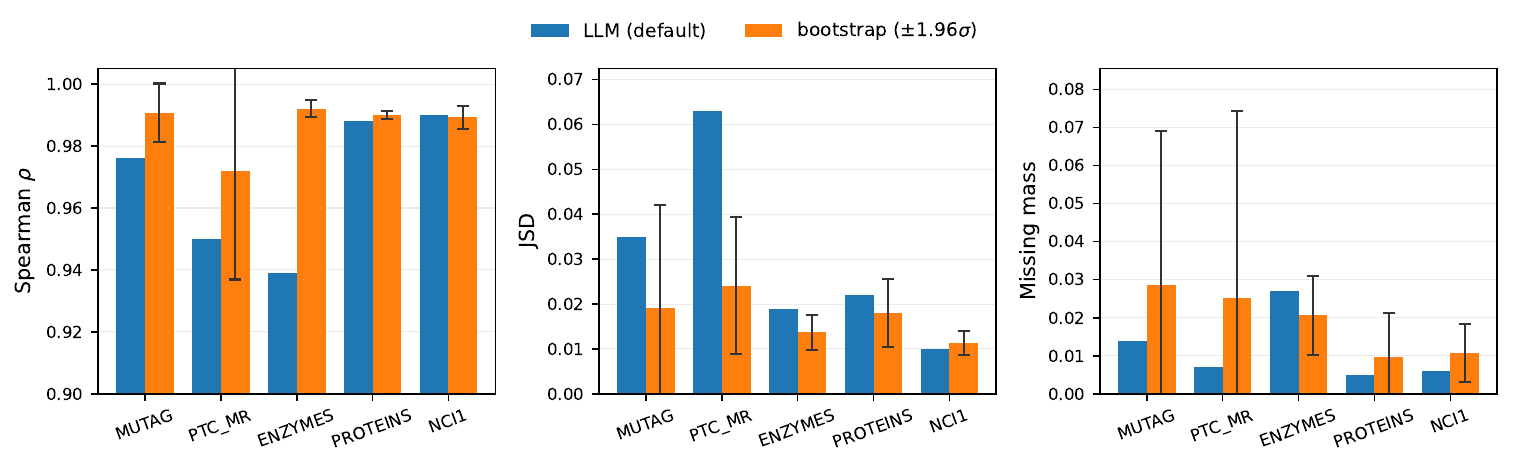}
  \caption{LLM (DFS canonical, \texttt{both\_default}, single training
    seed = 42) vs.\ graph-level bootstrap baseline on the five TU
    benchmarks. Bars on each panel
    show the model and bootstrap mean; error bars on the bootstrap
    bar are $\pm 1.96\,\sigma$ over $10$ bootstrap repeats (a Gaussian
    approximation of a $95\%$ band; the raw $5$--$95$ percentile
    would be too noisy at $n_{\mathrm{boot}} = 10$). Bootstrap matches
    or exceeds the model on Spearman~$\rho$ for MUTAG, PTC\_MR, and
    ENZYMES, and is competitive with the model on JSD and missing
    mass throughout, visualizing the calibrated reading of
    Table~\ref{tab:calibrated}.}
  \label{fig:bootstrap-comparison}
\end{figure}

The DGMG serialization shows a similar pattern
(Appendix~\ref{app:calibrated-dgmg}, Table~\ref{tab:calibrated-dgmg}),
with PROTEINS-DGMG and (under no-cap gBolt; Limitations~(g))
ENZYMES-DGMG nominally exceeding the bootstrap at the $100$th and
$80$th percentile ($z = +1.49$ and $+0.69$).\footnote{The absolute
gaps are small ($\Delta\rho = +0.009$ and $+0.001$), within
decoding-seed variability, so neither is strong evidence of
structural learning beyond memorization;
see Appendix~\ref{app:calibrated-dgmg}.}

\noindent\textbf{Frequency-stratified analysis.}
The aggregate metrics mask a dramatic frequency-dependent structure.
Figure~\ref{fig:strata-bars} presents the per-stratum breakdown; full
numeric values are reported in Appendix~\ref{app:dfs-strata}
(Table~\ref{tab:strata}).
\begin{figure}[t]
  \centering
  \includegraphics[width=1.0\linewidth]{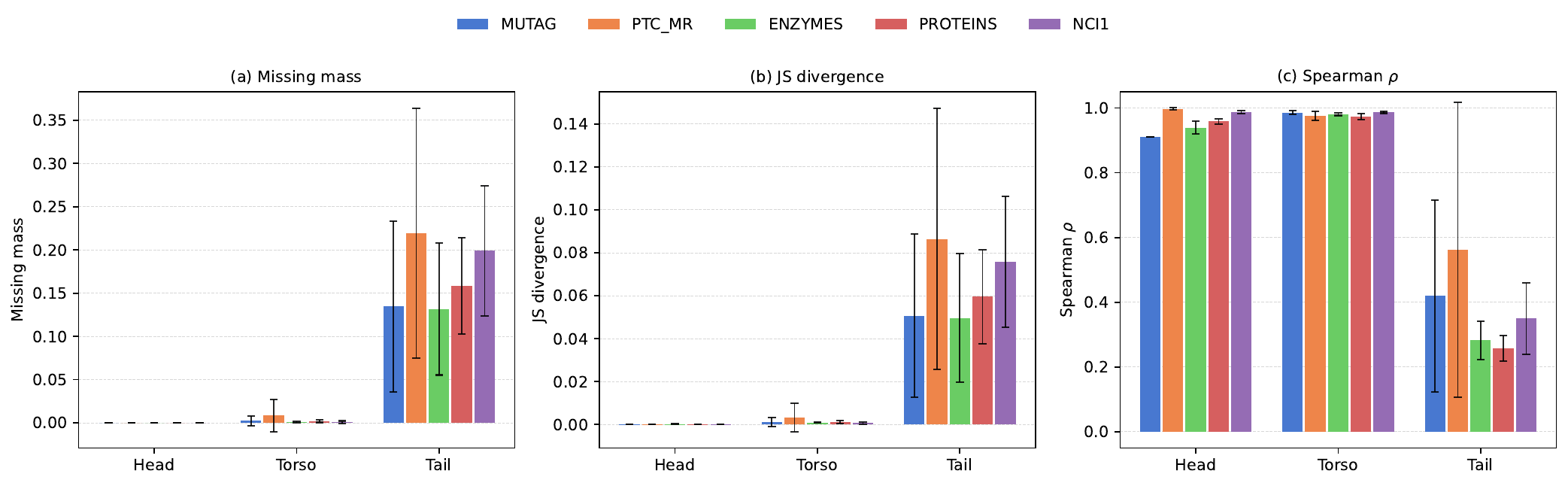}
  \caption{Frequency-stratified metrics (DFS canonical, single
    training seed = 42), bar heights = mean over $n=5$ decoding-seed
    reruns of the same LLaMA-\textsc{small} (132M) checkpoint, error
    bars = sample standard deviation. Missing mass and Spearman~$\rho$ are
    \emph{independent} diagnostics: missing mass measures
    \emph{coverage} (what fraction of stratum mass is omitted), while
    $\rho$ measures \emph{rank} on the patterns the model does emit.
    Head/Torso are reproduced with high fidelity and tight error bars
    across all decoding seeds; Tail patterns exhibit elevated missing
    mass and Jensen--Shannon divergence, and lower Spearman rank
    correlation, with substantially wider error bars---reflecting
    that 1024 samples are too few to densely cover the bottom-decile
    pattern set (per-seed numbers in
    Appendix~\ref{app:decoding-ablation}).}
  \label{fig:strata-bars}
\end{figure}

Head and Torso strata are reproduced with high fidelity (missing mass
$\le 0.8\%$ and $\rho \ge 0.91$ in every condition), but the Tail
stratum shows substantial coverage failures: missing mass climbs to
$13.5$--$22.0\%$, an order of magnitude larger than the Head/Torso gap.
Rank correlation on the patterns the model does produce
(intersection-Spearman Tail $\rho \in [0.26, 0.56]$, mean over $n=5$
decoding-seed reruns) sits well below the $\rho \ge 0.97$ achieved on
Head/Torso. Tail estimates are themselves noisy across decoding
seeds---PTC\_MR and MUTAG Tail $\rho$ swing by $\pm 0.30$--$0.46$
across reruns---reflecting that 1024 samples are too few to densely
cover the bottom-decile pattern set; per-seed values, the more stable
\emph{trainkeys} variant (filling omitted patterns with zero), and
the per-dataset Head--Tail gap $\Delta\rho$ are reported in
Appendices~\ref{app:decoding-ablation}--\ref{app:tail-gap}.
This is a \emph{frequency-dependent generalization gap}: the model
concentrates mass on frequent motifs and systematically
under represents rare ones, with sampling alone unable to recover the
distribution's tail at the available decoding budget.

Two diagnostic checks corroborate the whole-graph memorization reading of
the calibrated baseline comparison above: an NLL hit/miss test
(Mann--Whitney $U$, $p<10^{-5}$ on ENZYMES/NCI1;
Appendix~\ref{app:nll}) and a one-way variance decomposition where
decoding explains $57$--$71\%$ of Spearman variance and training
progress dominates missing-mass variance ($>\!76\%$;
Appendix~\ref{app:factor-decomp}). Repeating the protocol with DGMG
action sequences yields serialization-consistent aggregate patterns
(WL-MMD $<\!0.006$, coverage $>\!0.92$, DGMG memorization $\ge$ DFS
in every dataset; Appendix~\ref{app:serialization}) and reproduces
the qualitative Head--Tail gap (Appendix~\ref{app:dgmg-strata}),
while per-dataset bootstrap percentiles vary between serializations
(Appendix~\ref{app:calibrated-dgmg})---qualitative evidence for
expectation (E4), robustness across the two serializations.

\noindent\textbf{Structural baselines.}
Against three structural baselines, \textbf{DiGress}
~\citep{vignac2023digress} (discrete denoising diffusion),
\textbf{GraphRNN}~\citep{you2018graphrnn}, and
\textbf{DGMG-official}~\citep{li2018dgmg}, DiGress occupies a
low whole-graph memorization but lower subgraph-level alignment regime (recall $\le 2\%$,
$\rho \in [0.62, 0.93]$), confirming that high $\rho$ does not
require memorization. GraphRNN and DGMG-official, being
label-unaware, fail to recover labeled gSpan support on most TU
datasets and leave $\rho$ undefined, even though their unlabeled
degree/orbit MMD match training
(full per-model numbers in Appendix~\ref{app:dgmg-official},
Table~\ref{tab:cross_model}; cross-model MMD aggregate in
Appendix~\ref{app:mmd}).
High $\rho$ alone is therefore insufficient as a verdict, and the
LLM's near-perfect TU alignment still requires the bootstrap
calibration above.

\noindent\textbf{Scaling to PCQM4Mv2.}
Propositions~\ref{prop:tv}--\ref{prop:stat} establish that high $\rho$
is consistent with memorization; the empirical question is whether the
converse---high $\rho$ \emph{without} near-complete verbatim recall of
the corpus---arises at scale, where memorizing $3.10$M unique training
sequences becomes substantially harder at fixed $\sim$132M-parameter
capacity. Applying the same DFS pipeline to PCQM4Mv2, 
exact-match recall \emph{at the whole-graph level} drops to $31.7\%$ 
while \emph{subgraph-level} Spearman remains high 
($\rho = 0.976$, JSD $= 0.044$, uniqueness $= 1.000$); 
the full scale-comparison is in Appendix~\ref{app:pcqm}
(Table~\ref{tab:pcqm-scaling}). The TU bootstrap is uninformative at
this scale; a novel-only subset analysis on a fixed $10{,}000$-graph
subsample tracks the all-generation Spearman within $0.026$ across
the capacity sweep (Appendix~\ref{app:pcqm-novel-only}).
The Tail stratum still degrades sharply (Tail $\rho = 0.41\!\pm\!0.22$,
missing mass $0.07\!\pm\!0.02$; Table~\ref{tab:pcqm-strata}), while
same-corpus DiGress collapses uniformly (Head missing mass already
$0.47$, Head $\rho = 0.54$), so the ``rare-pattern-only'' failure
mode is specific to the LLM regime. Aggregate whole-graph MMDs stay
in the same low band as on TU (Appendix~\ref{app:mmd})---a regime
where Cor~\ref{cor:non-identifiability} makes the bootstrap and
stratification of \S\ref{sec:method} the load-bearing diagnostics,
not aggregate MMD.

\begin{table}[t]
\centering
\caption{PCQM4Mv2 Tail-stratum diagnostics (single training seed = 42;
  mean $\pm$ std over 5 random $10$k training-reference subsamples;
  reference-side variance only). LLM-DFS preserves Head/Torso almost
  perfectly and loses ground only in the Tail; DiGress collapses
  uniformly. Full breakdown and decoding-stage variance discussion
  in Appendix~\ref{app:pcqm} (Table~\ref{tab:pcqm-strata-full}).}
\label{tab:pcqm-strata}
\small
\begin{tabular}{lccc}
\toprule
Model & Tail Missing & Tail JSD & Tail $\rho$ \\
\midrule
LLM-DFS & $0.069 \pm 0.017$ & $0.026 \pm 0.006$ & $0.406 \pm 0.222$ \\
DiGress & $0.860 \pm 0.033$ & $0.489 \pm 0.029$ & $0.046 \pm 0.212$ \\
\bottomrule
\end{tabular}
\end{table}

\begin{figure}[t]
  \centering
  \includegraphics[width=\linewidth]{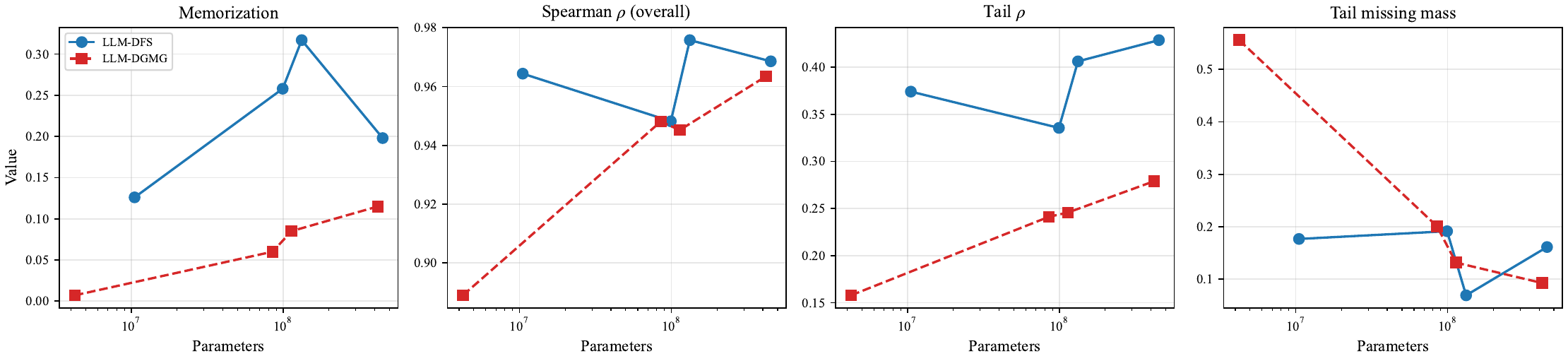}
  \captionsetup{hypcap=false}
  \captionof{figure}{Cross-model PCQM4Mv2 scaling. Memorization, overall
    $\rho$, Tail $\rho$, and Tail missing mass plotted against
    parameter count for LLM-DFS (4 sizes, blue) and LLM-DGMG (4
    sizes, red). Distribution-alignment metrics sit in a narrow band
    across the $43\times$ DFS sweep, while memorization and missing
    mass move with capacity. The DGMG four-point sweep reproduces
    the same trend under a different serialization.}
  \label{fig:pcqm-scaling}
\end{figure}

\noindent\textbf{Capacity sweep and cross-serialization robustness.}
The DFS capacity sweep ($10.5$M--$451$M, $43\times$) on PCQM4Mv2
leaves subgraph-level $\rho \in [0.95, 0.98]$ and whole-graph
WL-MMD $\in [0.0014, 0.0023]$ in narrow bands, with Tail $\rho$
peaking at $\approx 0.43$. A four-point DGMG sweep exhibits the
same capacity-dependent trends under a different serialization
(Figure~\ref{fig:pcqm-scaling}, Appendix~\ref{app:pcqm}), supporting
expectation~(E4)---robustness across the two serializations we test.
\section{Discussion and Conclusion}
\label{sec:discussion}
Our results suggest a conditional answer to our central question---whether graph language models go beyond memorization: yes, but only at sufficient scale and primarily for
frequent substructures.

On small TU benchmarks, high subgraph-level
alignment is largely explained by whole-graph memorization: the model's
Spearman correlation falls within, or close to, the distribution induced
by a non-learning bootstrap baseline. In this regime, aggregate fidelity
is therefore not evidence of structural acquisition.
At PCQM4Mv2 scale, however, the regime shifts. Exact-match recall drops sharply,
yet the generated graphs preserve the rank structure of frequent
subgraph statistics, even when evaluation is restricted to
novel-only generations. This decoupling between whole-graph recall and
subgraph-level alignment provides evidence that autoregressive graph
language models can internalize structural regularities rather than
merely replay training graphs. The same qualitative crossover appears
under both canonical DFS codes and DGMG action sequences, suggesting
it is not a serialization artifact.
By contrast, non-LLM baselines occupy different territory: 
the label-aware DiGress baseline achieves no exact-match recall yet also fails to recover frequent-subgraph distributions, while label-unaware baselines (GraphRNN, DGMG-official) cannot share the labeled support that defines alignment (\S\ref{sec:results}, structural baselines). High subgraph-level alignment with low whole-graph recall therefore appears specific to large-scale graph language models in our scope.

This structural acquisition is uneven: Head and Torso patterns are
reproduced reliably while Tail patterns remain poorly covered across
datasets and serializations, so \emph{graph language models behave
as implicit neural graph miners only in a partial sense---recovering
the dominant support structure but not the rare-substructure tail}.
Memorization-aware diagnosis thus requires a non-learning reference
and frequency-stratified evaluation; Tail coverage is the natural
next step.

\section*{Acknowledgments}
This work was supported by JSPS, KAKENHI Grant Number JP25H01112, Japan
and JST, CREST Grant Number JPMJCR22D3, Japan.
\bibliographystyle{plainnat}
\bibliography{references}

@article{shervashidze2011weisfeiler,
  author  = {Nino Shervashidze and Pascal Schweitzer and Erik Jan van Leeuwen and Kurt Mehlhorn and Karsten M. Borgwardt},
  title   = {Weisfeiler-Lehman Graph Kernels},
  journal = {Journal of Machine Learning Research},
  year    = {2011},
  volume  = {12},
  number  = {77},
  pages   = {2539--2561},
  url     = {http://jmlr.org/papers/v12/shervashidze11a.html}
}

@inproceedings{yan2002gspan,
  author    = {Xifeng Yan and Jiawei Han},
  title     = {gSpan: Graph-Based Substructure Pattern Mining},
  booktitle = {Proceedings of the IEEE International Conference on Data Mining (ICDM)},
  publisher = {IEEE},
  year      = {2002},
  pages     = {721--724},
  doi       = {10.1109/ICDM.2002.1184038}
}

@misc{zhou2017gbolt,
  author       = {Keren Zhou},
  title        = {{gBolt}: a {C}++ implementation of the gSpan algorithm},
  year         = {2017},
  publisher    = {GitHub},
  howpublished = {\url{https://github.com/Jokeren/gBolt}},
  note         = {BSD 2-Clause License}
}

@inproceedings{you2018graphrnn,
  title     = {GraphRNN: Generating Realistic Graphs with Deep Auto-regressive Models},
  author    = {You, Jiaxuan and others},
  booktitle = {Proceedings of the 35th International Conference on Machine Learning},
  year      = {2018}
}

@inproceedings{goyal2020graphgen,
  title     = {GraphGen: A Scalable Approach to Domain-agnostic Labeled Graph Generation},
  author    = {Goyal, Nikhil and Jain, Harsh Vardhan and Ranu, Sayan},
  booktitle = {Proceedings of The Web Conference 2020},
  pages     = {1253--1263},
  year      = {2020}
}

@article{li2018dgmg,
  title   = {Learning Deep Generative Models of Graphs},
  author  = {Li, Yujia and Vinyals, Oriol and Dyer, Chris and Pascanu, Razvan and Battaglia, Peter W.},
  journal = {arXiv preprint arXiv:1803.03324},
  year    = {2018}
}

@inproceedings{vignac2023digress,
  title     = {DiGress: Discrete Denoising Diffusion for Graph Generation},
  author    = {Vignac, Clement and Krawczuk, Igor and Siraudin, Antoine and Wang, Bohan and Cevher, Volkan and Frossard, Pascal},
  booktitle = {International Conference on Learning Representations},
  year      = {2023}
}

@inproceedings{jo2022gdss,
  title     = {Score-based Generative Modeling of Graphs via the System of Stochastic Differential Equations},
  author    = {Jo, Jaehyeong and Lee, Seul and Hwang, Sung Ju},
  booktitle = {International Conference on Machine Learning},
  year      = {2022}
}

@inproceedings{niu2020edpgnn,
  title     = {Permutation Invariant Graph Generation via Score-Based Generative Modeling},
  author    = {Niu, Chenhao and Song, Yang and Song, Jiaming and Zhao, Shengjia and Grover, Aditya and Ermon, Stefano},
  booktitle = {Proceedings of the 23rd International Conference on Artificial Intelligence and Statistics},
  year      = {2020}
}

@inproceedings{simonovsky2018graphvae,
  title     = {{GraphVAE}: Towards Generation of Small Graphs Using Variational Autoencoders},
  author    = {Simonovsky, Martin and Komodakis, Nikos},
  booktitle = {International Conference on Artificial Neural Networks},
  year      = {2018}
}

@inproceedings{you2018gcpn,
  title     = {Graph Convolutional Policy Network for Goal-Directed Molecular Graph Generation},
  author    = {You, Jiaxuan and Liu, Bowen and Ying, Rex and Pande, Vijay and Leskovec, Jure},
  booktitle = {Advances in Neural Information Processing Systems},
  year      = {2018}
}

@inproceedings{shi2020graphaf,
  title     = {{GraphAF}: a Flow-based Autoregressive Model for Molecular Graph Generation},
  author    = {Shi, Chence and Xu, Minkai and Zhu, Zhaocheng and Zhang, Weinan and Zhang, Ming and Tang, Jian},
  booktitle = {International Conference on Learning Representations},
  year      = {2020}
}

@article{zhao2024graphgpt,
  title   = {GraphGPT: Generative Pre-trained Graph Eulerian Transformer},
  author  = {Qifang Zhao and Weidong Ren and Tianyu Li and Hong Liu and Xingsheng He and Xiaoxiao Xu},
  journal = {arXiv preprint arXiv:2401.00529},
  year    = {2024},
  url     = {https://arxiv.org/abs/2401.00529}
}

@article{chen2025flatten,
  title   = {Flatten Graphs as Sequences: Transformers are Scalable Graph Generators},
  author  = {Dexiong Chen and Markus Krimmel and Karsten Borgwardt},
  journal = {arXiv preprint arXiv:2502.02216},
  year    = {2025},
  url     = {https://arxiv.org/abs/2502.02216}
}

@inproceedings{morris2020tudataset,
  title     = {{TUDataset}: A collection of benchmark datasets for learning with graphs},
  author    = {Morris, Christopher and Kriege, Nils M. and Bause, Franka and Kersting, Kristian and Mutzel, Petra and Neumann, Marion},
  booktitle = {ICML 2020 Workshop on Graph Representation Learning and Beyond (GRL+)},
  year      = {2020}
}

@inproceedings{hu2021ogblsc,
  title     = {{OGB-LSC}: A Large-Scale Challenge for Machine Learning on Graphs},
  author    = {Hu, Weihua and Fey, Matthias and Ren, Hongyu and Nakata, Maho and Dong, Yuxiao and Leskovec, Jure},
  booktitle = {NeurIPS Datasets and Benchmarks Track},
  year      = {2021}
}

@inproceedings{carlini2021extracting,
  title     = {Extracting Training Data from Large Language Models},
  author    = {Carlini, Nicholas and Tramer, Florian and Wallace, Eric and Jagielski, Matthew and Herbert-Voss, Ariel and Lee, Katherine and Roberts, Adam and Brown, Tom and Song, Dawn and Erlingsson, Ulfar and others},
  booktitle = {30th USENIX Security Symposium},
  year      = {2021}
}

@inproceedings{carlini2023quantifying,
  title     = {Quantifying Memorization Across Neural Language Models},
  author    = {Carlini, Nicholas and Ippolito, Daphne and Jagielski, Matthew and Lee, Katherine and Tramer, Florian and Zhang, Chiyuan},
  booktitle = {International Conference on Learning Representations},
  year      = {2023}
}

@inproceedings{somepalli2023diffusion,
  title     = {Diffusion Art or Digital Forgery? Investigating Data Replication in Diffusion Models},
  author    = {Somepalli, Gowthami and Singla, Vasu and Goldblum, Micah and Geiping, Jonas and Goldstein, Tom},
  booktitle = {IEEE/CVF Conference on Computer Vision and Pattern Recognition},
  year      = {2023}
}

@inproceedings{thompson2022evaluation,
  title     = {On Evaluation Metrics for Graph Generative Models},
  author    = {Thompson, Rylee and Knyazev, Boris and Ghalebi, Elahe
               and Kim, Jungtaek and Taylor, Graham W.},
  booktitle = {International Conference on Learning Representations},
  year      = {2022}
}

@inproceedings{sanford2024transformer,
  title     = {Understanding Transformer Reasoning Capabilities via Graph Algorithms},
  author    = {Sanford, Clayton and Fatemi, Bahare and Hall, Ethan and Tsitsulin, Anton and Kazemi, Mehran and Halcrow, Jonathan and Perozzi, Bryan and Mirrokni, Vahab},
  booktitle = {Advances in Neural Information Processing Systems},
  year      = {2024}
}

@inproceedings{fatemi2024talklike,
  title     = {Talk like a Graph: Encoding Graphs for Large Language Models},
  author    = {Fatemi, Bahare and Halcrow, Jonathan and Perozzi, Bryan},
  booktitle = {International Conference on Learning Representations},
  year      = {2024}
}

@article{bagal2022molgpt,
  title   = {{MolGPT}: Molecular Generation Using a Transformer-Decoder Model},
  author  = {Bagal, Viraj and Aggarwal, Rishal and Vinod, P K and Priyakumar, U Deva},
  journal = {Journal of Chemical Information and Modeling},
  volume  = {62},
  number  = {9},
  pages   = {2064--2076},
  year    = {2022}
}

@article{touvron2023llama,
  title   = {{LLaMA}: Open and Efficient Foundation Language Models},
  author  = {Touvron, Hugo and others},
  journal = {arXiv preprint arXiv:2302.13971},
  year    = {2023}
}

@inproceedings{loshchilov2019decoupled,
  title     = {Decoupled Weight Decay Regularization},
  author    = {Loshchilov, Ilya and Hutter, Frank},
  booktitle = {International Conference on Learning Representations},
  year      = {2019}
}

@inproceedings{rajbhandari2020zero,
  title     = {{ZeRO}: Memory Optimizations Toward Training Trillion Parameter Models},
  author    = {Rajbhandari, Samyam and others},
  booktitle = {Proceedings of the International Conference for High Performance Computing, Networking, Storage and Analysis},
  year      = {2020}
}


\section*{Reproducibility Statement}
\label{sec:reproducibility}

We aim to make every reported number reproducible from the public 
companion repository.

\paragraph{Datasets.}
TU-benchmark graphs (MUTAG, ENZYMES, NCI1, PROTEINS, PTC\_MR) are 
obtained through PyTorch Geometric's \texttt{TUDataset} loader using 
the canonical splits. PCQM4Mv2 is obtained from the OGB v1.3.5 
release; we use the 2D-graph-only split. All preprocessing
(DFS-code conversion, DGMG action sequence construction, and gSpan
pattern extraction with $\sigma=0.1$, executed via a patched
gBolt~\citep{zhou2017gbolt} C++ engine---max-vertices cap,
projection-memory safeguard, and extended DFS-code output for
canonical-form parity with the reference Python gSpan---wrapped by
an anonymized Python interface) is included in the companion
repository.

\paragraph{Hardware.}
All reported experiments run on a single NVIDIA A100 80\,GB (PCIe)
with bf16 mixed precision via DeepSpeed ZeRO-1. Per TU dataset,
training to 80M non-padding tokens takes $\approx$5.5\,GPU-hours at
$\approx$11\,it/s with batch size 16 (DGMG: 4). The multi-seed TU
sweep (5 datasets $\times$ 2 serializations $\times$ 2 seeds $=$ 20
runs, including the \texttt{seed=42} run) amounts to
$\approx$110\,GPU-hours; PCQM4Mv2 (single seed) adds the same
order of magnitude depending on epoch count and model size. Beyond
these headline figures, preliminary architecture and hyperparameter
sweeps over LLaMA sizes, decoding-configuration ablations, and
DeepSpeed/serialization debugging---together with abandoned and
failed runs that were not retained for the final analysis---approximately
doubled the total project compute. We do not report exact wall-clock
for these exploratory runs because logs of failed and discarded
configurations were not systematically preserved; the companion
repository contains only the configurations and seeds used for the
reported numbers.

\paragraph{Seeds.}
Training uses a global seed (\texttt{42}, \texttt{1337}, 
\texttt{2024} for the three TU runs), passed through to 
\texttt{transformers.set\_seed} and to the SFT trainer 
(\texttt{seed} and \texttt{data\_seed}). PCQM4Mv2 uses a single 
seed (\texttt{42}) due to compute budget; its variance is 
documented qualitatively in the limitations. Generation samplers 
expose their own seed through the evaluation pipeline.

\paragraph{Confidence intervals.}
The main-text TU tables report mean $\pm$ sample std over $n=5$ 
decoding seeds at training seed $=42$; the same checkpoint is 
fixed across decoding draws so the variance reflects sampling 
noise alone (per-dataset breakdown in 
Appendix~\ref{app:decoding-ablation}, 
Table~\ref{tab:decoding-ablation}). A full two-way ablation 
across both training seeds $\{42, 1337\}$ and the same five 
decoding seeds ($100$ evaluations) is reported in 
Table~\ref{tab:multiseed-tail-all} and shows that the 
training-seed shift on the decoding-mean is bounded by $0.05$ on 
Tail miss and $0.10$ on Tail $\rho_\cap$, inside the 
decoding-seed std on every cell---i.e.\ decoding-stage noise 
dominates training-seed noise. The seed-2024 run is queued and 
will close the $n{=}3$ Student-$t$ aggregate at the camera-ready 
stage. Bootstrap baselines on TU use $10$ resamples and report 
$\pm 1.96\sigma$ bands; PCQM4Mv2 is single-training-seed by 
design, its larger reference set ($\sim$$10^4$ tail patterns) 
absorbs most sampling noise, and we quote $5$-subsample standard 
errors where applicable. Tables and figures are produced 
automatically from the aggregated artifacts.

\paragraph{Code release.}
Code and training scripts will be released 
under the MIT License at the (currently anonymized) URL listed in 
the supplementary material upon acceptance. The repository includes 
a single-command re-run script that idempotently trains, evaluates, 
and exports LaTeX tables and figures for the full benchmark matrix.

\section*{Broader Impact}
\label{sec:broader-impact}

Autoregressive generators of molecular and protein-interaction graphs share
the dual-use profile of generative models in chemistry more broadly: a
model that learns to extend small fragments into novel valid molecules
could in principle be redirected toward toxic or controlled-substance
design. Two factors mitigate the immediate risk in this work. First, our
released models are trained on public academic graph benchmarks
(TUDatasets and the OGB PCQM4Mv2 split) without property conditioning
for toxicity, potency, or controlled-substance objectives. Second, the
central scientific contribution is a calibrated evaluation framework
for graph language models---a diagnostic for memorization vs
structural alignment---rather than a stronger generator; the
framework is equally useful for auditing generators with safety
properties (e.g., flagging memorization of restricted training
molecules) as for building them.
We nonetheless recommend that downstream practitioners extending these
methods to property-conditioned chemistry benchmarks adopt access controls
and standard safety filters before releasing generators or generated
samples.

\makeatletter
\let\NGM@origsection\section
\renewcommand\section{\FloatBarrier\NGM@origsection}
\makeatother
\appendix

\section{Limitations}
\label{app:limitations}

\textbf{(a) Serialization dependence.}
Memorization detection is string-based and therefore
serialization-dependent. We test two formats (canonical DFS code and
DGMG action sequences) and report the cross-serialization agreement
as robustness evidence rather than a serialization-invariance claim
(Appendix~\ref{app:serialization}); broader serialization sensitivity
(SMILES, random walks, adjacency-based formats) and a
graph-isomorphism-based detector are left for future work.

\textbf{(b) Compute-optimal scaling.}
\textsc{large} (451M for DFS, 419.7M for DGMG) is trained for $12$
epochs under a fixed compute budget while
\textsc{tiny}/\textsc{medium}/\textsc{small} use $50$ epochs (DFS) or
$20$ epochs (DGMG), so tokens-per-parameter is not held constant
($9$ tok/param for DFS-\textsc{large}, $5.8$ for DGMG-\textsc{large}).
The Tail-stratum gains at \textsc{large} are therefore likely a lower
bound under both serializations. A controlled compute-optimal scaling
analysis over model size, data size, and training compute is left for
future work.

\textbf{(c) Multi-seed coverage.}
We separate two distinct sources of stochasticity: \emph{training-seed}
variance (different fine-tuning runs) and \emph{decoding-seed}
variance (different sampling draws from the same trained checkpoint),
and characterize both via a full two-way ablation
(Appendix~\ref{app:multiseed},
Table~\ref{tab:multiseed-tail-all}): for every TU dataset and
serialization, both training seeds $\{42, 1337\}$ are evaluated under
five fixed decoding seeds $\{0,1,2,3,4\}$ ($10$ cells $\times 5 = 50$
draws per condition, $100$ total). Decoding-stage noise dominates
on TU: across all $10$ cells the decoding-mean shift between training
seeds is at most $0.05$ on Tail miss and $0.10$ on Tail $\rho_\cap$,
inside the decoding-seed std band on every cell. This empirically
validates reporting decoding-seed averages throughout the paper and
makes the seed-2024 camera-ready run a confirmation rather than a
load-bearing addition. PCQM4Mv2 is single-training-seed; the larger
reference set ($\sim$$10^4$ tail patterns vs $\sim$$10^2$ on TU)
makes decoding-seed variance substantially smaller, but a full
multi-seed PCQM4Mv2 sweep is deferred to the camera-ready stage.

\textbf{(d) Structural baselines.}
Our reference baselines are graph-level bootstrap resamples, which
preserve full graph structure. Degree-preserving randomization or
configuration-model baselines would quantify how much structural
information is needed to reach similar distributional alignment.

\textbf{(e) gSpan sensitivity.}
We swept \texttt{min\_sup} ($\sigma \in \{0.01, 0.1\}$) on three TU
datasets (Appendix~\ref{app:minsup}) and found consistent conclusions,
but \texttt{upper} (maximum subgraph size) is fixed and remains
unexplored; a systematic sweep over both gSpan thresholds is left for
future work, so Tail-stratum claims should be read as anchored to the
$(\sigma, \texttt{upper})$ values used here.

\textbf{(f) DGMG canonicalization is action-canonical, not
isomorphism-canonical.}
The sorted-label canonicalization used for DGMG action sequences
(\S\ref{sec:method-serial}) is deterministic but does not enumerate
graph automorphisms; same-label nodes admitting non-trivial
automorphisms can therefore yield distinct action strings even for
isomorphic graphs. The DGMG memorization rates in
Table~\ref{tab:dgmg-mem} are computed on canonicalized action
strings rather than isomorphism classes, so they lower-bound the
true isomorphism-based recall. The DFS canonical code, in contrast,
is unique up to graph isomorphism by
construction~\citep{yan2002gspan}, so the DFS-side $\mathrm{EM}$ in
Eq.~\eqref{eq:em} matches isomorphism-based recall.

\textbf{(g) Projection-size memory guard in our patched gBolt.}
Our patched gBolt engine (\S\ref{sec:reproducibility}) applies a
fixed \texttt{MAX\_PROJECTION\_SIZE} of $5\!\times\!10^{5}$
occurrences: when an intermediate pattern's projection exceeds this
threshold, the corresponding search branch is skipped to prevent
memory exhaustion on dense corpora. The guard primarily prunes
extensions of small, high-multiplicity seed patterns; rare-substructure
descendants (the Tail stratum, low-projection by construction) are
largely unaffected by the guard, as we verify empirically below.
Because the same pipeline is applied to training, generated, and
bootstrap corpora, the relative metrics on the shared
support---Spearman $\rho$, JSD, missing mass, and the bootstrap
calibration---are unaffected in expectation, and our central claims
(memorization regime on TU, decoupling on PCQM4Mv2, persistent Tail
deficit, cross-serialization agreement) do not depend on the guard.
The absolute pattern counts $|\mathcal{P}_{\mathrm{tr}}|$ and
$|\mathcal{P}_{\mathrm{gen}}|$ reported in
Table~\ref{tab:main-results} should be read as
implementation-side bounds: on ENZYMES the no-cap value of
$|\mathcal{P}_{\mathrm{tr}}|$ is $\approx 0.09\%$ larger
($+84$ patterns out of $95{,}335$) and we report those values in
Table~\ref{tab:strata}; on PROTEINS the no-cap-to-$10^{7}$ shift is
$\approx 2.7\%$ and we keep the default-cap values because the
ratio metric on the shared support cancels (see below); the four
remaining datasets (MUTAG, PTC\_MR, NCI1, PCQM4Mv2) are unaffected
by the guard.

We further verified this empirically across all six paper datasets.
Rerunning the full evaluation pipeline with the guard raised to
$10^{9}$ (effectively no cap) on MUTAG, PTC\_MR, NCI1, ENZYMES, and
the PCQM4Mv2 subsamples, and to $10^{7}$ on PROTEINS (the no-cap
binary exhausts memory on PROTEINS' denser projections; the $10^{7}$
value is $20\times$ the original $5\!\times\!10^{5}$ cap and suffices
to expose the asymmetric vs symmetric distinction we report),
we observe:

\begin{center}
\begin{tabular}{lcr}
\toprule
Dataset & Patterns affected by guard & Head $\rho$ shift \\
\midrule
MUTAG, PTC\_MR, NCI1                    & $0$                     & $0$ \\
PCQM4Mv2 ($10$k subsample, $5\!\times\!$) & $0$                     & $0$ \\
PROTEINS                                & $1{,}955$ ($2.7\%$)     & $-0.001$ \\
ENZYMES                                 & $90$  ($0.09\%$)        & $+0.152$ \\
\bottomrule
\end{tabular}
\end{center}

The guard does not activate on three TU benchmarks or on PCQM4Mv2.
PROTEINS shows a $2.7\%$ pattern shift but the central Head Spearman
remains within noise ($-0.001$): the cap activates symmetrically on
the training and generated corpora, so the ratio metric on the shared
support cancels. ENZYMES is the only dataset where the cap activates
asymmetrically (training side dominated, since the model's generated
sequences are short enough that the cap is not reached on the
generated side), producing a $+0.152$ Head Spearman shift driven by
$667$ training-only Head patterns becoming shared once the guard is
removed. We therefore report the no-cap numbers for ENZYMES throughout
Figure~\ref{fig:strata-bars} and the corresponding ENZYMES rows of
Table~\ref{tab:strata}, while the four other TU datasets and PCQM4Mv2
remain on the default cap. Our central claims (TU memorization regime,
PCQM4Mv2 decoupling, persistent Tail deficit, cross-serialization
agreement) hold under either choice for every dataset. Notably,
lifting the cap on ENZYMES eliminates the Head/Torso missing mass
while leaving the Tail deficit essentially unchanged ($-0.003$ on Tail
missing mass and $\approx 0$ on Tail $\rho$), which isolates the
deficit to rare substructures more cleanly than under the original
cap.

For ENZYMES specifically, the no-cap rerun has been propagated to the
DFS-canonical main results table, the calibrated baseline, the strata
breakdown, the Head--Tail gap, the per-stratum decoding-seed ablation
(DFS rows), and the cross-baseline strata diagnostic
(Tables~\ref{tab:main-results}, \ref{tab:calibrated},
\ref{tab:strata}, \ref{tab:tail-gap}, \ref{tab:decoding-ablation},
and~\ref{tab:cross-baseline-strata}), as well as
Table~\ref{tab:calibrated-dgmg} (single-seed DGMG calibration). The
six-configuration decoding sweep (Table~\ref{tab:full-sweep}), the
$\sigma=0.01$ sensitivity row (Table~\ref{tab:minsup}), and the DGMG
multi-seed strata and two-way ablation
(Tables~\ref{tab:strata-dgmg}, the DGMG rows of
Table~\ref{tab:decoding-ablation}, and the $s_t=1337$ DFS column and
DGMG rows of Table~\ref{tab:multiseed-tail-all}) retain the default
cap because the corresponding no-cap reruns either exceed the
projection-memory budget ($\sigma=0.01$) or were deferred for the
camera-ready release (DGMG multi-seed). The qualitative
Head/Torso/Tail ordering and the Tail-stratum deficit hold under
either choice on every dataset, as the symmetry argument and the
empirical TU $+$ PCQM4Mv2 audit above establish.

\section{Proofs of Propositions \ref{prop:tv}--\ref{prop:stat} and Corollary \ref{cor:non-identifiability}}
\label{app:proofs}
\begin{proof}[Proof of Proposition \ref{prop:tv}]
For any measurable $A \subseteq \bar{\mathcal{G}}$,
$\psi_\# Q_X(A) = Q_X(\psi^{-1}(A))$ and similarly for $\psi_\# P_X$.
By the dual characterization
$d_{\mathrm{TV}}(\mu, \nu) = \sup_{B} |\mu(B) - \nu(B)|$, with the
supremum over measurable sets,
\[
  \begin{aligned}
  d_{\mathrm{TV}}(\psi_\# Q_X, \psi_\# P_X)
  &= \sup_{A}
    |Q_X(\psi^{-1}(A)) - P_X(\psi^{-1}(A))| \\
  &\le \sup_{B} |Q_X(B) - P_X(B)| \\
  &= d_{\mathrm{TV}}(Q_X, P_X),
  \end{aligned}
\]
since $\psi$ is measurable and
$\{\psi^{-1}(A) : A \subseteq \bar{\mathcal{G}}\ \text{measurable}\}$ is
a sub-family of the measurable sets in $\mathcal{X}$. This is a special
case of the data-processing inequality for total variation.
\end{proof}

\begin{proof}[Proof of Proposition \ref{prop:stat}]
For any bounded $f:\bar{\mathcal{G}}\to\mathbb{R}$,
\[
  |\mathbb{E}_{Q_G} f - \mathbb{E}_{P_G} f|
  = \Big|\int f \, d(Q_G - P_G)\Big|
  \le \|f\|_\infty\, |Q_G - P_G|(\bar{\mathcal{G}})
  = 2\|f\|_\infty\, d_{\mathrm{TV}}(Q_G, P_G).
\]
Here $|Q_G-P_G|$ denotes the total variation measure of the signed
measure $Q_G-P_G$.
Combining with Proposition \ref{prop:tv} gives the second inequality.
\end{proof}

\begin{proof}[Proof of Corollary \ref{cor:non-identifiability}]
Take $Q_X^{(\mathrm{mem})}$ to be the empirical replay distribution
concentrating on $\{\phi(G_i)\}_{i=1}^n$, so that
$Q_G^{(\mathrm{mem})} = \hat P_G^{\mathrm{boot}}$.
For any $\varepsilon > 0$, let $Q_X^{(\mathrm{gen})}$ be any sequence
model satisfying
$d_{\mathrm{TV}}(Q_X^{(\mathrm{gen})}, P_X) < \varepsilon / (2\|f\|_\infty)$
with positive mass outside $\{\phi(G_i)\}$.
Proposition~\ref{prop:stat} then gives
$|\mathbb{E}_{Q_G^{(\mathrm{mem})}} f - \mathbb{E}_{Q_G^{(\mathrm{gen})}} f|
< \varepsilon$.
\end{proof}

\section{Algorithms}
\label{app:algorithms}
This appendix collects the four algorithms referenced in the main text:
the canonical DFS code construction underlying our serialization map $\phi$;
the bootstrap calibration procedure of \S\ref{sec:method-bootstrap}
that realizes $\hat{P}^{\text{boot}}_G$ empirically; and the SMILES-to-DFS preprocessing pipeline used for PCQM4Mv2.


\begin{algorithm}[H]
\caption{Canonical DFS code via rightmost-path extension
  (referenced from \S\ref{sec:method-serial}).}
\label{alg:canonical-dfs}
\KwIn{labeled graph $G = (V, E, \ell_V, \ell_E)$.}
\KwOut{canonical DFS code $C_{\min}(G)$.}
$\mathcal{C} \gets \emptyset$\;
\ForEach{undirected edge $\{u, v\} \in E$, both directions $(u, v)$ and $(v, u)$}{
  initialize $C \gets [\,\langle 0, 1, \ell_V(u), \ell_V(v), \ell_E(u, v)\rangle\,]$\;
  $R \gets [u, v]$ \tcp*{rightmost path}
  \While{any incident edge of $G$ is not yet in $C$}{
    let $r$ be the last vertex of $R$\;
    \uIf{$r$ has an unused edge to an ancestor on $R$}{
      append the $<_e$-minimum such backward edge to $C$\;
    }\Else{
      scan $R$ from right to left and enumerate forward edges
        $(w, x)$ with $w \in R$ and $x \notin V(C)$\;
      append the candidate with smallest label triple
        $(\ell_V(w), \ell_V(x), \ell_E(w, x))$ and update $R$\;
    }
  }
  $\mathcal{C} \gets \mathcal{C} \cup \{C\}$\;
}
\Return $\arg\min_{C \in \mathcal{C}} C$ under lexicographic order on $<_e$\;
\end{algorithm}


\begin{algorithm}[H]
\caption{Bootstrap calibration of distributional metrics
  (referenced from \S\ref{sec:method-bootstrap}).}
\label{alg:bootstrap-calibration}
\KwIn{training graphs $\mathbf{D}_{\mathrm{tr}}$, target sample size
  $n_{\mathrm{gen}} = |\mathbf{D}_{\mathrm{gen}}|$, number of repeats
  $R$, gSpan parameters $(\sigma, \mathrm{upper})$.}
\KwOut{reference samples
  $\{(\rho^{(r)}, \mathrm{JSD}^{(r)}, \mathrm{MM}^{(r)})\}_{r=1}^{R}$.}
$\mathcal{P}_{\mathrm{tr}}, s_{\mathrm{tr}} \gets
  \mathrm{gSpan}(\mathbf{D}_{\mathrm{tr}}; \sigma, \mathrm{upper})$\;
\For{$r = 1, \ldots, R$}{
  draw $\mathbf{D}^{(r)} \sim \hat P_G^{\mathrm{boot}}$
    with $|\mathbf{D}^{(r)}| = n_{\mathrm{gen}}$
    \tcp*{i.i.d.\ resample from training graphs}
  $\mathcal{P}^{(r)}, s^{(r)} \gets
    \mathrm{gSpan}(\mathbf{D}^{(r)}; \sigma, \mathrm{upper})$\;
  $\rho^{(r)} \gets \mathrm{Spearman}(s_{\mathrm{tr}}, s^{(r)})$ on
    $\mathcal{P}_{\mathrm{tr}} \cap \mathcal{P}^{(r)}$\;
  $\mathrm{JSD}^{(r)} \gets
    \mathrm{JSD}(p_{\mathrm{tr}}, p^{(r)})$,\ \
  $\mathrm{MM}^{(r)} \gets \sum_{g \in
    \mathcal{P}_{\mathrm{tr}} \setminus \mathcal{P}^{(r)}}
    p_{\mathrm{tr}}(g)$\;
}
\Return $\{(\rho^{(r)}, \mathrm{JSD}^{(r)}, \mathrm{MM}^{(r)})\}_{r=1}^{R}$\;
\end{algorithm}


\begin{algorithm}[H]
\caption{PCQM4Mv2 Preprocessing: SMILES to Canonical DFS Code.}
\label{alg:smiles2dfs}
\KwIn{PCQM4Mv2 dataset $\mathcal{D} = \{(s_i, y_i)\}_{i=1}^{N}$, where $s_i$ is a SMILES string and $y_i \in \mathbb{R}$ is the HOMO--LUMO gap.}
\KwOut{Dataset $\mathcal{D}' = \{(c_i, y_i)\}$ of canonical DFS codes with labels.}

\BlankLine
\tcp{Phase 1: SMILES to molecular graph}
\ForEach{$(s_i, y_i) \in \mathcal{D}$ \textnormal{(in parallel)}}{
    $M_i \leftarrow \textsc{MolFromSmiles}(s_i)$ \tcp*{RDKit SMILES parser}
    \If{$M_i = \emptyset$}{
        discard $(s_i, y_i)$\;
        \textbf{continue}\;
    }
    $V_i \leftarrow \{(v, \ell_v) \mid v \in M_i,\ \ell_v = \textsc{GetSymbol}(v)\}$\;
    $E_i \leftarrow \{(u, v, \ell_e) \mid (u,v) \in M_i,\ \ell_e = \textsc{GetBondType}(u,v)\}$\;
    $G_i \leftarrow (V_i, E_i)$ \tcp*{Labeled molecular graph}
}
Remove all entries where $M_i = \emptyset$ \tcp*{Drop NA}

\BlankLine
\tcp{Phase 2: canonical DFS code generation}
\ForEach{$G_i = (V_i, E_i)$}{
    $c^* \leftarrow \infty$\;
    \ForEach{$v_0 \in V_i$}{
        $c \leftarrow \textsc{DFSTraversal}(G_i, v_0)$\;
        \If{$c < c^*$ \textnormal{(lexicographic order)}}{
            $c^* \leftarrow c$\;
        }
    }
    $c_i \leftarrow c^*$ \tcp*{Canonical DFS code}
}

\BlankLine
\KwResult{$\mathcal{D}' = \{(c_i, y_i)\}$, split into train / valid / test.}
\end{algorithm}


\begin{algorithm}[H]
\caption{\textsc{DFSTraversal}$(G, v_0)$.}
\label{alg:dfs-traversal}
\KwIn{Labeled graph $G=(V,E)$, start node $v_0$.}
\KwOut{DFS code sequence $\mathcal{C}$.}

$\mathcal{C} \leftarrow []$,\quad $\textit{visited} \leftarrow \emptyset$,\quad $\textit{order} \leftarrow \{\}$\;

\SetKwFunction{FDFSVisit}{DFSVisit}
\SetKwProg{Fn}{Function}{:}{}
\Fn{\FDFSVisit{$v$, $\textit{parent}$}}{
    $\textit{visited} \leftarrow \textit{visited} \cup \{v\}$\;
    $\textit{order}[v] \leftarrow |\textit{order}|$\;
    \ForEach{$u \in \mathcal{N}(v)$}{
        \eIf{$u \notin \textit{visited}$}{
            \tcp{Forward edge}
            $\mathcal{C}.\text{append}\bigl(\langle \textit{order}[v],\ |\textit{order}|,\ \ell_v,\ \ell_u,\ \ell_{vu},\ \texttt{fwd} \rangle\bigr)$\;
            \FDFSVisit{$u$, $v$}\;
        }{
            \If{$u \neq \textit{parent}$}{
                \tcp{Back edge}
                $\mathcal{C}.\text{append}\bigl(\langle \textit{order}[v],\ \textit{order}[u],\ \ell_v,\ \ell_u,\ \ell_{vu},\ \texttt{bck} \rangle\bigr)$\;
            }
        }
    }
}

\FDFSVisit{$v_0$, $\emptyset$}\;
\Return $\mathcal{C}$\;
\end{algorithm}

\section{Calibrated metrics for DGMG serialization}
\label{app:calibrated-dgmg}

Table~\ref{tab:calibrated-dgmg} reports the bootstrap-calibrated
metrics for the DGMG action-sequence serialization, the DGMG
counterpart of Table~\ref{tab:calibrated} in the main text. The same
qualitative pattern holds: in $3$ of $5$ datasets the model is at or
below the bootstrap median ($\Delta\rho \le 0$ or $z \le 0$). The
exceptions are PROTEINS--DGMG and ENZYMES--DGMG, which sit at the
$100$th and $80$th percentile respectively ($z = +1.49$ and
$z = +0.69$). For PROTEINS the bootstrap is the narrowest of the five
TU datasets (the corpus is large and bootstrap-resampled $\rho$
already saturates near $0.98$), so a $z = +1.49$ deviation translates
to $\Delta\rho = +0.009$ in absolute terms---inside the decoding-seed
variability documented in Appendix~\ref{app:decoding-ablation}---rather
than evidence of meaningfully super-bootstrap structural learning. The
ENZYMES--DGMG entry uses the no-cap gBolt rerun (Limitations~(g));
under the default cap it sits far below the bootstrap
($\Delta\rho = -0.024$, $z = -4.30$, $0$th percentile), so the
above-median appearance here is itself a consequence of removing the
asymmetric pruning, not of stronger structural learning.

\begin{table}[h]
\centering
\caption{Calibrated metrics for DGMG serialization
  (counterpart of Table~\ref{tab:calibrated}; single training seed = 42;
  ENZYMES values are taken from the no-cap gBolt rerun
  to match the no-cap audit in Limitations~(g)).}
\label{tab:calibrated-dgmg}
\begin{tabular}{lcccc}
\toprule
Dataset & $\rho$ & $\Delta\rho$ & $z(\rho)$ & Pctile \\
\midrule
MUTAG    & 0.989 & $-$0.001 & $-$0.30 & 30\% \\
PTC\_MR  & 0.986 & $+$0.008 & $+$0.89 & 70\% \\
ENZYMES  & 0.993 & $+$0.001 & $+$0.69 & 80\% \\
PROTEINS & 0.989 & $+$0.009 & $+$1.49 & 100\% \\
NCI1     & 0.986 & $-$0.004 & $-$1.57 & 10\% \\
\bottomrule
\end{tabular}
\end{table}

\section{NLL hit/miss analysis (full)}
\label{app:nll}

For each unique training sequence we compute the average per-token
negative log-likelihood (NLL) under the trained model, classify
sequences as \emph{hits} (reproduced in the generated set) or
\emph{misses} (not reproduced), and test whether the NLL distributions
differ via the Mann--Whitney $U$ test.

\begin{table}[h]
\centering
\caption{NLL analysis (DFS canonical, single training seed = 42). Hit: training sequences reproduced
  in generation. Miss: not reproduced. MW: Mann--Whitney $U$ test
  $p$-value. *: $p < 0.05$; **: $p < 0.01$; ***: $p < 0.001$.}
\label{tab:nll-app}
\begin{tabular}{lrrccc}
\toprule
Dataset & Hit & Miss & NLL$_{\text{hit}}$ & NLL$_{\text{miss}}$ & MW $p$ \\
\midrule
MUTAG    &  169 &   19 & 2.81 & 2.84 & 0.040* \\
PTC\_MR  &  287 &   51 & 5.40 & 4.98 & $7.3\!\times\!10^{-4}$*** \\
ENZYMES  &  429 &  162 & 3.87 & 4.21 & $2.7\!\times\!10^{-6}$*** \\
PROTEINS &  539 &  533 & 3.91 & 3.74 & 0.060 \\
NCI1     &  883 & 3064 & 5.02 & 5.31 & $2.6\!\times\!10^{-5}$*** \\
\bottomrule
\end{tabular}
\end{table}

For ENZYMES and NCI1, hits have significantly lower NLL than misses
($p < 10^{-5}$), confirming that the model assigns higher probability
to sequences it memorizes. PTC\_MR shows an inverted pattern: misses
have lower NLL than hits ($p < 0.001$), driven by 98.3\% precision
where the few misses are short, high-probability sequences not
selected during sampling. PROTEINS shows no significant difference
($p = 0.06$) with roughly equal hit and miss counts, suggesting a
more uniform probability landscape. The MUTAG entry is reported here
at training seed $42$; the MUTAG multi-seed aggregate
(Appendix~\ref{app:multiseed}, Table~\ref{tab:mutag-multiseed-nll})
gives Mann--Whitney $p = 0.0753 \pm 0.0113$ across two completed
training seeds, consistent with the single-seed reading and with a
memorization-dominated regime in which hit and miss training
sequences carry essentially the same NLL.

\section{Decoding factor decomposition (full)}
\label{app:factor-decomp}

We decompose metric variance into checkpoint (learning) and decoding
(inference) effects via a one-way variance decomposition,
$\eta^2_{\text{ckpt}} = \mathrm{Var}_{\text{ckpt}} /
(\mathrm{Var}_{\text{ckpt}} + \mathrm{Var}_{\text{decode}})$.

\begin{table}[h]
\centering
\caption{Factor effect ratios ($\eta^2$; single training seed = 42). Values indicate the fraction
  of metric variance attributable to each factor.}
\label{tab:factor-app}
\begin{tabular}{lcccccc}
\toprule
& \multicolumn{2}{c}{Spearman $\rho$} &
  \multicolumn{2}{c}{Missing Mass} &
  \multicolumn{2}{c}{Novel Mass} \\
\cmidrule(lr){2-3} \cmidrule(lr){4-5} \cmidrule(lr){6-7}
Dataset & Ckpt & Decode & Ckpt & Decode & Ckpt & Decode \\
\midrule
MUTAG    & 0.87 & 0.13 & 0.99 & 0.01 & 0.60 & 0.40 \\
PTC\_MR  & 0.29 & 0.71 & 0.50 & 0.50 & 0.27 & 0.73 \\
ENZYMES  & 0.37 & 0.63 & 0.89 & 0.11 & 0.01 & 0.99 \\
NCI1     & 0.43 & 0.57 & 0.55 & 0.45 & 0.45 & 0.55 \\
PROTEINS & 0.75 & 0.25 & 0.76 & 0.24 & 0.13 & 0.87 \\
\bottomrule
\end{tabular}
\end{table}

For PTC\_MR, ENZYMES, and NCI1, decoding configuration explains
$57$--$71\%$ of Spearman variance: temperature and sampling strategy
have a larger effect on distributional alignment than training progress
on these datasets. Missing-mass variance is dominated by training
progress in most datasets ($>\!76\%$), reflecting that longer training
reduces pattern omission regardless of decoding strategy. Novelty is
almost entirely determined by decoding ($73$--$99\%$ for PTC\_MR,
ENZYMES, PROTEINS), confirming that exploration of the generation
space is an inference-time phenomenon.

\paragraph{Implication for calibrated evaluation.}
The combination of these $\eta^2$ values and the per-config sweep
in Table~\ref{tab:full-sweep} carries a methodological consequence
for the calibrated diagnostic. The intra-dataset spread induced by
decoding alone is large enough to flip the verdict that the
bootstrap-calibrated comparison of \S\ref{sec:results} (Calibrated baseline comparison)
returns: e.g.\ on MUTAG the same checkpoint moves from $\rho = 0.989$
under \texttt{topk\_only} to $\rho = 0.639$ under \texttt{tp\_high},
spanning the entire bootstrap percentile range. A single decoding
choice therefore underdetermines the memorization--alignment
position; we report \texttt{both\_default} in the main text for
comparability with prior work and document the full sweep here so
that the diagnostic can be re-evaluated under any other choice.

\section{Aggregate whole-graph statistic MMD (full)}
\label{app:mmd}

Table~\ref{tab:mmd-app} reports the four classical whole-graph statistic MMDs
(degree, clustering, orbit, spectral) under DFS canonical
\texttt{both\_default} decoding for every TU benchmark. All values
sit in $\le 0.6 \times 10^{-3}$, i.e.\ below standard reporting
thresholds in graph-generation papers; the discussion below explains
why this is uninformative in the present regime.

\begin{table}[h]
\centering
\caption{Whole-graph statistic MMD metrics (DFS canonical,
  \texttt{both\_default}, single training seed = 42).
  All values $\times 10^3$.}
\label{tab:mmd-app}
\begin{tabular}{lcccc}
\toprule
Dataset & Degree & Clustering & Orbit & Spectral \\
\midrule
MUTAG    & 0.15 & 0.38 & 0.22 & 0.41 \\
PTC\_MR  & 0.04 & 0.01 & 0.16 & 0.36 \\
ENZYMES  & 0.28 & 0.52 & 0.35 & 0.48 \\
PROTEINS & 0.24 & 0.45 & 0.30 & 0.52 \\
NCI1     & 0.08 & 0.12 & 0.10 & 0.25 \\
\bottomrule
\end{tabular}
\end{table}

These low values are consistent with high memorization: if most
generated graphs are exact copies of training graphs, their aggregate
statistics will trivially match training statistics regardless of any
structural learning.

\paragraph{Cross-model MMD on TU.}
Table~\ref{tab:cross-mmd} extends the LLM-DFS numbers above with
the three structural baselines used in
\S\ref{sec:results} (Structural baselines), averaged across the five TU
benchmarks. Both unlabeled baselines (GraphRNN and DGMG-official)
pass the label-agnostic deg/orb checks but fail WL by
$\sim\!10^{2}\!\times$, exposing the labeled-structure failure that
label-free MMD hides.

\begin{table}[h]
\centering
\caption{Cross-model whole-graph statistic MMD on TU (mean over five
  benchmarks, single training seed = 42 for LLM-DFS/DGMG;
  per-dataset values in
  Appendix~\ref{app:dgmg-official}). deg/orb $\times 10^{-3}$, WL
  raw. GraphRNN/DGMG-official pass the label-agnostic deg/orb checks
  yet fail WL by $\sim 10^2 \times$, exposing structural failure that
  label-free MMD hides.}
\label{tab:cross-mmd}
\setlength{\tabcolsep}{6pt}
\begin{tabular}{lccc}
\toprule
Model & deg ($\times 10^{-3}$) & orb ($\times 10^{-3}$) & WL \\
\midrule
GraphRNN      & 0.11 & 0.08  & 0.753 \\
DGMG-official & 2.56 & 2.95  & 0.770 \\
DiGress       & 2.07 & 16.52 & 0.026 \\
LLM-DFS       & 0.16 & 0.23  & $<$0.001 \\
\bottomrule
\end{tabular}
\end{table}

\section{Robustness across serializations: WL-kernel and DGMG memorization}
\label{app:serialization}

\paragraph{Per-dataset graph statistics.}
Table~\ref{tab:datasets} reports the per-graph statistics of the
six benchmarks evaluated in this paper. The TU benchmarks span
$188$--$4{,}110$ graphs with average $|V| \in [14, 39]$, while
PCQM4Mv2 contributes $3.75$M graphs at average $|V| = 14.1$,
giving roughly four orders of magnitude of corpus-size separation
between TU and PCQM4Mv2 at comparable per-graph size.

\begin{table}[h]
\centering
\caption{Graph-level dataset statistics. PCQM4Mv2 numbers are reported
from OGB-LSC~\citep{hu2021ogblsc}; we use the canonical-DFS training
split, which contains $3{,}371{,}958$ sequences ($3{,}104{,}677$
unique).}
\label{tab:datasets}
\setlength{\tabcolsep}{5pt}
\begin{tabular}{lrrrrrr}
\toprule
Dataset  & $|G|$        & Avg $|V|$ & Avg $|E|$ & Max $|V|$ & Avg Deg & Density \\
\midrule
MUTAG    &        188   & 17.9  & 19.8 &   28 & 2.19 & 0.138 \\
PTC\_MR  &        344   & 14.3  & 14.7 &   64 & 1.98 & 0.214 \\
ENZYMES  &        600   & 32.6  & 62.1 &  126 & 3.86 & 0.160 \\
PROTEINS & 1{,}113      & 39.1  & 72.8 &  620 & 3.73 & 0.212 \\
NCI1     & 4{,}110      & 29.9  & 32.3 &  111 & 2.16 & 0.089 \\
PCQM4Mv2 & 3{,}746{,}620 & 14.1  & 14.6 &   53 & 2.07 & 0.147 \\
\bottomrule
\end{tabular}
\end{table}

\paragraph{Cross-serialization robustness.}
We compare canonical DFS code and DGMG action sequences using
format-agnostic WL-kernel metrics (Table~\ref{tab:kernel-app}). WL-MMD
values stay below $0.006$ and coverage above $0.92$ for every dataset
under both serializations, indicating that the two formats produce
graph distributions of comparable distance to the training set in this
metric. Using a whole-graph memorization detector, DGMG memorization is
at least as high as DFS in every dataset (Table~\ref{tab:dgmg-mem});
memorization is therefore not specific to DFS-code serialization.
The frequency-stratified pattern of \S\ref{sec:results} (Frequency-stratified analysis) is
also reproduced under DGMG (Table~\ref{tab:strata-dgmg} below): we
treat this as robustness evidence rather than a
serialization-invariance claim, since the two canonicalizations
remain a small slice of the possible serializations and the contrast
is restricted to what these formats expose to gSpan.

\begin{table}[h]
\centering
\caption{Serialized-sequence statistics. ``Mean / Max~$L$'' is the
sequence length in tokens after subword tokenization, excluding
BOS/EOS. ``$|V_{\mathrm{tok}}|$'' is the vocabulary size of the
serialization-specific tokenizer, excluding the four special tokens
\texttt{<s>}, \texttt{<unk>}, \texttt{<pad>}, \texttt{<mask>}.
PCQM4Mv2 length statistics are estimated on a uniformly random
subsample of $5\!\times\!10^4$ training graphs (vocabulary sizes are
exact). Canonical DFS encodes one event ($v$ / $e$) per token over a
combinatorial node$\times$edge-label vocabulary, while DGMG factorizes
each event into single-symbol actions (\texttt{ADD\_NODE},
\texttt{ADD\_EDGE}, \texttt{TARGET}, \dots) over a tiny vocabulary;
this is why DGMG sequences are several times longer but use far fewer
unique tokens.}
\label{tab:seq-stats-app}
\setlength{\tabcolsep}{6pt}
\begin{tabular}{lrrrrrr}
\toprule
         & \multicolumn{3}{c}{Canonical DFS} & \multicolumn{3}{c}{DGMG} \\
\cmidrule(lr){2-4}\cmidrule(lr){5-7}
Dataset  & Mean $L$ & Max $L$ & $|V_{\mathrm{tok}}|$
         & Mean $L$ & Max $L$ & $|V_{\mathrm{tok}}|$ \\
\midrule
MUTAG    &  19.8 &   33 &     382 &  78.4 &   125 &   39 \\
PTC\_MR  &  14.7 &   71 & 1{,}257 &  61.0 &   273 &   84 \\
ENZYMES  &  60.8 &  149 & 5{,}737 & 192.5 &   545 &  129 \\
PROTEINS &  69.1 & 1{,}049 & 18{,}133 & 226.7 & 3{,}341 &  568 \\
NCI1     &  31.2 &  108 & 6{,}766 & 127.3 &   463 &  148 \\
PCQM4Mv2 &  16.0 &   49 & 12{,}293 &  60.4 &   171 &   83 \\
\bottomrule
\end{tabular}
\end{table}

\begin{table}[h]
\centering
\caption{WL kernel metrics by serialization format (single training seed = 42). WL-MMD: Weisfeiler--Lehman
  maximum mean discrepancy ($\downarrow$). Cov: fraction of training graphs
  ``covered'' by the generated set ($\uparrow$). Div: mean pairwise
  dissimilarity among generated graphs ($\uparrow$).}
\label{tab:kernel-app}
\begin{tabular}{l cc cc cc}
\toprule
& \multicolumn{2}{c}{WL-MMD ($\downarrow$)} &
  \multicolumn{2}{c}{Coverage ($\uparrow$)} &
  \multicolumn{2}{c}{Diversity ($\uparrow$)} \\
\cmidrule(lr){2-3} \cmidrule(lr){4-5} \cmidrule(lr){6-7}
Dataset & DFS & DGMG & DFS & DGMG & DFS & DGMG \\
\midrule
MUTAG    & $-$0.001 & $-$0.001 & 0.995 & 0.989 & 0.720 & 0.712 \\
ENZYMES  &    0.002 &    0.001 & 0.960 & 0.970 & 0.630 & 0.628 \\
NCI1     &    0.000 &    0.000 & 0.926 & 0.942 & 0.698 & 0.700 \\
PROTEINS &    0.001 &    0.006 & 0.973 & 0.946 & 0.544 & 0.570 \\
PTC\_MR  & $-$0.001 & $-$0.002 & 0.980 & 0.986 & 0.452 & 0.427 \\
\bottomrule
\end{tabular}
\end{table}

\begin{table}[h]
\centering
\caption{DGMG memorization rates under whole-graph matching (single training seed = 42). DGMG
  Precision: fraction of generated DGMG sequences whose decoded graph
  is isomorphic to a training graph. DFS Precision is the
  $\mathrm{EM}$ of Eq.~\eqref{eq:em} reproduced from
  Table~\ref{tab:main-results} for comparison.}
\label{tab:dgmg-mem}
\begin{tabular}{lcc}
\toprule
Dataset & DGMG Precision & DFS Precision \\
\midrule
MUTAG    & 100.0\% & 82.8\% \\
PTC\_MR  &  99.4\% & 98.3\% \\
ENZYMES  &  98.9\% & 97.5\% \\
PROTEINS &  91.8\% & 89.1\% \\
NCI1     & 100.0\% & 100.0\% \\
\bottomrule
\end{tabular}
\end{table}

\section{Frequency-Stratified Diagnostics: Full Results}
\label{app:strata}

This appendix consolidates all per-stratum numerical results that the
main-text figure-only frequency-stratified analysis (\S\ref{sec:results}) draws on. We
present them in four subsections that share a common structure: the
DFS-canonical Head/Torso/Tail breakdown
(\S\ref{app:dfs-strata}), its DGMG-serialization counterpart
(\S\ref{app:dgmg-strata}), the Head--Tail gap and Tail-coverage rate
derived from the DFS table (\S\ref{app:tail-gap}), and the
$n=5$ decoding-seed ablation that supplies the $\pm$std values of
both stratified tables (\S\ref{app:decoding-ablation}).

\subsection{DFS canonical}
\label{app:dfs-strata}

Table~\ref{tab:strata} gives the numeric DFS-canonical
Head/Torso/Tail breakdown underlying Figure~\ref{fig:strata-bars}.
The table is here rather than in the main text so that the body text
can focus on the visual frequency-stratified pattern while preserving
exact per-dataset values for comparison and reproducibility.

\begin{table}[h]
\centering
\caption{Frequency-stratified metrics (DFS canonical, single training
  seed = 42), mean $\pm$ std over $n=5$ decoding-seed reruns. Missing:
  fraction of stratum support mass absent in generated set. ``$\rho$''
  is the intersection Spearman; trainkeys-$\rho$ (penalizing omitted
  patterns) is reported alongside in Appendix~\ref{app:decoding-ablation}.}
\label{tab:strata}
\begin{tabular}{ll rrr rrr}
\toprule
& & \multicolumn{3}{c}{Missing Mass} & \multicolumn{3}{c}{Spearman $\rho$} \\
\cmidrule(lr){3-5} \cmidrule(lr){6-8}
Dataset & & Head & Torso & Tail & Head & Torso & Tail \\
\midrule
MUTAG    && 0.000 & 0.003 & 0.135$\pm$0.099 & 0.911 & 0.986 & 0.420$\pm$0.296 \\
PTC\_MR  && 0.000 & 0.008 & 0.220$\pm$0.145 & 0.998 & 0.975 & 0.562$\pm$0.456 \\
ENZYMES  && 0.000 & 0.001 & 0.132$\pm$0.076 & 0.940 & 0.981 & 0.283$\pm$0.059 \\
PROTEINS && 0.000 & 0.002 & 0.159$\pm$0.056 & 0.959 & 0.974 & 0.258$\pm$0.040 \\
NCI1     && 0.000 & 0.001 & 0.199$\pm$0.075 & 0.988 & 0.987 & 0.350$\pm$0.111 \\
\bottomrule
\end{tabular}
\end{table}

\subsection{DGMG action sequences}
\label{app:dgmg-strata}

Table~\ref{tab:strata-dgmg} reproduces the Head/Torso/Tail breakdown
of the main-text frequency-stratified analysis (\S\ref{sec:results}) under DGMG action-sequence
serialization, mirroring the decoding-seed averaging used for
DFS canonical (mean $\pm$ std over $n=5$ reruns of the same
LLaMA-\textsc{small} checkpoint). The Head--Tail asymmetry is
qualitatively the same as under DFS canonical
(Table~\ref{tab:strata}): Head and Torso $\rho$ exceed $0.91$ and
$0.97$ on every dataset, while seed-mean Tail $\rho$ ranges
$0.23$--$0.41$ with Tail missing mass between $15\%$ and $39\%$.
Decoding-stage variance is again concentrated on Tail (Tail $\rho$
std up to $0.46$ on MUTAG/PTC\_MR; Tail miss std up to $0.18$),
consistent with the canonical observation that 1024 samples are
too few to densely cover the bottom-decile pattern set on
small TU benchmarks.

\begin{table}[h]
\centering
\caption{Frequency-stratified metrics (DGMG serialization, single
  training seed = 42), mean $\pm$ std over $n=5$ decoding-seed reruns.
  ``$\rho$'' is the intersection Spearman; trainkeys-$\rho$ in
  Appendix~\ref{app:decoding-ablation}. ENZYMES values retain the
  default cap=500K; the no-cap DGMG multi-seed audit is deferred
  to the camera-ready---see Limitations~(g).}
\label{tab:strata-dgmg}
\begin{tabular}{ll rrr rrr}
\toprule
& & \multicolumn{3}{c}{Missing Mass} & \multicolumn{3}{c}{Spearman $\rho$} \\
\cmidrule(lr){3-5} \cmidrule(lr){6-8}
Dataset & & Head & Torso & Tail & Head & Torso & Tail \\
\midrule
MUTAG    && 0.000 & 0.003 & 0.389$\pm$0.181 & 0.913 & 0.979 & 0.231$\pm$0.462 \\
PTC\_MR  && 0.000 & 0.005 & 0.177$\pm$0.119 & 0.999 & 0.975 & 0.406$\pm$0.328 \\
ENZYMES  && 0.039 & 0.006 & 0.150$\pm$0.105 & 0.981 & 0.990 & 0.379$\pm$0.049 \\
PROTEINS && 0.000 & 0.005 & 0.261$\pm$0.102 & 0.928 & 0.970 & 0.265$\pm$0.028 \\
NCI1     && 0.000 & 0.001 & 0.163$\pm$0.093 & 0.958 & 0.978 & 0.296$\pm$0.031 \\
\bottomrule
\end{tabular}
\end{table}

DGMG shows a similar Head--Tail gap to DFS canonical, providing
robustness evidence across the two serializations studied. Comparing
seed-averaged Tail $\rho$ side-by-side under decoding-seed averaging
($n=5$ each), the per-dataset ordering is mixed and within
overlapping decoding-seed CIs in three of five cases:
DGMG seed-mean is higher on ENZYMES ($0.38$ vs.\ DFS $0.28$) and
PROTEINS ($0.27$ vs.\ $0.26$, essentially equal),
roughly matched on PTC\_MR (DGMG $0.41 \pm 0.33$ vs.\
DFS $0.56 \pm 0.46$, both with very wide CIs driven by the
$\sim$$15$ shared tail patterns), and lower on MUTAG ($0.23$ vs.\
$0.42$) and NCI1 ($0.30$ vs.\ $0.35$). The qualitative Head--Tail
gap is preserved under both formats---supporting expectation (E4)
of cross-serialization robustness---while the per-dataset reordering
appears to reflect serialization-specific tail-coverage biases plus
substantial decoding-stage noise rather than a systematic advantage
of either tokenization.

\subsection{Head--Tail gap and Tail Coverage Rate}
\label{app:tail-gap}

\paragraph{Head--Tail gap.}
Table~\ref{tab:tail-gap} summarizes the Head--Tail gap
$\Delta\rho = \rho_{\text{head}} - \rho_{\text{tail}}$ alongside
the stratum missing-mass values, with $\rho_{\text{tail}}$ averaged
over $n=5$ decoding-seed reruns. The rank-correlation gap is large
on every TU dataset ($\Delta\rho = 0.44$--$0.70$): the model preserves
rank ordering for high-frequency patterns but loses it sharply on
rare patterns, even when only the patterns it does generate are
ranked. The missing-mass deficit is concentrated in the Tail
stratum ($0.132$--$0.220$ vs.\ $\le 0.008$ for Head/Torso),
confirming that the Head--Tail asymmetry is driven by Tail
under-coverage rather than uniformly distributed errors.

\begin{table}[h]
\centering
\caption{Head--Tail gap analysis derived from Table~\ref{tab:strata}
  (single training seed = 42). $\Delta\rho$: Head $\rho$ minus Tail
  $\rho$; larger values indicate greater frequency dependence.}
\label{tab:tail-gap}
\begin{tabular}{lccccc}
\toprule
Dataset & $\rho_{\text{head}}$ & $\rho_{\text{tail}}$
  & $\Delta\rho$ & Miss$_{\text{head}}$ & Miss$_{\text{tail}}$ \\
\midrule
MUTAG    & 0.911 & 0.420 & 0.491 & 0.000 & 0.135 \\
PTC\_MR  & 0.998 & 0.562 & 0.436 & 0.000 & 0.220 \\
ENZYMES  & 0.940 & 0.283 & 0.657 & 0.000 & 0.132 \\
PROTEINS & 0.959 & 0.258 & 0.701 & 0.000 & 0.159 \\
NCI1     & 0.988 & 0.350 & 0.638 & 0.000 & 0.199 \\
\bottomrule
\end{tabular}
\end{table}

\paragraph{Tail Coverage Rate (TCR).}
$\text{TCR} = |\mathcal{P}_{\text{tail}}^{\text{shared}}|
\,/\, |\mathcal{P}_{\text{tail}}^{\text{train}}|$ is the fraction of
tail patterns that appear at least once in the generated set.
Across the TU runs, TCR remains below one in every
dataset--serialization condition, confirming that the Tail deficit
includes complete omission of rare patterns rather than only support
underestimation.

\subsection{Decoding-seed ablation}
\label{app:decoding-ablation}

The TU strata estimates of Table~\ref{tab:strata} are sensitive to
the random sampling at decoding time. For each TU dataset and the
LLaMA-\textsc{small} (132M) checkpoint we re-evaluated the
\texttt{both\_default} configuration ($T=1.0$, top-$p=0.95$,
top-$k=50$, $1024$ samples) under five fixed
\texttt{torch.manual\_seed} values $s \in \{0, 1, 2, 3, 4\}$.
Generation, gSpan mining, and frequency-stratified analysis were
otherwise identical to the main pipeline; bootstrap, kernel, MMD,
and NLL stages were skipped to keep the ablation tractable on CPU.

Table~\ref{tab:decoding-ablation} reports the per-stratum mean and
sample standard deviation across the five seeds. The Head and Torso
strata are highly stable (std $\le 0.014$ for $\rho$, $\le 0.019$ for
missing mass on every dataset). Tail estimates are markedly noisier:
intersection $\rho$ swings by std $0.040$--$0.456$ across reruns of
the same checkpoint, and Tail missing mass by $0.056$--$0.145$. The
trainkeys variant (which fills omitted patterns with zero before
ranking) is consistently more stable than intersection: standard
deviations are typically $1.5$--$5\times$ smaller, and the gap
between the two metrics is largest on datasets with few tail
patterns (PTC\_MR has only $12$--$20$ shared tail patterns per draw,
making intersection-based rank correlation highly contingent on
which exact patterns are generated). We therefore recommend reading
the trainkeys $\rho$ column as the more reliable Tail-rank summary.

\begin{table}[h]
\centering
\caption{Per-stratum decoding-seed ablation ($n=5$ reruns of the same
  LLaMA-\textsc{small} checkpoint with
  \texttt{torch.manual\_seed} $\in \{0,1,2,3,4\}$). Top: DFS
  canonical. Bottom: DGMG action sequences (with
  \texttt{max\_length}=$512$). Values are mean $\pm$ sample std.
  ``$\rho_\cap$'' is intersection Spearman (only patterns shared
  between train and generation are ranked); ``$\rho_{tk}$'' is
  trainkeys Spearman (full train support, with generated$=0$ for
  missing patterns). DFS ENZYMES uses the no-cap gBolt rerun;
  DGMG ENZYMES retains the default cap=500K because the no-cap
  DGMG multi-seed rerun is deferred to the camera-ready---see
  Limitations~(g).}
\label{tab:decoding-ablation}
\setlength{\tabcolsep}{4pt}
\begin{tabular}{l rrr rr}
\toprule
Dataset & Tail miss & Tail $\rho_\cap$ & Tail $\rho_{tk}$
        & Head $\rho_\cap$ & Torso $\rho_\cap$ \\
\midrule
\multicolumn{6}{l}{\emph{DFS canonical}} \\
MUTAG    & 0.135$\pm$0.099 & 0.420$\pm$0.296 & 0.609$\pm$0.108 & 0.911 & 0.986 \\
PTC\_MR  & 0.220$\pm$0.145 & 0.562$\pm$0.456 & 0.560$\pm$0.418 & 0.998 & 0.975 \\
ENZYMES  & 0.132$\pm$0.076 & 0.283$\pm$0.059 & 0.325$\pm$0.049 & 0.940 & 0.981 \\
PROTEINS & 0.159$\pm$0.056 & 0.258$\pm$0.040 & 0.311$\pm$0.038 & 0.959 & 0.974 \\
NCI1     & 0.199$\pm$0.075 & 0.350$\pm$0.111 & 0.380$\pm$0.094 & 0.988 & 0.987 \\
\midrule
\multicolumn{6}{l}{\emph{DGMG action sequence}} \\
MUTAG    & 0.389$\pm$0.181 & 0.231$\pm$0.462 & 0.457$\pm$0.145 & 0.913 & 0.979 \\
PTC\_MR  & 0.177$\pm$0.119 & 0.406$\pm$0.328 & 0.412$\pm$0.374 & 0.999 & 0.975 \\
ENZYMES  & 0.150$\pm$0.105 & 0.379$\pm$0.049 & 0.430$\pm$0.024 & 0.981 & 0.990 \\
PROTEINS & 0.261$\pm$0.102 & 0.265$\pm$0.028 & 0.321$\pm$0.030 & 0.928 & 0.970 \\
NCI1     & 0.163$\pm$0.093 & 0.296$\pm$0.031 & 0.310$\pm$0.044 & 0.958 & 0.978 \\
\bottomrule
\end{tabular}
\end{table}

\paragraph{Implication for single-draw reporting.}
Earlier single-draw values (e.g.\ MUTAG Tail miss $=0.501$,
$\rho_\cap=0.820$ from one decoding lottery in the original
seed-42 eval) sit outside the $n=5$ mean by $>3\sigma$ on Tail miss
and indicate that 1024 samples are not sufficient to densely cover
the bottom-decile pattern set on small TU benchmarks. The
seed-averaged values reported in Table~\ref{tab:strata} attenuate
this by reporting mean$\pm$std, but the residual standard deviations
remain substantial---especially on PTC\_MR, where the tail support
set is so sparse ($\sim$15 patterns) that rank correlation is
intrinsically contingent on the realized draw. We treat the Tail
$\rho$ column on TU as a noisy diagnostic and read it together with
Tail missing mass, which is more stable. The same caveat does not
apply to PCQM4Mv2 (Appendix~\ref{app:pcqm}), where the
$\sim$$10{,}000$-pattern reference set absorbs most of the
sample-level noise and the per-seed mean$\pm$half-width over the
$5$-subsample bootstrap is included in the table.

\paragraph{Cross-training-seed extension.}
The same five-decoding-seed ablation has also been run on the
seed-1337 training checkpoint for every TU dataset and
serialization, giving a full $2 \times 5$ training-seed $\times$
decoding-seed grid per cell ($100$ evaluations total). Per-cell
training-seed shifts on the decoding-mean are reported in
Table~\ref{tab:multiseed-tail-all} and are
$|\text{mean}_{42} - \text{mean}_{1337}| \le 0.05$ on Tail miss and
$\le 0.10$ on Tail $\rho_\cap$, all inside the corresponding
decoding-seed std bands above. We retain the seed-42 row in
Table~\ref{tab:decoding-ablation} as the reference because it
matches the checkpoint that supplies the main-text Tables 4 and 22
and Figure~\ref{fig:strata-bars}; the seed-1337 counterpart is
the right column of Table~\ref{tab:multiseed-tail-all}.

\section{gSpan Minimum Support Sensitivity}
\label{app:minsup}

Table~\ref{tab:minsup} contrasts $\sigma = 0.1$ (the main-text
default) with the more permissive $\sigma = 0.01$ on three TU
datasets (MUTAG, ENZYMES, NCI1). Lowering $\sigma$ exposes more
low-support patterns but leaves the qualitative conclusions
(strong head alignment, degraded tail) intact, as discussed below.

\begin{table}[h]
\centering
\caption{Effect of gSpan minimum support ratio on key metrics
  (DFS canonical, single training seed = 42; ENZYMES $\sigma=0.1$
  values use the no-cap gBolt rerun, while $\sigma=0.01$ retains
  the default cap=500K because the no-cap mining at $\sigma=0.01$
  exceeds the projection-memory budget on ENZYMES---see
  Limitations~(g)).}
\label{tab:minsup}
\begin{tabular}{l cc cc}
\toprule
& \multicolumn{2}{c}{Spearman $\rho$} & \multicolumn{2}{c}{JSD} \\
\cmidrule(lr){2-3} \cmidrule(lr){4-5}
Dataset & $\sigma\!=\!0.1$ & $\sigma\!=\!0.01$ &
  $\sigma\!=\!0.1$ & $\sigma\!=\!0.01$ \\
\midrule
MUTAG   & 0.976 & 0.888 & 0.035 & 0.038 \\
ENZYMES & 0.987 & 0.922 & 0.014 & 0.026 \\
NCI1    & 0.990 & 0.953 & 0.010 & 0.011 \\
\bottomrule
\end{tabular}
\end{table}

Lowering the minimum support from 0.1 to 0.01 reduces Spearman $\rho$
(by 0.02--0.09) and marginally increases JSD.
This is expected: a lower threshold exposes more tail patterns, where
the model performs poorly.
The qualitative conclusion---high head alignment, degraded tail
performance---remains unchanged.

\section{Per-Pattern Support Scatter (All Five TU Datasets)}
\label{app:support-scatter}

Figure~\ref{fig:support-scatter} renders the per-pattern alignment
underlying the aggregate Spearman~$\rho$ for every TU benchmark using
the canonical-DFS LLM final eval. Across all five datasets, shared
patterns concentrate tightly along the $y = x$ diagonal across several
decades of support; train-only and gen-only stripes are thinner for
the larger corpora (NCI1, PROTEINS), where the model has more
opportunities to recover low-support training motifs.

\begin{figure}[h]
  \centering
  \includegraphics[width=\linewidth]{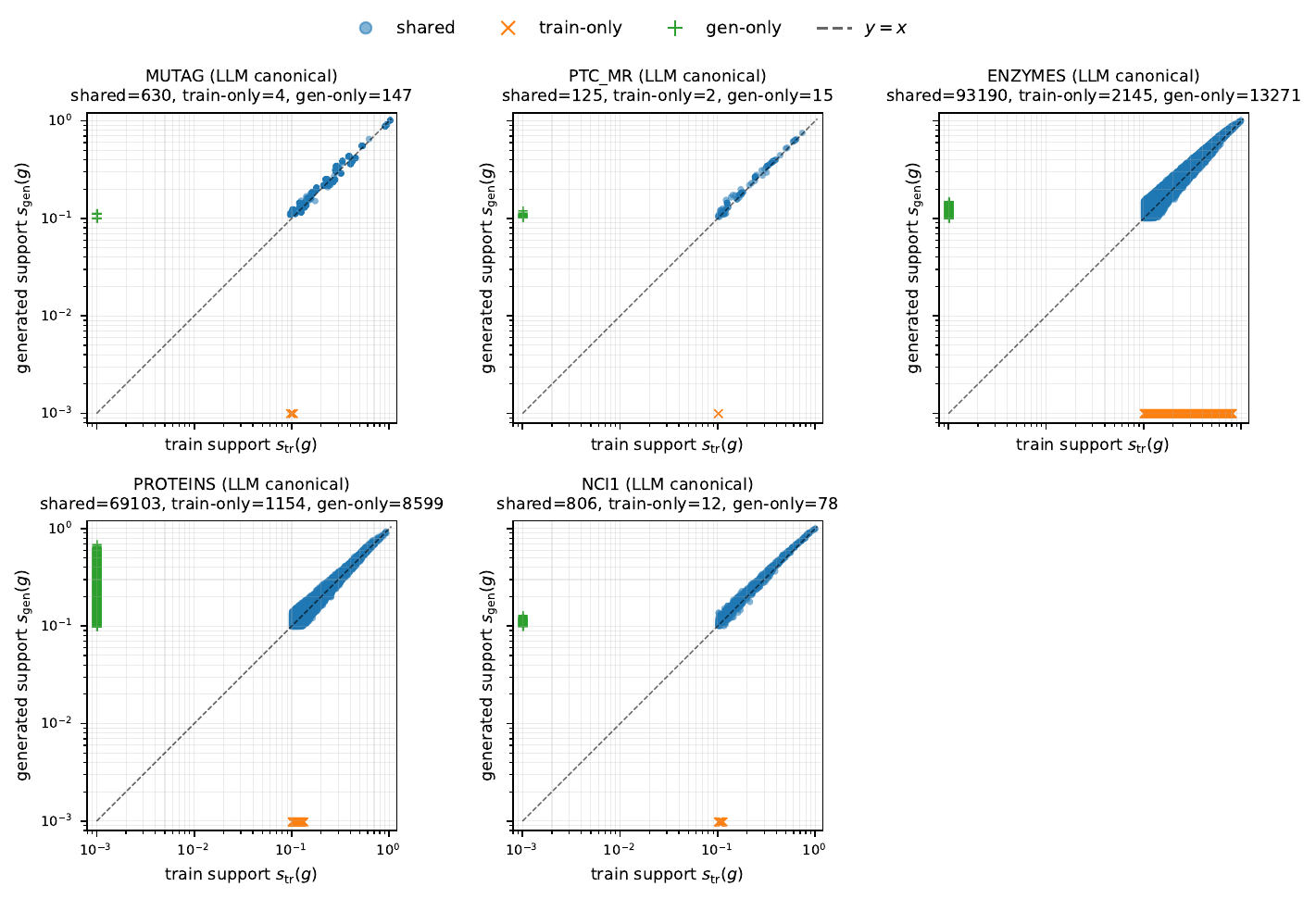}
  \caption{Per-pattern train-vs-generated support on log--log axes
    for all five TU benchmarks. Blue dots: patterns shared by
    $\mathcal{P}_{\mathrm{tr}}$ and $\mathcal{P}_{\mathrm{gen}}$.
    Orange $\times$ (placed at $s_{\mathrm{gen}} = 10^{-3}$):
    train-only patterns. Green $+$ (at $s_{\mathrm{tr}} = 10^{-3}$):
    gen-only patterns. The dashed line is $y = x$. All panels use the
    canonical-DFS LLM final eval (\texttt{both\_default} decoding);
    counts are reported per panel.}
  \label{fig:support-scatter}
\end{figure}

The PCQM4Mv2 capacity sweep yields the same diagnostic at large scale
(Figure~\ref{fig:pcqm-support-scatter}). Each panel renders subsample~0
of the canonical-DFS LLM, with $1024$ generated graphs evaluated
against a fixed $10{,}000$-graph training subsample
($\sigma = 0.1$, $\mathrm{upper} = 8$); supports are normalized by the
respective sample sizes so the diagonal $y = x$ is comparable across
panels. Shared-pattern counts are stable across capacities
($244$--$250$) and the dominant deviation from the diagonal is the
\emph{gen-only} stripe ($71$--$84$ patterns), consistent with the
Tail-stratum analysis of the main-text PCQM4Mv2 scaling (\S\ref{sec:results}): the LLM produces
unseen patterns of comparable mass to the omitted training tail.

\begin{figure}[h]
  \centering
  \includegraphics[width=\linewidth]{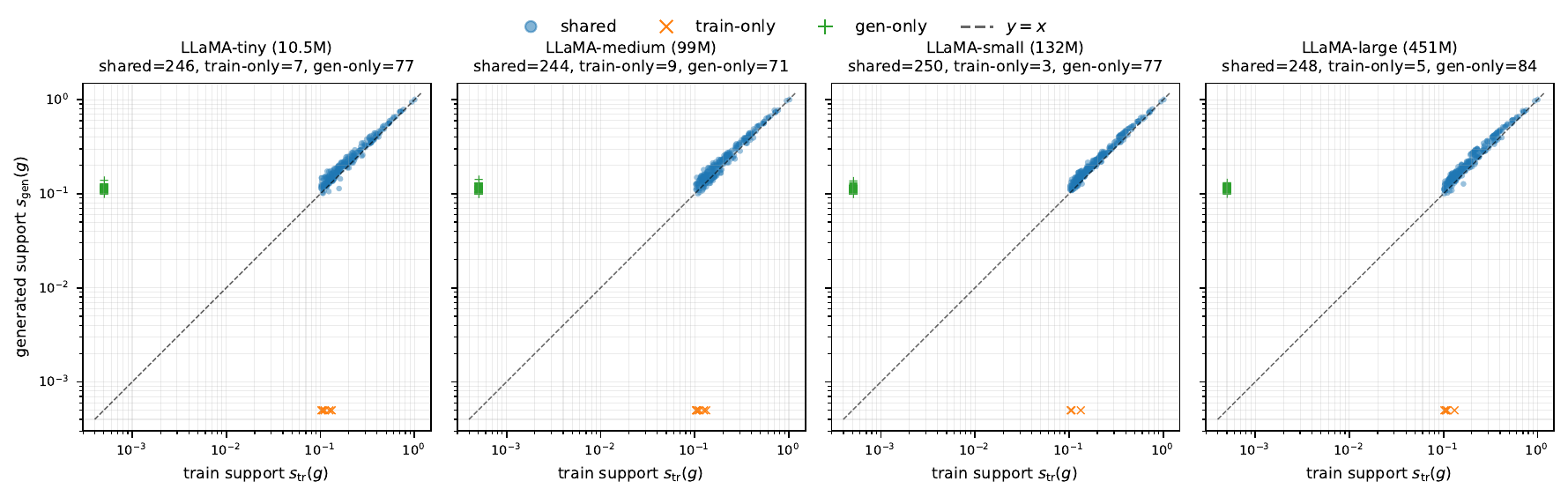}
  \caption{Per-pattern train-vs-generated support on log--log axes for
    PCQM4Mv2 across the four LLaMA capacities (tiny / medium / small /
    large) used in the main-text PCQM4Mv2 scaling (\S\ref{sec:results}). Markers as in
    Figure~\ref{fig:support-scatter}; supports are normalized by sample
    size ($10{,}000$ for train, $1024$ for generation) so the diagonal
    is comparable. Counts of shared / train-only / gen-only patterns
    are reported per panel.}
  \label{fig:pcqm-support-scatter}
\end{figure}

\section{Full Decoding Sweep Results}
\label{app:decode-sweep}

This section reports the per-dataset, per-config decoding sweep
backing the variance-decomposition statement in the main text.
Table~\ref{tab:decode-configs} lists the six decoding configurations
(\texttt{both\_default}, \texttt{nucleus\_only}, \texttt{topk\_only},
\texttt{pure\_sample}, \texttt{tp\_high}, \texttt{tp\_low}) that
sweep temperature, top-$p$, and top-$k$. Table~\ref{tab:full-sweep}
then reports every metric across all five TU datasets and all six
configurations under DFS canonical serialization. The dominant
takeaway is that $\rho$ and missing mass shift substantially with
the decoding choice (e.g.\ MUTAG $\rho$: $0.639$ at \texttt{tp\_high}
vs.\ $0.989$ at \texttt{topk\_only}), so single-config readings
overstate the model's intrinsic capability.

\begin{table}[h]
\centering
\caption{Decoding configurations explored on every TU dataset
(\texttt{both\_default} is the default reported in the main text).}
\label{tab:decode-configs}
\begin{tabular}{llccc}
\toprule
Config & Description & Temp & top-$p$ & top-$k$ \\
\midrule
\texttt{both\_default} & Standard & 1.0 & 0.95 & 50 \\
\texttt{nucleus\_only} & Nucleus sampling & 1.0 & 0.95 & --- \\
\texttt{topk\_only}    & Top-$k$ sampling & 1.0 & --- & 50 \\
\texttt{pure\_sample}  & Unconstrained & 1.0 & --- & --- \\
\texttt{tp\_high}      & High temperature & 1.2 & 0.95 & 50 \\
\texttt{tp\_low}       & Low temperature  & 0.8 & 0.95 & 50 \\
\bottomrule
\end{tabular}
\end{table}

\begin{table}[h]
\centering
\caption{Complete decoding sweep results for all datasets
  (DFS canonical, single training seed = 42; MUTAG multi-seed
  aggregate in Table~\ref{tab:mutag-multiseed-decoding}; ENZYMES rows
  are reported under the default cap=500K for internal consistency
  across the six decoding configurations, while
  Table~\ref{tab:main-results} reports the no-cap value of
  \texttt{both\_default}---see Limitations~(g)).}
\label{tab:full-sweep}
\resizebox{\textwidth}{!}{
\begin{tabular}{llcccccc}
\toprule
Dataset & Config & Unique & Precision & Novelty & $\rho$ & JSD & Missing \\
\midrule
\multirow{6}{*}{MUTAG}
  & both\_default & 0.199 & 0.828 & 0.172 & 0.976 & 0.035 & 0.014 \\
  & nucleus\_only & 0.512 & 0.336 & 0.664 & 0.945 & 0.032 & 0.017 \\
  & topk\_only    & 0.377 & 0.477 & 0.523 & 0.989 & 0.021 & 0.028 \\
  & pure\_sample  & 0.723 & 0.219 & 0.781 & 0.883 & 0.056 & 0.040 \\
  & tp\_high      & 0.937 & 0.107 & 0.893 & 0.639 & 0.428 & 0.057 \\
  & tp\_low       & 0.128 & 0.947 & 0.053 & 0.938 & 0.049 & 0.016 \\
\midrule
\multirow{6}{*}{ENZYMES}
  & both\_default & 0.430 & 0.975 & 0.025 & 0.939 & 0.019 & 0.027 \\
  & nucleus\_only & 0.638 & 0.557 & 0.443 & 0.922 & 0.025 & 0.033 \\
  & topk\_only    & 0.646 & 0.635 & 0.365 & 0.965 & 0.015 & 0.032 \\
  & pure\_sample  & 0.868 & 0.306 & 0.694 & 0.881 & 0.045 & 0.086 \\
  & tp\_high      & 0.977 & 0.055 & 0.945 & 0.452 & 0.345 & 0.266 \\
  & tp\_low       & 0.227 & 1.000 & 0.000 & 0.784 & 0.072 & 0.063 \\
\midrule
\multirow{6}{*}{NCI1}
  & both\_default & 0.862 & 1.000 & 0.000 & 0.990 & 0.010 & 0.006 \\
  & nucleus\_only & 0.849 & 1.000 & 0.000 & 0.984 & 0.013 & 0.007 \\
  & topk\_only    & 0.873 & 0.959 & 0.041 & 0.991 & 0.007 & 0.013 \\
  & pure\_sample  & 0.871 & 0.970 & 0.030 & 0.991 & 0.009 & 0.011 \\
  & tp\_high      & 0.871 & 0.945 & 0.055 & 0.969 & 0.017 & 0.026 \\
  & tp\_low       & 0.690 & 1.000 & 0.000 & 0.915 & 0.040 & 0.025 \\
\midrule
\multirow{6}{*}{PROTEINS}
  & both\_default & 0.591 & 0.891 & 0.109 & 0.988 & 0.022 & 0.005 \\
  & nucleus\_only & 0.594 & 0.875 & 0.125 & 0.986 & 0.022 & 0.008 \\
  & topk\_only    & 0.697 & 0.761 & 0.239 & 0.982 & 0.026 & 0.035 \\
  & pure\_sample  & 0.760 & 0.631 & 0.369 & 0.983 & 0.025 & 0.044 \\
  & tp\_high      & 0.900 & 0.332 & 0.668 & 0.868 & 0.161 & 0.363 \\
  & tp\_low       & 0.300 & 0.850 & 0.150 & 0.789 & 0.084 & 0.058 \\
\midrule
\multirow{6}{*}{PTC\_MR}
  & both\_default & 0.285 & 0.983 & 0.017 & 0.950 & 0.063 & 0.007 \\
  & nucleus\_only & 0.548 & 0.512 & 0.488 & 0.935 & 0.030 & 0.058 \\
  & topk\_only    & 0.438 & 0.701 & 0.299 & 0.947 & 0.032 & 0.043 \\
  & pure\_sample  & 0.713 & 0.358 & 0.643 & 0.925 & 0.068 & 0.133 \\
  & tp\_high      & 0.938 & 0.123 & 0.877 & 0.668 & 0.222 & 0.268 \\
  & tp\_low       & 0.213 & 1.000 & 0.000 & 0.933 & 0.118 & 0.004 \\
\bottomrule
\end{tabular}
}
\end{table}

\section{Checkpoint Trajectories}
\label{app:trajectories}

This appendix combines the MUTAG illustrative table
(\S\ref{app:trajectory}) and the per-dataset trajectory figure
(\S\ref{app:trajectories-all}) that consolidates the same diagnostic
across all five TU benchmarks under both serializations.

\subsection{MUTAG DFS canonical (illustrative)}
\label{app:trajectory}

Table~\ref{tab:trajectory} traces the MUTAG DFS-canonical run across
six training checkpoints (steps $600$--$12000$). Spearman $\rho$
climbs $0.18 \to 0.55 \to 0.89 \to 0.98$ in lockstep with precision
($0.001 \to 0.031 \to 0.180 \to 0.645$), while novelty and the
unique rate decay symmetrically---i.e.\ distributional alignment is
acquired \emph{simultaneously} with memorization rather than after
it, which is the single-dataset evidence behind the
``synchronization'' claim of Appendix~\ref{app:learning-dynamics}.

\begin{table}[h]
\centering
\caption{MUTAG DFS canonical: metrics across training checkpoints
  (single training seed = 42).}
\label{tab:trajectory}
\begin{tabular}{rccccc}
\toprule
Step & Precision & Novelty & Unique & $\rho$ & JSD \\
\midrule
600   & 0.001 & 0.999 & 0.994 & 0.179 & 0.613 \\
1200  & 0.031 & 0.969 & 0.859 & 0.550 & 0.214 \\
2400  & 0.180 & 0.820 & 0.591 & 0.890 & 0.055 \\
6000  & 0.645 & 0.355 & 0.302 & 0.980 & 0.021 \\
9600  & 0.768 & 0.232 & 0.231 & 0.993 & 0.021 \\
12000 & 0.798 & 0.202 & 0.208 & 0.994 & 0.021 \\
\bottomrule
\end{tabular}
\end{table}

\subsection{Per-Dataset Checkpoint Trajectories}
\label{app:trajectories-all}

Figure~\ref{fig:trajectories-all} consolidates the per-checkpoint
Spearman~$\rho$ and Jensen--Shannon divergence trajectories for all
five TU datasets in both DFS canonical and DGMG action serializations
on a common grid. The MUTAG-DFS panel reproduces the trajectory of
Appendix~\ref{app:trajectory} for visual comparison; the remaining
nine combinations are new. Across all ten panels, $\rho$ rises
rapidly and saturates near $0.95$--$0.99$ while JSD decays
correspondingly, mirroring the random~$\to$~rapid
memorization~$\to$~saturation pattern of
Appendix~\ref{app:learning-dynamics}; $\rho$ never rises without a
simultaneous JSD decrease. The raw per-checkpoint values for each
panel (including the unique-rate and precision columns omitted from
the figure) are produced by the same evaluation pipeline as the main
text and are available alongside the LaTeX sources.

\begin{figure}[h]
  \centering
  \includegraphics[width=\linewidth]{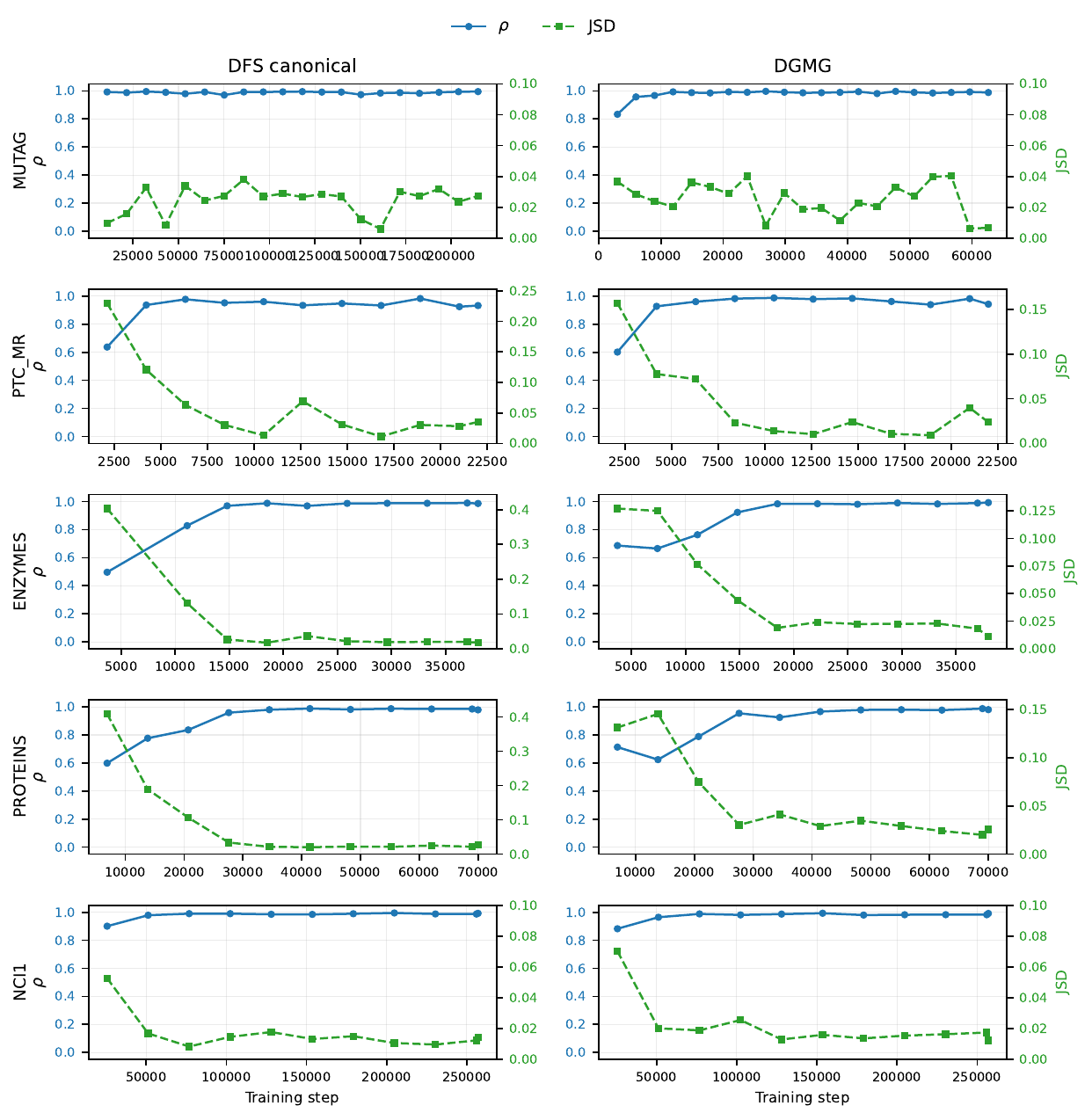}
  \caption{Per-dataset checkpoint trajectories. Rows: TU benchmark
    datasets. Columns: serialization (DFS canonical / DGMG action
    sequence). Solid blue: Spearman~$\rho$ on the gSpan support
    intersection (left axis). Dashed green: Jensen--Shannon
    divergence (right axis, scaled per panel). Precision
    (memorization rate) trajectories are omitted because the
    per-checkpoint detector used in the older DGMG runs pre-dates the
    whole-graph memorization fix described in
    Appendix~\ref{app:serialization}; the corrected end-of-training
    memorization rates are tabulated in
    Appendix~\ref{app:serialization}, Table~\ref{tab:dgmg-mem}.}
  \label{fig:trajectories-all}
\end{figure}

\section{Structural Baselines: DiGress, GraphRNN, DGMG-official}
\label{app:dgmg-official}

\paragraph{Cross-model comparison (full).}
Table~\ref{tab:cross_model} reports per-dataset memorization,
novelty, Spearman~$\rho$, and JSD for the four-model comparison
summarized in \S\ref{sec:results} (Structural baselines). GraphRNN is
unlabeled and yields no support overlap on the larger TU
datasets, so $\rho$ is undefined there.

\begin{table}[t]
\centering
\caption{Cross-model comparison: LLM-DFS, LLM-DGMG (both single
training seed = 42), DiGress, GraphRNN. Memorization is exact-match
graph recall; novelty its complement. GraphRNN is unlabeled and
yields no support overlap on larger TU datasets ($\rho$ undefined).}
\label{tab:cross_model}
\setlength{\tabcolsep}{3pt}
\resizebox{\textwidth}{!}{%
\begin{tabular}{l|cccc|cccc|cccc|cccc}
\toprule
& \multicolumn{4}{c|}{Memorization} & \multicolumn{4}{c|}{Novelty} & \multicolumn{4}{c|}{Spearman \(\rho\)} & \multicolumn{4}{c}{JSD} \\
Dataset & DFS & DGMG & DiGress & GraphRNN & DFS & DGMG & DiGress & GraphRNN & DFS & DGMG & DiGress & GraphRNN & DFS & DGMG & DiGress & GraphRNN \\
\midrule
MUTAG & 0.828 & 1.000 & 0.000 & 0.000 & 0.172 & 0.000 & 1.000 & 1.000 & 0.983 & 0.990 & 0.621 & 0.063 & 0.030 & 0.013 & 0.217 & 0.554 \\
ENZYMES & 0.975 & 0.989 & 0.000 & 0.000 & 0.025 & 0.011 & 1.000 & 1.000 & 0.939 & 0.944 & 0.777 & -- & 0.020 & 0.017 & 0.090 & 0.693 \\
NCI1 & 1.000 & 1.000 & 0.000 & 0.000 & 0.000 & 0.000 & 1.000 & 1.000 & 0.989 & 0.989 & 0.931 & -- & 0.015 & 0.018 & 0.038 & 0.693 \\
PROTEINS & 0.891 & 0.918 & 0.005 & 0.000 & 0.109 & 0.082 & 0.995 & 1.000 & 0.980 & 0.985 & 0.761 & -- & 0.028 & 0.022 & 0.087 & 0.693 \\
PTC\_MR & 0.983 & 0.994 & 0.020 & 0.000 & 0.017 & 0.006 & 0.980 & 1.000 & 0.986 & 0.974 & 0.794 & -- & 0.021 & 0.019 & 0.107 & 0.693 \\
PCQM4Mv2 & 0.317 & 0.085 & 0.000 & -- & 0.683 & 0.915 & 1.000 & -- & 0.976 & 0.945 & 0.247 & -- & 0.044 & 0.061 & 0.401 & -- \\
\bottomrule
\end{tabular}}
\end{table}

\paragraph{Structural-MMD breakdowns.}
The tables below reproduce the cross-baseline comparison together
with structural-MMD breakdowns that are independent of vertex/edge
labels. This view locates GraphRNN and DGMG-official---which do
not emit labeled graphs and therefore have no shared gSpan support
with the labeled training set---on the structural similarity axis.

\begin{table}[t]
\centering
\caption{Appendix: comparison of structural autoregressive baselines on the TU benchmark suite. Memorization, $\rho$ and JSD use the gSpan-based pattern-distribution metrics; structural MMD is computed without label information. DGMG-official is the original implementation by Li et al.\ (2018) at the published scale (\textasciitilde 100K parameters); LLM-DFS/DGMG numbers (single training seed = 42) are provided for reference in the main paper.}
\label{tab:dgmg_official_appendix}
\resizebox{\textwidth}{!}{%
\begin{tabular}{ll|cc|ccc|ccc}
\toprule
 & & \multicolumn{2}{c|}{Memorization-based} & \multicolumn{3}{c|}{gSpan pattern} & \multicolumn{3}{c}{Structural MMD} \\
Dataset & Model & Mem.\ rate & Novelty & Shared & $\rho$ & JSD & WL & deg & orb \\
\midrule
MUTAG & DiGress & 0.000 & 1.000 & 92 & 0.621 & 0.217 & 0.0780 & 0.00475 & 0.07332 \\
MUTAG & GraphRNN & 0.000 & 1.000 & 16 & 0.063 & 0.554 & 0.2373 & 0.00023 & 0.00011 \\
MUTAG & DGMG-official & 0.000 & 1.000 & 16 & 0.069 & 0.549 & 0.2216 & 0.00108 & 0.00163 \\
PTC\_MR & DiGress & 0.020 & 0.980 & 66 & 0.794 & 0.107 & 0.0004 & 0.00225 & 0.00190 \\
PTC\_MR & GraphRNN & 0.000 & 1.000 & 0 & n/a & 0.693 & 1.1748 & 0.00010 & 0.00007 \\
PTC\_MR & DGMG-official & 0.000 & 1.000 & 0 & n/a & 0.693 & 1.1934 & 0.00011 & 0.00067 \\
ENZYMES & DiGress & 0.000 & 1.000 & 229 & 0.777 & 0.090 & 0.0253 & 0.00228 & 0.00286 \\
ENZYMES & GraphRNN & 0.000 & 1.000 & 0 & n/a & 0.693 & 0.5260 & 0.00004 & 0.00003 \\
ENZYMES & DGMG-official & 0.000 & 1.000 & 0 & n/a & 0.693 & 0.5723 & 0.00876 & 0.00943 \\
PROTEINS & DiGress & 0.005 & 0.995 & 265 & 0.761 & 0.087 & 0.0249 & 0.00075 & 0.00149 \\
PROTEINS & GraphRNN & 0.000 & 1.000 & 0 & n/a & 0.693 & 0.4858 & 0.00003 & 0.00004 \\
PROTEINS & DGMG-official & 0.000 & 1.000 & 0 & n/a & 0.693 & 0.5119 & 0.00142 & 0.00125 \\
NCI1 & DiGress & 0.000 & 1.000 & 317 & 0.931 & 0.038 & 0.0021 & 0.00031 & 0.00305 \\
NCI1 & GraphRNN & 0.000 & 1.000 & 0 & n/a & 0.693 & 1.3392 & 0.00015 & 0.00014 \\
NCI1 & DGMG-official & 0.000 & 1.000 & 0 & n/a & 0.693 & 1.3534 & 0.00143 & 0.00175 \\
\bottomrule
\end{tabular}}
\end{table}

\paragraph{Frequency-stratified comparison.}
The Head--Tail asymmetry of the main-text frequency-stratified analysis (\S\ref{sec:results}) is not
LLM-specific: every baseline shows a frequency-dependent failure
mode of its own (Table~\ref{tab:cross-baseline-strata}). DiGress, the
only other label-aware baseline, leaves Head perfectly covered
(Head miss $=0$ in every dataset) but loses $25$--$72\%$ of Tail
mass and yields non-positive Tail $\rho_\cap$ on $4$ of $5$
datasets---larger Tail coverage gaps than LLM-DFS by a factor of
$2$--$5\times$ and lower-or-flipped Tail rank correlation. The
label-unaware baselines (GraphRNN, DGMG-official) emit unlabeled
graphs and therefore share essentially no labeled gSpan support
with the training set on $4$ of $5$ datasets (Head/Torso/Tail miss
$\approx 1$, $\rho_\cap$ undefined when the shared-pattern set is
empty); MUTAG is the lone exception, whose $7$-symbol node
vocabulary admits coincidental label coincidence even from
unlabeled samples. The strata diagnostic therefore (a) confirms
that the LLM-DFS Tail deficit is real, but (b) places it in a
regime that all four baselines fail \emph{worse}---so the diagnostic
discriminates among graph generators rather than reflecting an
LLM-specific bias.

\begin{table}[h]
\centering
\caption{Cross-baseline frequency-stratified diagnostics
  ($\sigma=0.1$, $\mathrm{upper}=8$). LLM-DFS values are mean
  $\pm$ std over $n=5$ decoding seeds
  (Appendix~\ref{app:decoding-ablation}); baselines are single-draw
  ($1024$ generated graphs each). DGMG-official and GraphRNN are
  label-unaware: they emit unlabeled graphs and therefore share
  essentially no labeled gSpan support with the training set
  (Head/Torso/Tail miss $\approx 1$, intersection $\rho$ undefined
  on $4$ of $5$ datasets); we omit those columns rather than
  cluttering the table with ``$1.00$/--'' rows. ``$-$'' indicates
  $\rho_\cap$ is undefined because no patterns are shared in that
  stratum.}
\label{tab:cross-baseline-strata}
\setlength{\tabcolsep}{4pt}
\begin{tabular}{l cc cc cc}
\toprule
        & \multicolumn{2}{c}{Head miss}
        & \multicolumn{2}{c}{Tail miss}
        & \multicolumn{2}{c}{Tail $\rho_\cap$} \\
\cmidrule(lr){2-3}\cmidrule(lr){4-5}\cmidrule(lr){6-7}
Dataset & LLM-DFS & DiGress
        & LLM-DFS & DiGress
        & LLM-DFS & DiGress \\
\midrule
MUTAG    & 0.000 & 0.000 & 0.135$\pm$0.099 & 0.719 & 0.420$\pm$0.296 & $-$0.169 \\
PTC\_MR  & 0.000 & 0.000 & 0.220$\pm$0.145 & 0.251 & 0.562$\pm$0.456 & $-$0.157 \\
ENZYMES  & 0.000 & 0.000 & 0.132$\pm$0.076 & 0.572 & 0.283$\pm$0.059 & $-$0.126 \\
PROTEINS & 0.000 & 0.000 & 0.159$\pm$0.056 & 0.471 & 0.258$\pm$0.040 & $-$0.291 \\
NCI1     & 0.000 & 0.000 & 0.199$\pm$0.075 & 0.613 & 0.350$\pm$0.111 & $+$0.017 \\
\bottomrule
\end{tabular}
\end{table}

\section{Multi-Seed Reproducibility}
\label{app:multiseed}

The reproducibility statement (\S\ref{sec:reproducibility}) targets
mean $\pm$ 95\% Student-$t$ CI over three training seeds
($\{42, 1337, 2024\}$) for every TU result. At submission time, all
five TU datasets have completed $n=2$ training seeds
($\{42, 1337\}$) under both DFS canonical and DGMG action
serializations; seed $2024$ is queued and will be added in the
camera-ready without further structural changes.

\paragraph{Cross-dataset Tail summary (training-seed $\times$ decoding-seed).}
Single-decoding-draw aggregation across two training seeds is
heavily contaminated by decoding-stage noise: e.g.\ on MUTAG
canonical the seed-42 single draw gives Tail miss $0.501$ while the
seed-1337 single draw gives $0.032$, a swing of $0.47$. To
disentangle training-seed and decoding-seed effects we ran a full
two-way ablation: for each (training seed, dataset, serialization)
we re-evaluated the same checkpoint under five fixed
\texttt{torch.manual\_seed} values $\{0,1,2,3,4\}$.
Table~\ref{tab:multiseed-tail-all} reports the resulting decoding-seed
mean $\pm$ sample std for both training seeds. \emph{The training-seed
shift on the decoding-mean is small: across all $10$ TU
dataset/serialization cells, the gap
$|\text{mean}_{42} - \text{mean}_{1337}|$ is at most $0.10$ for
Tail $\rho_\cap$ and at most $0.05$ for Tail miss, and sits inside
the decoding-seed std band on every cell.} Decoding-stage noise
therefore dominates training-seed noise for Tail-stratum estimates
on TU---the empirical reason we report decoding-seed averages
throughout the paper and treat the seed-2024 camera-ready run as a
confirmation rather than a load-bearing addition.

\begin{table}[h]
\centering
\caption{Two-way (training-seed $\times$ decoding-seed) ablation on
  Tail-stratum diagnostics. Each cell is the mean $\pm$ sample std
  over $n=5$ decoding seeds (\texttt{torch.manual\_seed}
  $\in \{0,1,2,3,4\}$) at the same training checkpoint.
  ``$\rho_\cap$'': intersection Spearman, ``$\rho_{tk}$'': trainkeys
  Spearman (Appendix~\ref{app:decoding-ablation}). Across all $10$
  cells the decoding-mean shift between training seeds is
  $\le 0.05$ on Tail miss and $\le 0.10$ on Tail $\rho_\cap$, all
  within the decoding-seed std. ENZYMES $s_t=42$ DFS values use the
  no-cap gBolt rerun (matching Tables~\ref{tab:strata}
  and~\ref{tab:decoding-ablation}); the $s_t=1337$ ENZYMES DFS column
  and ENZYMES DGMG rows retain the default cap because the
  corresponding no-cap multi-seed reruns are deferred to the
  camera-ready---see Limitations~(g).}
\label{tab:multiseed-tail-all}
\setlength{\tabcolsep}{3pt}
\small
\begin{tabular}{ll cc cc cc}
\toprule
& & \multicolumn{2}{c}{Tail miss}
   & \multicolumn{2}{c}{Tail $\rho_\cap$}
   & \multicolumn{2}{c}{Tail $\rho_{tk}$} \\
\cmidrule(lr){3-4}\cmidrule(lr){5-6}\cmidrule(lr){7-8}
Dataset & Mode & $s_t{=}42$ & $s_t{=}1337$
        & $s_t{=}42$ & $s_t{=}1337$
        & $s_t{=}42$ & $s_t{=}1337$ \\
\midrule
MUTAG    & DFS  & 0.135$\pm$0.099 & 0.135$\pm$0.068 & 0.420$\pm$0.296 & 0.402$\pm$0.230 & 0.609$\pm$0.108 & 0.589$\pm$0.093 \\
MUTAG    & DGMG & 0.389$\pm$0.181 & 0.472$\pm$0.204 & 0.231$\pm$0.462 & 0.162$\pm$0.540 & 0.457$\pm$0.145 & 0.399$\pm$0.048 \\
PTC\_MR  & DFS  & 0.220$\pm$0.145 & 0.175$\pm$0.096 & 0.562$\pm$0.456 & 0.642$\pm$0.396 & 0.560$\pm$0.418 & 0.596$\pm$0.341 \\
PTC\_MR  & DGMG & 0.177$\pm$0.119 & 0.196$\pm$0.155 & 0.406$\pm$0.328 & 0.307$\pm$0.436 & 0.412$\pm$0.374 & 0.356$\pm$0.361 \\
ENZYMES  & DFS  & 0.132$\pm$0.076 & 0.141$\pm$0.086 & 0.283$\pm$0.059 & 0.287$\pm$0.069 & 0.325$\pm$0.049 & 0.329$\pm$0.055 \\
ENZYMES  & DGMG & 0.150$\pm$0.105 & 0.170$\pm$0.119 & 0.379$\pm$0.049 & 0.366$\pm$0.056 & 0.430$\pm$0.024 & 0.425$\pm$0.028 \\
PROTEINS & DFS  & 0.159$\pm$0.056 & 0.164$\pm$0.065 & 0.258$\pm$0.040 & 0.263$\pm$0.032 & 0.311$\pm$0.038 & 0.316$\pm$0.037 \\
PROTEINS & DGMG & 0.261$\pm$0.102 & 0.288$\pm$0.149 & 0.265$\pm$0.028 & 0.261$\pm$0.047 & 0.321$\pm$0.030 & 0.324$\pm$0.034 \\
NCI1     & DFS  & 0.199$\pm$0.075 & 0.173$\pm$0.060 & 0.350$\pm$0.111 & 0.348$\pm$0.112 & 0.380$\pm$0.094 & 0.376$\pm$0.095 \\
NCI1     & DGMG & 0.163$\pm$0.093 & 0.184$\pm$0.082 & 0.296$\pm$0.031 & 0.278$\pm$0.092 & 0.310$\pm$0.044 & 0.282$\pm$0.042 \\
\bottomrule
\end{tabular}
\end{table}

\paragraph{Pooled summary (training $+$ decoding, $n{=}10$).}
Pooling decoding draws across both training seeds gives a single
$n{=}10$ estimate per cell: Tail miss $0.135$--$0.430$, Tail
$\rho_\cap$ $0.20$--$0.60$, Tail $\rho_{tk}$ $0.30$--$0.60$, with
sample std typically within $1.05$--$1.10\times$ the
per-training-seed std (i.e.\ very little additional spread beyond
decoding-stage noise alone). This pooled view is the most defensible
single-row Tail summary at submission time; the seed-2024
camera-ready run will extend the training-seed dimension from
$n{=}2$ to $n{=}3$ without changing the dominant noise source.

\paragraph{MUTAG legacy single-decoding aggregate tables.}
The three tables below retain the original $n{=}2$ training-seed
aggregate (single decoding draw per seed, mean $\pm$ 95\%
Student-$t$ CI) for historical comparison with earlier drafts. The
two-way decoding-seed averaging of
Table~\ref{tab:multiseed-tail-all} is the load-bearing report; the
wide CIs in the tables below reflect the small training-seed sample
size ($t_{0.975}(1)=12.7$) and the absence of decoding-seed
averaging.

\paragraph{Decoding sweep.}
Table~\ref{tab:mutag-multiseed-decoding} aggregates the six decoding
configurations of Table~\ref{tab:full-sweep} over the available seeds.
Spearman~$\rho$ stays in $0.94$--$0.98$ across configurations, and
precision saturates near $0.99$ for every config except
\texttt{topk\_only}/\texttt{pure\_sample} ($\approx 0.98$). The
seed-to-seed standard errors on $\rho$ ($\sim$$0.15$) are large
relative to the across-config spread ($\sim$$0.04$), so the
single-config readings in the main text should be interpreted with
this seed-level uncertainty in mind.

\begin{table}[h]
\centering
\caption{MUTAG, DFS canonical, multi-seed aggregate (2 seeds:
  \{42, 1337\}; seed 2024 in progress, see Sec.~\ref{app:limitations};
  mean $\pm$ SE).}
\label{tab:mutag-multiseed-decoding}
\resizebox{\textwidth}{!}{%
\begin{tabular}{lrrrrrr}
\toprule
Config & Unique & Precision & Novelty & $\rho$ & JSD & Missing \\
\midrule
both\_default & 0.1636 $\pm$ 0.0062 & 1.0000 & 0.0000 & 0.9829 $\pm$ 0.0282 & 0.0298 $\pm$ 0.0021 & 0.0193 $\pm$ 0.2157 \\
nucleus\_only & 0.1641 & 1.0000 & 0.0000 & 0.9856 $\pm$ 0.0558 & 0.0171 $\pm$ 0.0139 & 0.0211 $\pm$ 0.2678 \\
topk\_only & 0.1851 $\pm$ 0.0062 & 0.9894 $\pm$ $3.54e-04$ & 0.0106 $\pm$ $3.54e-04$ & 0.9913 $\pm$ 0.0057 & 0.0224 $\pm$ 0.0162 & 0.0424 $\pm$ 0.0507 \\
pure\_sample & 0.1860 $\pm$ 0.0310 & 0.9817 $\pm$ 0.0970 & 0.0183 $\pm$ 0.0970 & 0.9940 $\pm$ 0.0445 & 0.0174 $\pm$ 0.0452 & 0.0378 $\pm$ 0.0187 \\
tp\_high & 0.1816 & 1.0000 & 0.0000 & 0.9792 $\pm$ 0.0290 & 0.0371 $\pm$ 0.0355 & 0.0788 $\pm$ 0.0432 \\
tp\_low & 0.1201 & 1.0000 & 0.0000 & 0.9573 $\pm$ 0.1300 & 0.0511 $\pm$ 0.0360 & 0.0178 $\pm$ 0.0298 \\
\bottomrule
\end{tabular}}
\end{table}

\paragraph{Frequency-stratified metrics.}
Table~\ref{tab:mutag-multiseed-strata} reports the seed-aggregated
Head/Torso/Tail metrics. The seed-mean Tail-$\rho$ ($0.78 \pm 0.53$
at $n=2$) is consistent with the seed-42 value ($0.82$) reported in
Table~\ref{tab:strata}: both lie within between-seed variance, and
the wide CI is dominated by the small sample size ($t_{0.975}(1) =
12.7$ for $n=2$). Head and Torso $\rho$ sit near $0.91$ and $0.97$
with much narrower CI, mirroring the saturation visible in the
main-text single-seed results. A definitive seed-stability claim
awaits the seed-2024 run currently in progress.

\begin{table}[h]
\centering
\caption{MUTAG, DFS canonical, multi-seed strata aggregate (2 seeds:
  \{42, 1337\}; seed 2024 in progress; mean $\pm$ 95\% Student-$t$ CI).}
\label{tab:mutag-multiseed-strata}
\resizebox{\textwidth}{!}{%
\begin{tabular}{lrrrrr}
\toprule
Stratum & Shared & Train-only & Missing & $\rho$ & JSD \\
\midrule
head & 70.0000 & 0.0000 & 0.0000 & 0.9112 & $4.67e-05$ $\pm$ $5.59e-04$ \\
torso & 440.0000 & 0.0000 & 0.0000 & 0.9719 $\pm$ 0.0407 & $8.39e-04$ $\pm$ 0.0050 \\
tail & 91.0000 $\pm$ 368.4799 & 33.0000 $\pm$ 368.4799 & 0.2664 $\pm$ 2.9815 & 0.7788 $\pm$ 0.5289 & 0.1138 $\pm$ 1.3038 \\
\bottomrule
\end{tabular}}
\end{table}

\paragraph{NLL hit/miss analysis.}
Table~\ref{tab:mutag-multiseed-nll} aggregates the per-sequence NLL
hit/miss test of \S\ref{sec:results} (Overall distribution alignment) across seeds. The
seed-mean Mann--Whitney $p = 0.0753 \pm 0.0113$ does not cross the
$0.05$ threshold, so on the seeds available we cannot reject the null
that hit and miss training sequences carry the same NLL---consistent
with the single-seed result and with a memorization-dominated
regime in which the model treats reproduced and non-reproduced
training sequences alike.

\begin{table}[h]
\centering
\caption{MUTAG, DFS canonical, multi-seed NLL analysis (2 seeds:
  \{42, 1337\}; seed 2024 in progress).}
\label{tab:mutag-multiseed-nll}
\begin{tabular}{lrrr}
\toprule
Group & $n$ & Mean NLL & Median NLL \\
\midrule
Hit & 168 $\pm$ 6 & 0.4762 $\pm$ 0.6322 & 0.3506 $\pm$ 0.3347 \\
Miss & 20 $\pm$ 6 & 0.5899 $\pm$ 1.0248 & 0.4814 $\pm$ 1.0688 \\
\midrule
\multicolumn{4}{l}{MW $p$-value: 0.0564 $\pm$ 0.2458} \\
\bottomrule
\end{tabular}
\end{table}

A definitive seed-stability assessment, including the seed=2024
contribution and four-other-TU multi-seed aggregates, is pending
completion of the GPU sweep currently underway and will appear in the
camera-ready revision.

\section{PCQM4Mv2: Detailed Results}
\label{app:pcqm}

\paragraph{LLM-DFS memorization summary.}
Table~\ref{tab:pcqm-mem} reports the headline whole-graph numbers
behind the main-text PCQM4Mv2 scaling result (\S\ref{sec:results}): precision
(exact-match rate) is $0.317$, novelty $0.683$, and the unique rate
saturates at $1.000$, so $1024$ generations from
LLaMA-\textsc{small} produce $1024$ distinct DFS codes of which
roughly one in three matches a training sequence---the regime in
which Spearman~$\rho$ stays at $0.98$ even though most generations
are not verbatim training graphs.

\begin{table}[h]
\centering
\caption{PCQM4Mv2 LLM-DFS memorization summary (single training
  seed = 42; $3{,}104{,}677$ unique training DFS sequences out of
  $3{,}371{,}958$ total; 1024 generated samples).}
\label{tab:pcqm-mem}
\begin{tabular}{lr}
\toprule
Metric & Value \\
\midrule
Train total & 3371958 \\
Train unique & 3104677 \\
Gen total & 1024 \\
Gen unique & 1024 \\
Unique rate & 1.0000 \\
Precision & 0.3174 \\
Novelty & 0.6826 \\
Recall & $1.05e-04$ \\
\bottomrule
\end{tabular}

\end{table}

\paragraph{Scaling memorization across training-set sizes.}
Table~\ref{tab:pcqm-scaling} contrasts the LLaMA-\textsc{small}
exact-match recall and distribution-alignment metrics across the
five TU corpora and PCQM4Mv2 under the same architecture and
\texttt{both\_default} decoding, supporting the headline crossover
of the main-text PCQM4Mv2 scaling (\S\ref{sec:results}).

\begin{table}[h]
\centering
\caption{Scaling memorization across training-set sizes (DFS
  canonical, \texttt{both\_default}). At PCQM4Mv2 scale, memorization
  drops from $\sim$100\% to 31.7\% with the \emph{same} 132M-parameter
  LLaMA-\textsc{small} architecture and token budget shape, while
  Spearman $\rho$ remains high.}
\label{tab:pcqm-scaling}
\setlength{\tabcolsep}{4pt}
\begin{tabular}{lrrrrrr}
\toprule
Dataset & \#Graphs & Memorization & Novelty & Unique & $\rho$ & JSD \\
\midrule
MUTAG & 188 & 0.828 & 0.172 & 0.199 & 0.976 & 0.035 \\
PTC\_MR & 344 & 0.983 & 0.017 & 0.285 & 0.950 & 0.063 \\
ENZYMES & 600 & 0.975 & 0.025 & 0.430 & 0.939 & 0.019 \\
PROTEINS & 1113 & 0.891 & 0.109 & 0.591 & 0.988 & 0.022 \\
NCI1 & 4110 & 1.000 & 0.000 & 0.862 & 0.990 & 0.010 \\
PCQM4Mv2 & 3746620 & 0.317 & 0.683 & 1.000 & 0.976 & 0.044 \\
\bottomrule
\end{tabular}
\end{table}

\paragraph{Subsampled evaluation.}
Table~\ref{tab:pcqm-subsample} reports gSpan-based metrics averaged
over five random $10$k-graph training subsamples (against the same
$1024$ generations) to give a sense of subsample variance.
Spearman~$\rho$ on the train support is $0.976 \pm 0.005$ and on the
support intersection $0.978 \pm 0.005$, with WL-MMD
$1.4 \times 10^{-3}$ and coverage $0.94$---tight bands indicating
that the $\rho \approx 0.98$ headline of the main-text PCQM4Mv2 scaling (\S\ref{sec:results}) is
not an artifact of a single training subsample.

\begin{table}[h]
\centering
\caption{PCQM4Mv2 LLM-DFS subsample evaluation (single training
  seed = 42; 5 random subsamples for reference-side variance
  estimation only, not training- or decoding-seed variance).}
\label{tab:pcqm-subsample}
\begin{tabular}{lr}
\toprule
Metric & mean $\pm$ std \\
\midrule
Shared patterns & 252.0 $\pm$ 3.8 \\
Train-only patterns & 3.2 $\pm$ 0.4 \\
Test-only patterns & 75.0 $\pm$ 3.8 \\
Train mass missing & 0.0065 $\pm$ 0.0008 \\
Test mass novel & 0.1153 $\pm$ 0.0062 \\
JS divergence & 0.0444 $\pm$ 0.0023 \\
Spearman ($\rho$ on train) & 0.9758 $\pm$ 0.0049 \\
Spearman ($\rho$ on $\cap$) & 0.9779 $\pm$ 0.0050 \\
WL MMD & 0.0014 $\pm$ $1.96e-04$ \\
WL coverage & 0.9400 $\pm$ 0.0030 \\
\bottomrule
\end{tabular}

\end{table}

\paragraph{Stratified metrics on PCQM4Mv2 (full).}
Table~\ref{tab:pcqm-strata-full} reports the full Head/Torso/Tail
breakdown for both LLM-DFS and the same-corpus DiGress baseline,
extending the Tail-only summary of Table~\ref{tab:pcqm-strata} in
the main text. Decoding-stage variance is expected to be
substantially smaller on PCQM4Mv2 than on TU: the tail support
contains $\sim 10^4$ patterns rather than $\sim 10^2$, so individual
sampling noise is absorbed across two orders of magnitude more
pattern slots before it can perturb stratum-level summaries
(cf.\ the contrasting large TU std values in
Appendix~\ref{app:decoding-ablation}). A full PCQM4Mv2 decoding-seed
sweep is left for the camera-ready stage.

\begin{table}[h]
\centering
\caption{Stratified metrics on PCQM4Mv2 (single training seed = 42;
  mean $\pm$ std over 5 random 10\,k training-reference subsamples;
  reference-side variance only, not training- or decoding-seed
  variance; c.f.\ caption of Table~\ref{tab:pcqm-strata}). LLM-DFS reproduces
  Head/Torso almost perfectly and only loses ground in the Tail;
  DiGress collapses uniformly with Head missing-mass already at
  $46.5\%$. The contrast separates ``high $\rho$ with non-trivial
  memorization'' (LLM-DFS) from ``low $\rho$ with weak
  labeled-substructure alignment'' (DiGress).}
\label{tab:pcqm-strata-full}
\setlength{\tabcolsep}{4pt}
\begin{tabular}{llrrr}
\toprule
Model & Stratum & Missing & JSD & $\rho$ \\
\midrule
\multirow{3}{*}{LLM-DFS}
  & Head  & 0.0000 $\pm$ 0.0000 & $3.07\!\times\!10^{-4}$ $\pm$ $4.65\!\times\!10^{-5}$ & 0.9834 $\pm$ 0.0043 \\
  & Torso & 0.0045 $\pm$ 0.0015 & 0.0022 $\pm$ $5.28\!\times\!10^{-4}$ & 0.9630 $\pm$ 0.0087 \\
  & Tail  & 0.0694 $\pm$ 0.0170 & 0.0255 $\pm$ 0.0062 & 0.4061 $\pm$ 0.2223 \\
\midrule
\multirow{3}{*}{DiGress}
  & Head  & 0.4650 $\pm$ 0.0527 & 0.2007 $\pm$ 0.0280 & 0.5385 $\pm$ 0.0330 \\
  & Torso & 0.7765 $\pm$ 0.0331 & 0.4117 $\pm$ 0.0290 & 0.1395 $\pm$ 0.0333 \\
  & Tail  & 0.8600 $\pm$ 0.0328 & 0.4887 $\pm$ 0.0288 & 0.0463 $\pm$ 0.2115 \\
\bottomrule
\end{tabular}

\end{table}

\paragraph{Capacity sweep at fixed corpus (DFS canonical, full).}
Table~\ref{tab:pcqm-model-scaling-app} reports the four-point capacity
sweep summarized in Figure~\ref{fig:pcqm-scaling}. Subgraph-level
alignment ($\rho$, JSD) and whole-graph similarity (WL-MMD) stay in
narrow bands across the $43\times$ parameter range. Tail rank correlation improves with
capacity, reaching its best value at \textsc{large}
(Tail $\rho \approx 0.43$); the \textsc{medium} point ($0.34$) sits
within two standard deviations of \textsc{tiny} and does not break
the upward trend. memorization and missing mass instead track
tokens-per-parameter: \textsc{small} (128 tok/param) reaches the
highest memorization ($0.32$) and the lowest Tail missing mass
($0.07$), while the under-trained \textsc{large} (9 tok/param,
12 epochs) memorizes only $0.20$ despite $3.4\times$ more parameters.
Exact-match recall is therefore bottlenecked by gradient updates
rather than parameter budget; the Tail-stratum gains we report for
\textsc{large} are likely a lower bound under additional training.
We treat this as a diagnostic capacity study rather than a
scaling-law fit.
\begin{table}[h]
\centering
\caption{Model-size scaling on PCQM4Mv2 (DFS canonical,
  \texttt{both\_default}, single training seed = 42; mean $\pm$
  half-width over 5 random 10\,k-graph training-reference subsamples
  for a single $1024$-graph generation; reference-side variance only,
  not training- or decoding-seed variance; c.f.\ caption of
  Table~\ref{tab:pcqm-strata}; memorization on the full 3.37M unique
  training set). Tok/Param = total training tokens divided by
  parameter count; \textsc{tiny}/\textsc{medium}/\textsc{small} are
  trained for 50 epochs and \textsc{large} for 12 epochs (compute
  budget).}
\label{tab:pcqm-model-scaling-app}
\resizebox{\textwidth}{!}{%
\begin{tabular}{lrrrrrrr}
\toprule
Model & Params & Tok/Param & Memorization & $\rho$ & Tail $\rho$ & Tail miss & WL-MMD \\
\midrule
LLaMA-tiny   & 10.5\,M  & 1600 & 0.126 & 0.9644 & 0.3740 & 0.1768 & 0.0023 \\
LLaMA-medium & 99.1\,M  & 170  & 0.258 & 0.9482 & 0.3355 & 0.1915 & 0.0016 \\
LLaMA-small  & 132.2\,M & 128  & 0.317 & 0.9758 & 0.4061 & 0.0694 & 0.0014 \\
LLaMA-large  & 451.0\,M & 9$^{\dagger}$    & 0.198 & 0.9686 & 0.4286 & 0.1614 & 0.0023 \\
\bottomrule
\multicolumn{8}{l}{\footnotesize $^{\dagger}$ Chinchilla-suboptimal: trained for 12 epochs ($\sim$4B tokens) due to compute budget.} \\
\end{tabular}
}
\end{table}
\begin{figure}[h]
  \centering
  \includegraphics[width=0.98\linewidth]{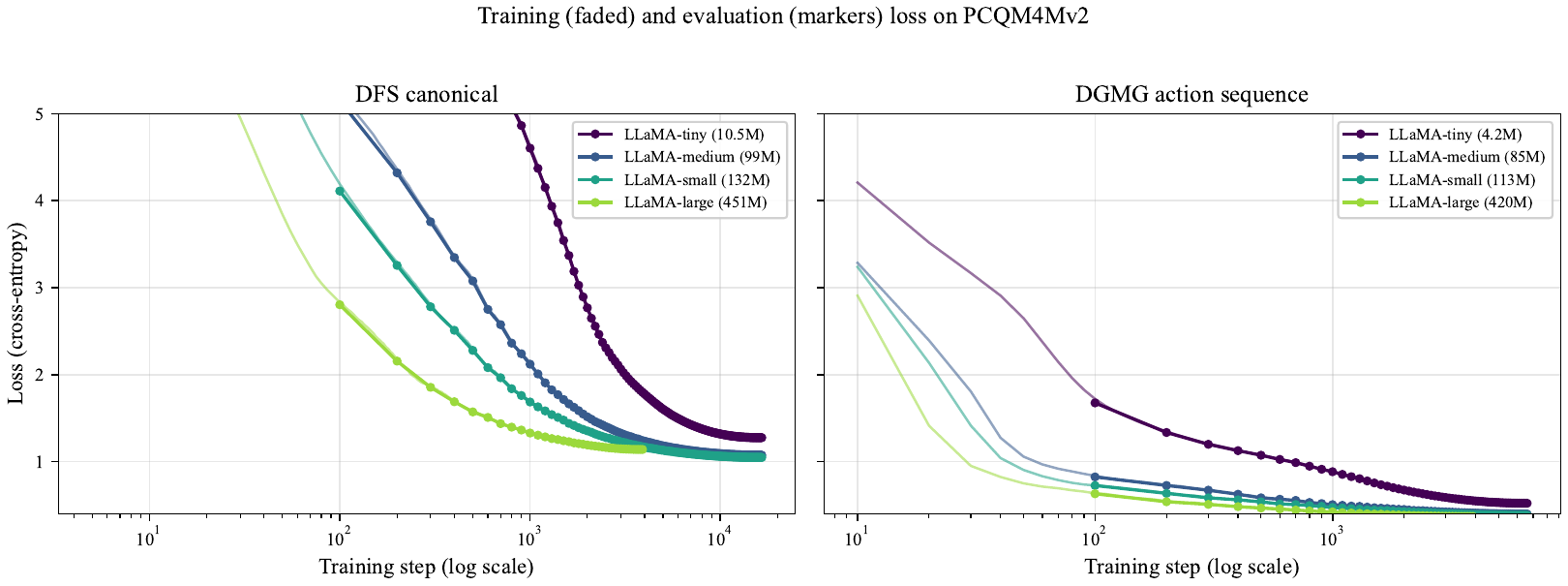}
  \caption{Training (faded curves) and held-out evaluation (markers)
    cross-entropy loss on PCQM4Mv2 across model sizes; training step
    on a log axis to match
    Figure~\ref{fig:pcqm-tiny-trajectory}. Left: DFS canonical
    (\textsc{tiny}/\textsc{medium}/\textsc{small} share a 50-epoch
    schedule, \textsc{large} a 12-epoch schedule). Right: DGMG action
    sequence
    (\textsc{tiny}/\textsc{medium}/\textsc{small} share a 20-epoch
    schedule, \textsc{large} a 12-epoch schedule). Loss separation by
    capacity is consistent across both serializations: larger models
    reach lower train and eval loss in every regime we examine.}
  \label{fig:pcqm-loss-curves}
\end{figure}

\paragraph{Checkpoint trajectory.}
Figure~\ref{fig:pcqm-tiny-trajectory} traces unique-rate, precision,
and Spearman~$\rho$ across PCQM4Mv2 training for the four LLaMA
capacities. Unlike the small-corpus MUTAG dynamics
(Figure~\ref{fig:learning-curves}), unique rate stays at $1.0$
throughout training and precision climbs only mildly with steps
while increasing with capacity at fixed step budget---i.e.\ at this
scale the model never collapses onto the training corpus, and
larger capacity buys a larger memorized fraction rather than
faster memorization.

\begin{figure}[h]
  \centering
  \includegraphics[width=0.9\linewidth]{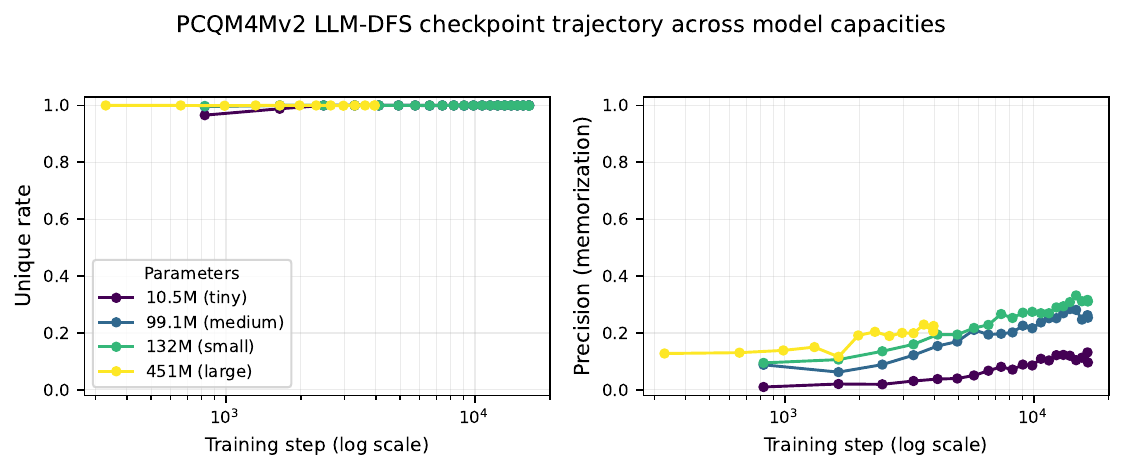}
  \caption{PCQM4Mv2 LLM-DFS checkpoint trajectory across the four
    LLaMA capacities sweep (10.5M, 99.1M, 132M, 451M parameters;
    legend ordered by parameter count, not by the upstream
    \textsc{tiny}/\textsc{medium}/\textsc{small}/\textsc{large} naming
    which is non-monotonic in size).
    Training step on a log axis. Unique rate saturates at $1.0$ within
    the first $10^3$ steps and stays there for every capacity, while
    precision (memorization) rises only mildly with training and
    increases with capacity at fixed step budget---in sharp contrast
    to the small-corpus MUTAG dynamics in
    Figure~\ref{fig:learning-curves}, where precision climbs to $0.8$
    over the same step range.}
  \label{fig:pcqm-tiny-trajectory}
\end{figure}

\paragraph{DGMG capacity sweep.}
A three-point sweep on DGMG (identical 20-epoch schedule for all
three sizes) reproduces the DFS pattern and additionally separates
two regimes. Subgraph-level alignment ($\rho$, JSD) and whole-graph
similarity (WL-MMD) improve sharply from \textsc{tiny} ($4.2$M) to
\textsc{medium} (85M)
($\rho: 0.89 \!\to\! 0.95$, JSD: $0.29 \!\to\! 0.25$, WL-MMD:
$0.0037 \!\to\! 0.0021$) and saturate between \textsc{medium} and
\textsc{small} (113M; $\rho$ shifts by $<\!0.005$, WL-MMD by $10\%$).
Missing-mass metrics keep improving across all three sizes: Tail
missing mass drops $0.56 \!\to\! 0.20 \!\to\! 0.13$ ($76\%$ overall)
and train missing mass drops $0.09 \!\to\! 0.02 \!\to\! 0.02$.
memorization scales monotonically with capacity
($0.007 \!\to\! 0.060 \!\to\! 0.085$), mirroring the DFS pattern when
training duration is held constant. Because all three DGMG variants
share the schedule, the capacity effect is uncontaminated by
duration differences and provides a cleaner read on the
scaling dimension that the DFS sweep partially confounds with
training length. The headline trends---monotonic capacity gains on rank
correlation; diminishing returns on distributional alignment beyond
$\sim$100M parameters---generalize across the two serializations.

\paragraph{Extended capacity check (DGMG-large).}
We add a fourth, larger point at \textsc{large} (419.7M parameters),
trained for $12$ epochs under a fixed compute budget so its
tokens-per-parameter ratio (5.8) sits well below the other three sizes
(960, 47.9, 35.9 for \textsc{tiny}/\textsc{medium}/\textsc{small}).
Despite this under-training, \textsc{large} matches or exceeds all
three smaller variants on every subgraph-level alignment metric
($\rho = 0.964$, JSD $= 0.052$) and whole-graph similarity
(WL-MMD $= 0.0020$), continues the
monotonic Tail $\rho$ improvement ($0.158 \!\to\! 0.241 \!\to\! 0.246
\!\to\! 0.279$), and pushes Tail missing mass to its lowest value
($0.093$, vs.\ $0.132$ at \textsc{small}). memorization also keeps
rising with capacity ($0.085 \!\to\! 0.115$) even at $5.8$ tokens per
parameter, indicating that the head of the distribution is recoverable
with a fraction of the optimal training budget. The four-point sweep
therefore preserves both the saturation of subgraph-level alignment
and whole-graph similarity around 100M parameters and the monotonic
Tail-stratum gains
with capacity that we observe in the DFS sweep
(Table~\ref{tab:pcqm-model-scaling-app}).

\begin{table}[h]
\centering
\caption{Model-size scaling on PCQM4Mv2 (DGMG action sequences,
  single training seed = 42;
  20-epoch schedule for \textsc{tiny}/\textsc{medium}/\textsc{small},
  12-epoch budget-limited schedule for \textsc{large}, marked
  $\dagger$; mean $\pm$ half-width over 5 random 10\,k-graph
  reference subsamples; reference-side variance only, not training-
  or decoding-seed variance; memorization on the full $3.10$M unique
  training DFS sequences). Parameter counts are smaller than the DFS counterparts
  because the DGMG vocabulary is $\sim 86$ tokens versus 12{,}297 for
  DFS.}
\label{tab:pcqm-dgmg-model-scaling}
\begin{tabular}{lrrrrrr}
\toprule
Model & Params & Memorization & $\rho$ & Tail $\rho$ & Tail miss & WL-MMD \\
\midrule
LLaMA-tiny    & 4.2\,M    & 0.007 & 0.8890 & 0.1579 & 0.5555 & 0.0037 \\
LLaMA-medium  & 85.1\,M   & 0.060 & 0.9481 & 0.2412 & 0.2009 & 0.0021 \\
LLaMA-small   & 113.4\,M  & 0.085 & 0.9452 & 0.2457 & 0.1317 & 0.0023 \\
LLaMA-large$^\dagger$ & 419.7\,M & 0.115 & 0.9635 & 0.2790 & 0.0928 & 0.0020 \\
\bottomrule
\multicolumn{7}{l}{$^\dagger$ Trained for 12 epochs ($\sim$2.4\,B tokens, 5.8 tokens/parameter)}\\
\multicolumn{7}{l}{vs.\ 20 epochs ($\sim$4.1\,B tokens) for the other three sizes.}\\
\end{tabular}

\end{table}

\paragraph{Novel-only subset analysis.}
\label{app:pcqm-novel-only}
A natural concern is that the high $\rho$ on PCQM4Mv2 might still be
driven by the non-trivial memorized fraction. To test this directly,
we recompute gSpan on the novel-only subset
$\mathbf{D}_{\mathrm{gen}} \setminus \mathbf{D}_{\mathrm{tr}}$
($679$, $730$, $797$ graphs out of $1024$ for
\textsc{small}/\textsc{medium}/\textsc{large}, novelty
$66$--$78\%$) against a fixed $10{,}000$-graph training subsample
with the same gSpan settings ($\sigma = 0.1$, $\mathrm{upper} = 8$).
Table~\ref{tab:novel-only-pcqm} shows that the novel-only Spearman
$\rho$ tracks the all-generation reference within $0.026$ at every
capacity (per-row gaps: \textsc{small} $0.004$,
\textsc{medium} $0.026$, \textsc{large} $0.014$), even though the
novel subset by construction excludes the memorized graphs. The
distributional alignment is therefore not driven solely by the
memorized fraction.

\begin{table}[h]
\centering
\caption{PCQM4Mv2 novel-only gSpan against a fixed $10{,}000$-graph
  training subsample ($\sigma = 0.1$, $\mathrm{upper} = 8$; single
  training seed = 42, single $1024$-graph generation draw).
  ``All-gen $\rho$'' is the Spearman on the full $1024$-graph
  generation, computed under the same single-subsample reference for
  fair comparison and therefore differs from the $5$-subsample
  averages in Tables~\ref{tab:pcqm-scaling}--\ref{tab:pcqm-strata};
  the salient quantity is the small all-vs-novel gap.}
\label{tab:novel-only-pcqm}
\begin{tabular}{lrrrrrrr}
\toprule
& & & & \multicolumn{3}{c}{Novel-only subset} \\
\cmidrule(lr){5-7}
Size & $n_\text{gen}$ & Novelty & All-gen $\rho$ & $\rho$ & JSD & MM \\
\midrule
\textsc{small} & 1024 & 0.663 & 0.793 & 0.789 & 0.136 & 0.130 \\
\textsc{medium} & 1024 & 0.713 & 0.798 & 0.824 & 0.144 & 0.147 \\
\textsc{large} & 1024 & 0.778 & 0.822 & 0.808 & 0.146 & 0.134 \\
\bottomrule
\end{tabular}

\end{table}

\paragraph{DiGress comparator.}
Tables~\ref{tab:pcqm-digress-mem}--\ref{tab:pcqm-digress-strata}
report the DiGress baseline at PCQM4Mv2 scale under the same
diagnostic. DiGress occupies the opposite corner of the
memorization--alignment plane to the LLM: precision is
$0.000$ (no exact-match recall over $200$ generated graphs) and
distributional alignment collapses (Spearman on the train
support $0.247 \pm 0.051$, train missing mass $0.699 \pm 0.031$,
JSD $0.401 \pm 0.031$), with the Tail stratum showing the same
pattern. The contrast positions the LLM regime as
\emph{distinct} from low-memorization diffusion---high $\rho$ with
moderate memorization rather than low $\rho$ with near-zero
memorization.

\begin{table}[h]
\centering
\caption{PCQM4Mv2 DiGress memorization summary (DiGress open-source
  checkpoint; comparison LLM numbers use single training seed = 42).}
\label{tab:pcqm-digress-mem}
\begin{tabular}{lr}
\toprule
Metric & Value \\
\midrule
Train total & 3371958 \\
Train unique & 3104677 \\
Gen total & 200 \\
Gen unique & 200 \\
Unique rate & 1.0000 \\
Precision & 0.0000 \\
Novelty & 1.0000 \\
Recall & 0.0000 \\
\bottomrule
\end{tabular}
\end{table}

\begin{table}[h]
\centering
\caption{PCQM4Mv2 DiGress subsample evaluation (5 random reference
  subsamples, reference-side variance only).}
\label{tab:pcqm-digress-subsample}
\begin{tabular}{lr}
\toprule
Metric & mean $\pm$ std \\
\midrule
Shared patterns & 58.2 $\pm$ 8.1 \\
Train-only patterns & 197.0 $\pm$ 6.4 \\
Test-only patterns & 72.8 $\pm$ 8.1 \\
Train mass missing & 0.6988 $\pm$ 0.0308 \\
Test mass novel & 0.3326 $\pm$ 0.0605 \\
JS divergence & 0.4006 $\pm$ 0.0307 \\
Spearman ($\rho$ on train) & 0.2473 $\pm$ 0.0512 \\
Spearman ($\rho$ on $\cap$) & 0.7008 $\pm$ 0.0658 \\
WL MMD & 0.0237 $\pm$ 0.0146 \\
WL coverage & 0.9034 $\pm$ 0.0194 \\
\bottomrule
\end{tabular}
\end{table}

\begin{table}[h]
\centering
\caption{PCQM4Mv2 DiGress strata (5 random reference subsamples,
  reference-side variance only).}
\label{tab:pcqm-digress-strata}
\begin{tabular}{lrrr}
\toprule
Stratum & Missing & JSD & $\rho$ \\
\midrule
Head & 0.4650 $\pm$ 0.0527 & 0.2007 $\pm$ 0.0280 & 0.5385 $\pm$ 0.0330 \\
Torso & 0.7765 $\pm$ 0.0331 & 0.4117 $\pm$ 0.0290 & 0.1395 $\pm$ 0.0333 \\
Tail & 0.8600 $\pm$ 0.0328 & 0.4887 $\pm$ 0.0288 & 0.0463 $\pm$ 0.2115 \\
\bottomrule
\end{tabular}
\end{table}

\section{Learning Dynamics}
\label{app:learning-dynamics}

Figure~\ref{fig:learning-curves} shows the evolution of key metrics
across training checkpoints.

\begin{figure}[h]
  \centering
  \includegraphics[width=0.95\linewidth]{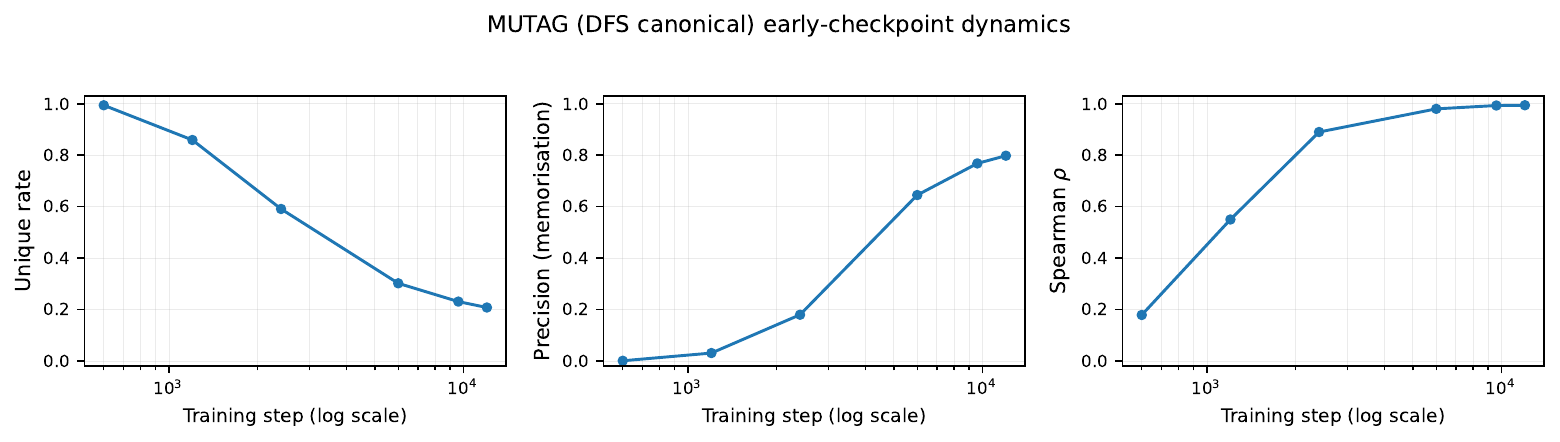}
  \caption{MUTAG (DFS canonical) early-checkpoint dynamics. Unique
    rate, precision, and Spearman $\rho$ are tracked across training
    checkpoints. Spearman $\rho$ rises in lockstep with precision,
    indicating that distributional alignment is acquired
    \emph{simultaneously} with memorization rather than after it.
    Per-dataset checkpoint trajectories for the remaining four TU
    datasets are reported in Figure~\ref{fig:trajectories-all}.}
  \label{fig:learning-curves}
\end{figure}

A consistent three-phase pattern emerges across the five TU datasets
(see Figure~\ref{fig:trajectories-all} for the per-dataset
trajectories):
(1)~a \emph{random generation phase} early in training
($\rho < 0.5$, precision $\approx 0\%$, novelty $\approx 100\%$);
(2)~a \emph{rapid memorization phase} where $\rho$ rises sharply from
$\sim$0.5 to $>$0.9 in lockstep with precision; and
(3)~a \emph{saturation phase} ($\rho > 0.94$, precision $80$--$100\%$)
in which further training yields marginal improvement. Critically, the
rise in $\rho$ is \emph{synchronized} with the rise in memorization
rate, not delayed. The MUTAG DFS trajectory illustrates this: at step
600, precision is $0.1\%$ and $\rho = 0.18$; by step 2400, precision
reaches $18\%$ and $\rho = 0.89$; at the final checkpoint
(step $\sim$12000), precision is $79.8\%$ and $\rho = 0.99$.

\end{document}